\newif\ifarxiv 
\newif\ifreview 
\newif\iffinal 
\journal{Artificial Intelligence}
\newtheorem{theorem}             {Theorem}
\newtheorem{lemma}      [theorem]{Lemma}
\newtheorem{definition} [theorem]{Definition}
\newcommand{\Natural}{\mathbb{N}}
\newcommand{\mLOTZ}{\ensuremath{d}\text{-}\textsc{LOTZ}\xspace}
\newcommand{\mOMM}{\ensuremath{d}\text{-}\textsc{OMM}\xspace}
\newcommand{\mCOCZ}{\ensuremath{d}\text{-}\textsc{COCZ}\xspace}
\newcommand{\mOJZJ}{\ensuremath{d}\text{-}\textsc{OJZJ}\xspace}
\newcommand{\mRRMO}{\ensuremath{d}\text{-}$\textsc{RR}_{\mathrm{MO}}$\xspace}
\newcommand{\LZ}{\textsc{LZ}\xspace}                      
\newcommand{\TZ}{\textsc{TZ}\xspace}
\newcommand{\LO}{\textsc{LO}\xspace}                      
\newcommand{\TOs}{\textsc{TO}\xspace}                     
\newcommand{\ones}[1]{|#1|_1}                             
\newcommand{\COCZ}{\textsc{COCZ}\xspace}                           
\newcommand{\OMM}{\textsc{OMM}\xspace}                             
\newcommand{\LOTZ}{\textsc{LOTZ}\xspace}                           
\newcommand{\OJZJ}{\textsc{OJZJ}\xspace}                           
\newcommand{\OJZJfull}{\textsc{OneJumpZeroJump}\xspace}            
\newcommand{\expect}[1]{\mathrm{E}\left[#1\right]}        
\newcommand{\RRRMO}{\ensuremath{\textsc{RR}_{\mathrm{MO}}}\xspace}  
\newcommand{\rp}{\mathrm{rp}}
\newcommand{\refer}{\mathcal{R}_p}
\newcommand{\nsga}{NSGA\nobreakdash-II\xspace}
\newcommand{\nsgaIII}{NSGA\nobreakdash-III\xspace}
\newcommand{\vecone}{\vec{1}}
\newcommand{\zeros}[1]{|#1|_0}  
\newcommand{\todo}[1]{}
\newcommand{\newedit}[1]{\textcolor{black}{#1}} 
\newcommand{\todo}[1]{\textcolor{red}{[TODO: #1]}}
\newcommand{\newedit}[1]{\textcolor{blue}{#1}}
\newcommand*\linenomathpatch[1]{
  \cspreto{#1}{\linenomath}%
  \cspreto{#1*}{\linenomath}%
  \csappto{end#1}{\endlinenomath}%
  \csappto{end#1*}{\endlinenomath}%
}
\title{Many-Objective Problems Where Crossover is Provably Essential}
\author{Andre Opris\\
  University of Passau\\
  Passau, Germany
  }
\begin{document}

\ifarxiv
\maketitle
\else
\begin{frontmatter}

\title{Runtime Analyses of NSGA-III on Many-Objective Problems: Provable Exponential Speedup via Stochastic Population Update}

\author[1]{Andre~Opris}\ead{andre.opris@uni-passau.de}
\cortext[cor1]{Corresponding author}
\affiliation[1]{organization={Chair of Algorithms for Intelligent Systems, University of Passau},
	city={Passau},
	country={Germany}}
\fi

\begin{abstract}
NSGA-III is a prominent algorithm in evolutionary many-objective optimization. It is particularly well suited for optimizing problems with more than three objectives, distinguishing it from the classical NSGA-II. However, theoretical understanding of when and why NSGA-III performs well is still at an early stage. In this paper, we contribute to closing this gap by conducting rigorous runtime analyses on the classical many-objective benchmark problems $d$-\textsc{LeadingOnesTrailingZeros} ($d$-LOTZ), $d$-\textsc{CountingOnesCountingZeros} ($d$-COCZ), $d$-\textsc{OneMinMax} ($d$-OMM), and $d$-\textsc{OneJumpZeroJump} ($d$-OJZJ) for arbitrary numbers of objectives $d$. In particular, we improve upon previous results from~\citep{OprisNSGAIII, DoerrNearTight} when the population size is asymptotically larger than the size of the Pareto front. Notably, in the bi-objective case, the derived upper runtime bounds are asymptotically tighter than those known for NSGA-II. For the problems $2$-OMM and $2$-OJZJ, NSGA-III even outperforms NSGA-II in terms of expected runtime for suitable population sizes $\mu$. Further, we show that a stochastic population update mechanism proposed in~\citep{UpBian} provably yields an exponential speedup in the expected runtime on many-objective multimodal problems such as \mOJZJ, as well as on the function \mRRMO introduced in~\citep{OPRIS2026}, for certain parameter regimes. To complement our analysis, we also establish tight runtime bounds for NSGA-III on $2$-\textsc{OJZJ} and $4$-\textsc{OJZJ}. In particular, the result for $4$-OJZJ provides, to the best of our knowledge, the first lower bound for NSGA-III on a classical benchmark problem with more than two objectives. Deriving these bounds requires a substantially deeper analysis of the population dynamics of NSGA-III than has been achieved in previous work.
\end{abstract}

\ifarxiv
\textbf{Keywords: evolutionary computation, runtime analysis, multi-objective optimization, population dynamics, stochastic population update}
\else
\begin{keyword}
evolutionary computation\sep
runtime analysis\sep
multi-objective optimization\sep
population dynamics
\end{keyword}

\end{frontmatter}

\ifreview
\linenumbers
\fi

\fi

\section{Introduction}
Evolutionary multi-objective algorithms (EMOAs) use principles of nature to optimize functions with multiple conflicting objectives by finding a Pareto optimal set~\citep{TutEmMOEA2018}. These have been commonly applied to a wide range of optimization problems in practice~\citep{Swarm2023,Keller2017,Gunantara2018} which also include neural networks~\citep{LIU2020}, bioinformatics~\citep{Handl2008}, engineering~\citep{MULTIENGINEERING} or various fields of artificial intelligence~\citep{LUUKKONEN2023102537,ZHANG2021447,MONTEIROEURO2023}. As it is typical for real world problems~\citep{Stewart2008}, such problems often involve four or more objectives. Hence, the study of EMOAs on many-objective problems has quickly gained huge importance in many research fields. However, when the number of objectives increases, the Pareto front grows exponentially in the number of objectives, and hence, problems typically become more challenging. Additionally, it becomes more difficult to identify dependencies between single objectives. \nsga~\citep{Deb2002}, the most prominent EMOA ($\sim$60,000 citations), is able to optimize bi-objective problems efficiently (see~\citep{ZhengLuiDoerrAAAI22} for a first rigorous analysis and for example \citep{DANG2024104098,Qu2022PPSN} for further rigorous ones or~\citep{Deb2002} for empirical results), but loses performance if the number of objectives grows (see~\citep{Zheng2023Inefficiency,InefficiencyLOTZ} for rigorously proven results, where there may be a huge difference already between two and three objectives, or~\citep{NSGAIIINEFF2003,NSGAIIINEFF2007} for empirical studies). This behavior comes from the \emph{crowding distance}, the tie breaker in \nsga, which is based on sorting search points in each objective. In case of two objectives, a sorting of non-dominated individuals with respect to the first objective induces a sorting with respect to the second one (in reverse order) and hence the crowding distance is a good measure for the closeness of an individual to its neighbors in the objective space. However, for problems with three or more objectives, such a correlation between different objectives does not necessarily exist and hence individuals may have crowding distance zero even if they are not close to others in the objective space.  
To overcome this problem, the \nsgaIII algorithm was designed in~\citep{DebJain2014} which uses a set of predefined reference points instead of the crowding distance. It can optimize a broad class of different benchmarks with at least four objectives efficiently \citep{NSGAIIIpractice2022,GU2022117738}, demonstrating its success in practice ($\sim$6,000 citations). However, theoretical understanding of its success is still in its early development and lags far behind its practical impact. To the best of our knowledge, only a few papers provide rigorous runtime analyses of this algorithm (see~\citep{WiethegerD23,DoerrNearTight,OprisNSGAIII,OPRIS2026,OprisMultimodal,Opris26PopDyn} for main contributions and breakthroughts), where the works~\citep{OprisMultimodal,Opris26PopDyn} demonstrate how the selection procedure of \nsgaIII can lead to improved runtime guarantees for large population sizes, asymptotically exceeding the size of the Pareto front. However, their analyses are restricted to settings in which all search points in the population are already Pareto optimal. To the best of our knowledge, there are no results demonstrating how this mechanism can yield improved runtime guarantees for large population sizes on problems where the Pareto front must first be identified. This is despite empirical evidence showing that \nsgaIII distributes solutions very effectively across the search space~\citep{CHAUDHARI20221509}. In this paper, we investigate such problems and demonstrate that this selection mechanism makes \nsgaIII quite robust with respect to the choice of the population size. This constitutes a notable advantage, as it reduces the amount of problem-specific knowledge required to select an appropriate population size.

This robustness can be beneficial from both a theoretical and a practical perspective. First, population size is a critical parameter that can significantly influence convergence and runtime, as shown in theoretical works on simple evolutionary algorithms, also from a multi-objective point of view~\citep{6747993,DoerrN20}. Such robustness indicates that the algorithm’s dynamics are stable and reliable across diverse problem instances, allowing theoretical guarantees to hold under broader parameter regimes. Finally, population-size insensitivity enables flexible scaling with available computational resources, improving both efficiency and practical applicability in many-objective optimization where the Pareto front is typically unknown~\citep{4492360}. This property can be highly valuable in practical scenarios, for example in optimizing expensive engineering simulations where each evaluation is costly~\citep{ALBERTIN2023113433}, tuning hyperparameters in machine learning models with evaluation times that grow with dataset size~\citep{10.1145/3610536}, and solving industrial scheduling or design problems where the size of the Pareto front cannot be predicted in advance~\citep{ZHU2023119707}. In such cases, the ability to select a population size based on available resources, rather than delicate performance tuning, greatly simplifies algorithm deployment and reduces computational waste. 

Notably, this robustness also applies to functions with local optima. For this class of problems in particular, it is crucial to understand the mechanisms of \nsgaIII and its variants in order to gain insights into when and why the algorithm performs well, as many-objective problems with local optima frequently arise in real-world applications~\citep{NSGAIIILocalOptima}. An example of such an improved variant is \nsgaIII with stochastic population update, which has been successfully applied in many-objective settings~\citep{OprisMultimodal}. This mechanism has also been successfully incorporated into other EMOAs, such as SMS-EMOA~\citep{Zheng_Doerr_2024} and NSGA-II~\citep{UpBian}. Instead of forming the next generation solely by selecting the first ranked solutions resulting from non dominated sorting, that is, in a greedy, deterministic manner, some individuals are also selected uniformly at random independently of their fitness. This update scheme may help preserve diversity, prevents premature convergence to local optima, and enhance the exploration of Pareto optimal solutions. As demonstrated on two example problems below, it may significantly facilitate crossing fitness valleys. We anticipate that this or similar approaches could be applied to multi-objective traveling salesperson problems or engineering design tasks. However, their successful application will require a thorough understanding of the population dynamics of \nsgaIII.

\textbf{Our contribution}: This article significantly extends the two conference papers~\citep{OprisNSGAIII,OprisMultimodal} and provides runtime analyses of \nsgaIII for any number $d$ of objectives on the classical benchmark problems $d$-\textsc{OneMinMax} (\mOMM for short), $d$-\textsc{LeadingOnesTrailingZeros} (\mLOTZ for short), and $d$-\textsc{CountingOnesCountingZeros} (\mCOCZ for short), and finally on $d$-\textsc{OneJumpZeroJump} (\mOJZJ for short) with and without stochastic population update. All these results improve the corresponding results from~\citep{DoerrNearTight} for certain parameter regimes of the population size asymptotically larger than the size of the Pareto front. In Table~\ref{tab:overview-runtime-results} $k$ denotes the gap size, a problem specific parameter for \mOJZJ. We also show how stochastic population update may accelerate the opimization process of \nsgaIII on a $d$-objective version of the RealRoyalRoad function (\mRRMO for short) for any number of objectives $d$, which is a further example for a many-objective multimodal problem next to \mOJZJ. This function builds on its single-objective version introduced in~\citep{Jansen2005c} and its bi-objective extension in~\citep{DANG2024104098}. It was originally designed to demonstrate that crossover can drastically accelerate the optimization process. Finally, we provide also tight runtime bounds of \nsgaIII on \mOJZJ for $d \in \{2,4\}$ to complement our analysis. The two conference papers~\citet{OprisNSGAIII,OprisMultimodal} restrict themselves to a constant number of objectives. The first analyzes the classic benchmark problems \mLOTZ, \mOMM and \mCOCZ, but does not take any results about the population dynamics of \nsgaIII into account. Although the second does so on \mOJZJ, these general results were only restricted to the case where the population consists only of Pareto optimal solutions, and stochastic population update is turned off. Further, it does not analyze the benefits of stochastic population update for \mRRMO, and only provides a lower runtime bound for \nsgaIII on \mOJZJ for the case $d=2$. All our main contributions are detailed as follows.

Firstly, to be able to derive these improved runtime bounds, we have to investigate the population dynamics of \nsgaIII more carefully than previous papers did. To this end, we use the \emph{cover number} $c_t(v)$ of an objective vector $v$, which is the number of individuals in the current population $P_t$ with fitness vector $v$. First, we prove a very general result applicable to any fitness function and to both scenarios with and without stochastic population update, showing that the cover number of a fitness vector cannot decrease as long as $c_t(v) \le \lfloor \mu / ((1+a)|S_d|) \rfloor$, where $S_d$ is a maximum set of mutually incomparable solutions, and $v$ is non-dominated. Further, $a \in \{0,1\}$ is a parameter that takes the value $1$ if stochastic population update is enabled, and zero otherwise. This means that \nsgaIII maintains a distributed set of solutions, as long as no individual is generated that dominates another one. As we will see, this behavior may support hill-climbing. Consequently, if the entire population is on the Pareto front and remains there in the next generation, any vector covered by $\ell \leq \lfloor{\mu/((1-a)|S_d|)}\rfloor$ solutions will be covered by $\ell$ solutions in the next generation. For a large class of problems, we can additionally bound the cover number from above if stochastic population update is disabled. In particular, it can be shown that with probability $1-e^{-\Omega(n)}$, after $O(n)$ generations in which all individuals in $P_t$ are Pareto optimal, the cover number of every Pareto optimal vector is at most $2^{d/2+1} \mu/|S_d|$ if $\mu \in 2^{O(n)}$. This also holds in all future iterations as long as all individuals are Pareto optimal. Hence, all solutions are spread out quite evenly on a large fraction of the Pareto front. For example, if $\mu = \Theta(|S_d|)$, the cover number of every $v$ is then at most $2^{d/2}$ which is just a constant if $d= O(1)$. Such insights are very useful for rigorously proving lower bounds, as we will do for \nsgaIII on \mOJZJ for $d \in \{2,4\}$ when stochastic population update is turned off. 

\begin{table}[t]
\begin{center}
\caption{Overview of all runtime bounds for the \nsgaIII with standard bit mutation on all discussed benchmark problems in terms of generations, where $d$ denotes the number of objectives, $n$ the problem size, and $\mu$ the population size. For \nsgaIII, we always use $\varepsilon_{\text{nad}} \geq f_{\max}$, and a set $\refer = \left\{\left(\frac{a_1}{p}, \ldots ,\frac{a_d}{p} \right) 
    \text{ } \Big| \text{ }  
    (a_1,\dots,a_d) \in \mathbb{N}_0^d, 
    \sum_{i=1}^d a_i = p
\right\}$ of reference points for $p \in \mathbb{N}$ as defined below with $p \geq 2d^{3/2}f_{\max}$, where $f_{\max}$ denotes the maximum value in each objective of the underlying $d$-objective function $f$. Compare with the preliminaries section below for explanations regarding $\varepsilon_{\text{nad}}$. We also require that $|S_d| \leq \mu$, where $|S_d|$ is a maximum number of mutually incomparable solutions, when stochastic population update is turned off, and otherwise, $2|S_d| \leq \mu$. In the column "$|S_d|$" the expression $"\leq b"$ means that the maximum number of pairwise incomparable solutions is bounded by the number $b$. In the column "update", "no" means that stochastic population update is turned off, and "yes" means that it is turned on. The column "known bound" always refers to scenarios where stochastic population update is turned off and presents known results for $d=O(1)$ for simplicity. We also require $\mu \leq \min\{n^{k-d/2}, 2^{O(n)}\}$ in order to obtain tight runtime bounds for \nsgaIII on \mOJZJ of $\Theta(n^{k+d}/\mu)$ for $d \in \{2,4\}$. For $d=4$ the case $k \in \{2,3\}$ is excluded.}
\label{tab:overview-runtime-results}
\begin{tabular}{llccll}
   problem & \;\;\;\;$|S_d|$ & $d$ & update & our established bound & known bound\\
   \toprule
   \mLOTZ & $\leq (\frac{2n}{d}+1)^{d-1}$ & any & yes/no & $O(n \ln(\min\{\frac{\mu}{|S_d|},n\})+ \frac{|S_d|n^2}{\mu})$ & $O(n^2)$ \\ 
   \mOMM & \;\;$(\frac{2n}{d}+1)^{d/2}$ & $\omega(\frac{\sqrt{n}}{\ln(n)})$ & yes/no & $O(n\ln(\min\{\frac{\mu}{|S_d|},n\}) + \frac{d|S_d| n \ln(n)}{\mu})$ & \\
   \mOMM & \;\;$(\frac{2n}{d}+1)^{d/2}$ & $O(\frac{\sqrt{n}}{\ln(n)})$ & yes/no & $O(n + \frac{d|S_d| n \ln(n)}{\mu})$ & $O(n\ln(n))$ \\
   \mCOCZ & \;\;$(\frac{n}{d}+1)^{d/2}$ & $\omega(\frac{\sqrt{n}}{\ln(n)})$ & yes/no & $O(n\ln(\min\{\frac{\mu}{|S_d|},n\}) + \frac{d|S_d| n \ln(n)}{\mu})$ & \\
   \mCOCZ & \;\;$(\frac{n}{d}+1)^{d/2}$ & $O(\frac{\sqrt{n}}{\ln(n)})$ & yes/no & $O(n + \frac{d|S_d| n \ln(n)}{\mu})$ & $O(n\ln(n))$ \\
   \noalign{\vskip 4pt}
   \mOJZJ & $\leq (\frac{2n}{d}+1)^{d-1}$ & $d \leq n/6$ & no & \makecell[l]{$O(n\ln(\min\{\frac{\mu}{|S_d|},d\}) + \frac{d^2 |S_d| n \ln(n)}{\mu}$ \\ $+ \frac{d n^k |S_d|}{\mu} + d\ln(\min\{\frac{\mu}{|S_d|},n^k\}))$} & $O(n^k)$ \\
   \noalign{\vskip 4pt}
   \mOJZJ & $\leq (\frac{2n}{d}+1)^{d-1}$ & $d \leq n/6$ & yes & \makecell[l]{$O(n \ln(\min\{\frac{\mu}{|S_d|},d\}) + \frac{d^2 |S_d| n \ln(n)}{\mu}$ \\ $+ \frac{d(12en)^k}{k^{k-1}})$} & - \\
   \noalign{\vskip 4pt}
   \mOJZJ & $\leq (\frac{2n}{d}+1)^{d-1}$ & $\in \{2,4\}$ & no & $\Theta(n^{k+d}/\mu)$ & $O(n^k)$ \\
   \mRRMO & $\leq (\frac{4n}{5d}+1)^{d-1}$ & any & no & - & $n^{\Omega(n)}$ \\
   \mRRMO & $\leq (\frac{4n}{5d}+1)^{d-1}$ & any & yes & $O(n^3 + \frac{n (12n)^{2n/(5d)}}{(2n/(5d))!})$ & - \\
   \bottomrule
\end{tabular}
\end{center}
\end{table}

Secondly, we establish expected runtime bounds of \nsgaIII on several benchmark problems in terms of generations which are all depicted in Table~\ref{tab:overview-runtime-results}. The improved runtime bounds are all established in this paper, except for \mRRMO without stochastic population update, since the bound is already established in~\citep{OPRIS2026}. 
In case of \mLOTZ for instance, if $\mu \neq |S_d|$, the improved upper bound is by a factor of $\min\{\frac{\mu}{|S_d|},\frac{n}{\ln(\min\{\frac{\mu}{|S_d|},n\}})\}$ smaller than the known bounds from~\citep{OprisNSGAIII,DoerrNearTight} if the number of objectives $d$ is constant. The former is $\omega(1)$ for $\mu \in \omega(|S_d|) \cap 2^{o(n)}$. The reason for this improvement is the additional consideration of the population dynamics mentioned in the first point above. We also obtain similar improvements for other classical benchmark functions such as \mOMM and \mCOCZ, although the improvements there are not that large, particularly at most by a factor of $O(\ln(n))$. However, in case of \mOMM and \mCOCZ, if the number of objectives is small, we can show that the factor of improvement is always $\Omega(\ln(n))$ for $\mu = \Omega(n \ln(n) |S_d|)$, which does not depend on $\mu$ for $\mu = \Omega(d \ln(n) |S_d|)$. For \mOJZJ, the factor of improvement is $\Omega(\min\{\frac{n^{k-1}}{\ln(\min\{\mu/|S_d|,d\})},\frac{\mu n^{k-1}}{ |S_d| \ln(n)},\frac{\mu}{|S_d|}\})$ for $\mu > |S_d|$ and constant $d$ which is $\omega(1)$ for $\mu = \omega(|S_d|)$. This is an even larger parameter regime for $\mu = \omega(|S_d| \ln(n))$ compared to \mLOTZ. The reason is, that the runtime is dominated by the time to cross the fitness valleys via mutation, and accumulating many individuals in the local optima helps to escape from it more efficiently, since this increases the chance of choosing such an individual during parent selection.  Remarkably, for $d=2$ and $\mu \in \omega(n)$, our bound is $o(n^k\mu)$, and hence, \nsgaIII outperforms \nsga if also $\mu \in o(n^2/k^2)$ (compare with~\citep{DoerrQu2023a} for the tight runtime bound of $\Theta(\mu n^k)$ of \nsga on $2$-\OJZJ if $n+1 \leq \mu \in o(n^2/k^2)$). On $2$-\OMM, we also observe that \nsgaIII outperforms NSGA-II by a factor of $\min\{1/\ln(n), \mu/n\}$ under the assumption $4(n+1) \le \mu \le o(n^b)(n+1)$ for $b < 1$ (see~\citep{DoerrQu2023a} for a lower bound of $\Omega(n \ln n)$ generations for NSGA-II to optimize $2$-\OMM in this setting). In this context, we also establish some tight runtime bounds of \nsgaIII on \mOJZJ in case of $d \in \{2,4\}$ and population sizes $|S_d| \leq \mu \in O(n^{k-1})$ for $d=2$ and $|S_d| \leq \mu \in O(n^{k-2})$ for $d=4$, where also $k \geq 4$ in the latter. Results for $k<4$ in case of $d=4$ or alternative results for $d>4$ would require a much deeper understanding of the population dynamics of \nsgaIII, including the behavior of its solutions below the Pareto front of \mOJZJ, which lies beyond the scope of the present work. We refer to Section~6 for the details. This is another novel approach, since, as far as we know, the only rigorously proven lower runtime bounds in the many-objective case beyond the bi-objective one are provided in~\citep{Opris2025} on \mRRMO, which is quite different to \mOJZJ, and in~\citep{Opris2025PAES} on \mLOTZ for an EMOA called PAES-25, which has a much different working principle compared to \nsgaIII. It stores best-so-far solutions in an archive and always mutates a current solution instead of choosing a solution from a population uniformly at random.

Thirdly, we investigate \nsgaIII with stochastic population update where the next generation is formed not only by selecting the first ranked solutions resulting from non-dominated sorting (i.e. in a greedy, deterministic manner), but also by including some individuals chosen uniformly at random~\citep{UpBian}. This allows low-ranked but promising solutions to survive with a certain probability, which can lead to an exponential speedup in runtime: Using this variant with $\mu=|S_d|$ and $k>2$, \nsgaIII can optimize \mOJZJ in $O(k(bn/k)^k)$ generations, where $b>1$ is a suitable constant. This extends the results from~\citep{UpBian} for NSGA-II with $d=2$ to \nsgaIII and many objectives (see also~\citep{Zheng_Doerr_2024} for similar results regarding the SMS-EMOA). This is by at least a factor of $\Omega((k/b)^{k-1}n^{d/2}/\mu)$ smaller which is exponentially large if $k$ is linear in $n$ and $\mu=\Theta(n^{d/2})$. We also obtain a similar speedup on the \mRRMO function via stochastic population update. To achieve all these results, we also have to adapt the arguments from~\citep{OprisNSGAIII} about the protection of good solutions also to the case when stochastic population update is considered. For the sake of completeness, we allow stochastic population update when analyzing the standard benchmark problems, but we do not examine the resulting dynamics. Instead, we consider only those solutions that survive through non-dominated sorting and the reference point selection procedure.  

\textbf{Related Work}: There are several theoretical runtime analyses showcasing the efficiency of \nsga on bi-objective problems. The first was conducted by~\citep{ZhengLuiDoerrAAAI22} on $2$-\LOTZ and $2$-\OMM, followed by results on the multimodal problem $2$-\OJZJ~\citep{Qu2022PPSN}, about the usefulness of crossover~\citep{DANG2024104098,DoerrQ23b}, noisy environments~\citep{DaOp2023}, approximations of covering the Pareto front~\citep{Zheng2022}, lower bounds~\citep{DoerrQu2023a}, trap functions~\citep{DangEfficient2024} and stochastic population update~\citep{UpBian}. It is also shown that \nsga outperforms GSEMO exponentially which only relies on dominance relations~\citep{Lessons}. There are also results on combinatorial optimization problems like the minimum spanning tree problem~\citep{Cerf2023} or the subset selection problem~\citep{MOEASubset}. However, rigorous runtime results in many-objective optimization on simple benchmark functions appeared only in the last few years for the SMS-EMOA~\citep{Zheng_Doerr_2024}, the SPEA2~\citep{RenSPEA2}, variants of the \nsga~\citep{Krejca2025b} and the \nsgaIII~\citep{WiethegerD23,OprisNSGAIII,MOEAsLevels}. For the \nsgaIII, in~\citep{WiethegerD23} the first runtime analysis on the $3$-\OMM problem with $p \geq 21n$ divisions along each objective was conducted for defining the set of reference points. In \citep{OprisNSGAIII} this result was generalized to more than three objectives and runtime analyses for the classical $d$-\COCZ and $d$-\LOTZ benchmarks~\citep{Laumanns2004} for any constant number $d$ of objectives were provided where it is also necessary to reach the Pareto front. They could also reduce the number of required divisions by more than half. In~\citep{DoerrNearTight} these analyses were extended on the \mOJZJ function, and an arbitrary number of objectives, but without taking the population dynamics of \nsgaIII into account, and in~\citep{OPRIS2026}, new benchmarks were designed which showcase that \nsgaIII with crossover may lead to an exponential speedup compared to the case without. 
There are also several results for approximating the Pareto front if the size of the Pareto front is larger than the population size in case of NSGA-II~\citep{Zheng2022}, \nsgaIII~\citep{ApproximationNSGAIII} and SPEA-2~\citep{ApproximationSPEA2}, which is able to distribute the solutions more evenly on the Pareto front of $2$-\OMM than in case of NSGA-II. However, the analysis of population dynamics of \nsgaIII together with lower runtime bounds of other prominent MOEAs is still hardly understood. Until~\citep{OprisMultimodal,Opris26PopDyn,DoerrQu2023a}, to the best of our knowledge there are no tight runtime bounds for NSGA-II or \nsgaIII on classical benchmarks, despite extensive research on EMOA limitations and population dynamics. Notably, all these results are restricted to the case $d=2$, and currently we are only aware of one result of an EMOA called PAES-25 for the many-objective case beyond two objectives. Particularly, in~\citep{Opris2025PAES} it was shown that PAES-25 with one-bit mutation on the \mLOTZ problem, has a tight runtime of $\Theta(n^3)$ for $d=2$, $\Theta(n^3 \ln n)$ for $d=4$, and $\Theta(n (2n/d)^{d/2} \ln(n/d))$ for $d>4$. Additional tight bounds for GSEMO on several classical bi-objective benchmarks like $2$-\OMM, $2$-\OJZJ, and $2$-\COCZ are given in~\citep{doerr2025tightruntimeGSEMO}. To the best of our knowledge, there are no results on the population dynamics of \nsgaIII apart from investigations on \mOJZJ in~\citep{OprisMultimodal} and $2$-\OMM in~\citep{Opris26PopDyn}. A further restriction of these works is that they consider only populations that are Pareto optimal in the current and all subsequent generations. However, \citep{doerr2026improvedruntimeguaranteesspea2} provides closely related results on the cover number of individuals for SPEA2, which are shown to facilitate hill climbing and help cover the Pareto front, similar to the effects we exploit in this paper. Nevertheless, the resulting runtime analysis is restricted to the case $d=2$, and neither stochastic population update nor lower runtime bounds are considered there.

\section{Preliminaries}

\textbf{Notation:} For a finite set $A$ we denote by $\lvert{A}\rvert$ its cardinality and by $\ln$ the logarithm to base $e$. For $n \in \mathbb{N}$ let $[n]:=\{1, \ldots , n\}$ and we say that $\mu \in O(\text{poly}(n))$ if $\mu$ does not grow asymptotically faster than a polynomial in $n$. The number of ones in a bit string $x$ is denoted by $\ones{x}$ and the number of zeros by $\zeros{x}$, respectively. The number of leading zeros in $x$, denoted by $\LZ(x)$, is the length of the longest prefix of $x$ which contains only zeros, and the number of trailing zeros in $x$, denoted by $\TZ(x)$, the length of the longest suffix of $x$ containing only zeros, respectively. For example, if $x=00110110110000$, then $\LZ(x)=2$ and $\TZ(x)=4$. The number of leading ones in $x$, denoted by $\LO(x)$ and trailing ones in $x$, denoted by $\TOs(x)$, is the length of the longest prefix and suffix of $x$, which contains only ones, respectively. For two random variables $Y$ and $Z$ on $\mathbb{N}_0$ we say that $Z$ \emph{stochastically dominates} $Y$ if $\Pr(Z \leq c) \leq \Pr(Y \leq c)$ for every $c \geq 0$. For two vectors $v,w \in \mathbb{R}^d$ we denote by $v \circ w = \sum_{i=1}^d v_iw_i$ their inner product. For two $x,y \in \{0,1\}^n$ we denote by $H(x,y):=\sum_{i=1}^n |x_i-y_i|$ the \emph{Hamming distance} of $x$ and $y$. For a $d$-objective function $f:\{0,1\}^n \to \mathbb{N}_0^d, x \mapsto (f_1(x), \ldots , f_d(x))$, let $f_{\max}:=\max\{f_j(x) \mid x \in \{0,1\}^n, j \in [d]\}$ be the maximum possible objective value. When $d=2$, $f$ is also called \emph{bi-objective}. 
For two search points $x, y \in \{0, 1\}^n$, $x$ \emph{weakly dominates} $y$, written as $x \succeq y$,
if $f_{i}(x) \geq f_{i}(y)$ for all $i \in [d]$ and $x$ \emph{(strictly) dominates} $y$, written as $x \succ y$, if one inequality is strict. We call $x$ and $y$ \emph{incomparable} if neither $x \succeq y$ nor $y \succeq x$.
A set $S \subseteq \{0,1\}^n$ is a \emph{set of mutually incomparable solutions} if all search points in $S$ are incomparable. Each solution $x$ not dominated by any other in $\{0,1\}^n$ is called \emph{Pareto optimal} and we call $f(x)$ \emph{non-dominated fitness value}. The set of all non-dominated fitness values is called \emph{Pareto front}. For a population $P_t$ the \emph{cover number} $c_t(v)$ of $v \in \mathbb{N}_0^d$ is the number of individuals $x \in P_t$ with $f(x)=v$ and we say that $v$ is \emph{covered} if its cover number is at least $1$. \\

\textbf{Mathematical Tools:}
The following lemma describes some useful monotonicity properties. 

\begin{lemma}
\label{lem:mon-property}
The function $g: \text{}]0,n/4[\text{} \to \mathbb{R}, x \mapsto  (2n/(3x)+1/2)^{x/2},$ is strictly monotone increasing.
\end{lemma}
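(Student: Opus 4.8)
The plan is to pass to logarithms and differentiate. Since $g(x)>0$ on all of $]0,n/4[$, the function $g$ is strictly increasing there if and only if $h(x):=\ln g(x)=\tfrac{x}{2}\ln\!\bigl(\tfrac{2n}{3x}+\tfrac12\bigr)$ is strictly increasing, so it suffices to show $h'(x)>0$ for every $x\in\,]0,n/4[$. Writing $u(x):=\tfrac{2n}{3x}+\tfrac12$, so that $u'(x)=-\tfrac{2n}{3x^2}$, the product and chain rules give
\[
h'(x)=\tfrac12\ln\!\Bigl(\tfrac{2n}{3x}+\tfrac12\Bigr)+\tfrac{x}{2}\cdot\frac{-2n/(3x^2)}{\,2n/(3x)+1/2\,}=\tfrac12\ln\!\Bigl(\tfrac{2n}{3x}+\tfrac12\Bigr)-\frac{2n}{4n+3x},
\]
where the second equality uses $3x\,u(x)=2n+\tfrac{3x}{2}=\tfrac{4n+3x}{2}$, so the $x$‑dependence in the second term collapses. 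Thus the claim reduces to the inequality $\ln\!\bigl(\tfrac{2n}{3x}+\tfrac12\bigr)>\tfrac{4n}{4n+3x}$ on $]0,n/4[$.

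The next step is to bound the two sides separately; this is the only place where the restriction $x<n/4$ is used. For $0<x<n/4$ we have $\tfrac{2n}{3x}>\tfrac{2n}{3\cdot n/4}=\tfrac83$, hence $u(x)=\tfrac{2n}{3x}+\tfrac12>\tfrac{19}{6}>3$ and therefore $\ln u(x)>\ln 3$, so $\tfrac12\ln u(x)>\tfrac12\ln 3>\tfrac12$. On the other hand, $\tfrac{2n}{4n+3x}<\tfrac{2n}{4n}=\tfrac12$ for every $x>0$. Combining these, $h'(x)=\tfrac12\ln u(x)-\tfrac{2n}{4n+3x}>\tfrac12-\tfrac12=0$, and since $h$ is $C^1$ on the open interval, $h$ (and hence $g$) is strictly increasing, as claimed.

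There is essentially no delicate step here: once the derivative is put in the simplified form above, the crude bounds $\ln u(x)>\ln 3$ and $\tfrac{2n}{4n+3x}<\tfrac12$ already leave a positive constant slack. The only two points requiring care are (i) the algebraic simplification of $h'$ — in particular checking $3x\,u(x)=\tfrac{4n+3x}{2}$ so that the second term becomes $\tfrac{2n}{4n+3x}$ independently of $x/x^2$ cancellations — and (ii) making sure the interval bound $x<n/4$ is genuinely invoked, since it is exactly what pushes $u(x)$ above $3$ and thereby beats the term $\tfrac{2n}{4n+3x}<\tfrac12$. One could also avoid calculus by writing $g(x)=\exp\!\bigl(\tfrac{x}{2}\ln u(x)\bigr)$ and comparing $\tfrac{x_1}{2}\ln u(x_1)$ with $\tfrac{x_2}{2}\ln u(x_2)$ directly for $x_1<x_2$, but the derivative route above is the cleanest.
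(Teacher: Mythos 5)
Your proposal is correct and follows essentially the same route as the paper: both pass to $\ln g$, simplify the derivative to $\tfrac12\ln\bigl(\tfrac{2n}{3x}+\tfrac12\bigr)-\tfrac{2n}{4n+3x}$, and conclude positivity by bounding the logarithmic term below by $\tfrac12\ln 3>\tfrac12$ (using $x<n/4$) and the second term above by $\tfrac12$. Your write-up is in fact slightly more explicit about where the interval restriction enters.
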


\begin{proof}
We have 
$$\ln(g)' = \frac{\ln(2n/(3x)+1/2)}{2} -  \frac{x/2 \cdot 2n/(3x^2)}{2n/(3x)+1/2} = \frac{\ln(2n/(3x)+1/2)}{2} - \frac{2n}{4n+3x}.$$
Note that $\ln(g)'>0$ since $\ln(2n/(3x)+1/2) \geq \ln(3) \geq 1$ for $x \in \text{}]0,n/4[$, concluding the proof.
\end{proof}

The following inequalities are also useful for our analyzes

\begin{lemma}
	\label{lem:inequalities}
	For every $n \in \mathbb{N}$, we have $\sum_{j=n+1}^{\infty} 1/j^2 \leq 1/n$.
\end{lemma}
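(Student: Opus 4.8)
The plan is to dominate each summand $1/j^2$ by a telescoping difference. First I would note that for every integer $j \ge 2$ one has $j^2 \ge j(j-1)$, and therefore
\[
\frac{1}{j^2} \le \frac{1}{j(j-1)} = \frac{1}{j-1} - \frac{1}{j}.
\]
Since $n \in \mathbb{N}$ implies $n+1 \ge 2$, this bound is valid for every index $j$ occurring in the sum. Summing over $j$ from $n+1$ to $N$ and letting $N \to \infty$, the right-hand side telescopes, leaving $\frac{1}{n} - \lim_{N\to\infty}\frac{1}{N} = \frac{1}{n}$, which is the claim. The series on the left converges, since its partial sums are increasing and bounded above by this very estimate, so passing to the limit is legitimate.

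An equivalent route, if one prefers to avoid the telescoping identity, is an integral comparison: since $x \mapsto x^{-2}$ is decreasing on $\text{}]0,\infty[\text{}$, for each $j \ge n+1$ we have $1/j^2 \le \int_{j-1}^{j} x^{-2}\,\mathrm{d}x$, and summing over $j \ge n+1$ yields $\sum_{j=n+1}^{\infty} 1/j^2 \le \int_{n}^{\infty} x^{-2}\,\mathrm{d}x = 1/n$. I do not anticipate any real obstacle here: the statement is elementary, and the only point requiring a word of care is the (harmless) convergence of the series, which justifies taking the limit of the telescoped partial sums.
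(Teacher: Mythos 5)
Your proposal is correct. Your primary argument — bounding $1/j^2$ by $1/(j(j-1)) = 1/(j-1) - 1/j$ and telescoping — is a genuinely different route from the paper, which instead uses the integral comparison $\sum_{j=n+1}^{\infty} 1/j^2 \leq \int_n^{\infty} x^{-2}\,dx = 1/n$ for the decreasing function $x \mapsto 1/x^2$; that integral comparison is exactly your stated "equivalent route," so your second variant coincides with the paper's proof. The telescoping version is slightly more elementary (no integration needed, only the identity $1/(j(j-1)) = 1/(j-1)-1/j$ and monotone convergence of the partial sums), while the integral comparison generalizes more readily to other monotone summands; both yield the same constant $1/n$, and your handling of convergence is fine.
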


\begin{proof}
	For $n \in \mathbb{N}$, consider the function $f: \mathbb{R}_{>0} \to \mathbb{R}, x \mapsto 1/x^2$, which is positive and monotonically decreasing, which implies (see for example Inequality~(A.12) in~\citep{CormenLRS01IntroductionToAlgorithms})
	\[
	\sum_{j=n+1}^{\infty} \frac{1}{j^2}
	\;\leq\;
	\int_{n}^{\infty} \frac{1}{x^2}\,dx = \left[ -\frac{1}{x} \right]_{n}^{\infty} = \frac{1}{n},
	\]
	proving the claim.
\end{proof}

The following bounds are useful for estimating probabilities when stochastic amplifications occur.

\begin{lemma}[Lemma~10 in \citet{Badkobeh2015}]\label{lem:Badkobeh}
	For every $p\in[0,1]$ and every $\lambda\in\Natural$,
	\begin{align*}
		1-(1-p)^\lambda \in \left[\frac{p\lambda}{1+p\lambda},\frac{2p\lambda}{1+p\lambda}\right].
	\end{align*}
\end{lemma}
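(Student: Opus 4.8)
The plan is to prove the two bounds separately, each via standard elementary inequalities, and to handle the upper bound by a short case distinction on the value of $p\lambda$.

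For the lower bound $1-(1-p)^\lambda \ge \frac{p\lambda}{1+p\lambda}$, I would bound $(1-p)^\lambda$ from above by a chain of two well-known estimates: from $1-p \le \eulerE^{-p}$ we get $(1-p)^\lambda \le \eulerE^{-p\lambda}$, and from $\eulerE^{x}\ge 1+x$ applied with $x=p\lambda\ge 0$ we get $\eulerE^{-p\lambda}\le \frac{1}{1+p\lambda}$. Subtracting this from $1$ yields $1-(1-p)^\lambda \ge 1-\frac{1}{1+p\lambda}=\frac{p\lambda}{1+p\lambda}$, which is precisely the left endpoint.

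For the upper bound $1-(1-p)^\lambda \le \frac{2p\lambda}{1+p\lambda}$, I would first record two trivial estimates: $1-(1-p)^\lambda\le 1$ always, and $1-(1-p)^\lambda\le p\lambda$ by Bernoulli's inequality $(1-p)^\lambda\ge 1-p\lambda$ (valid since $1-p\ge 0$ for $p\in[0,1]$). Then I split into two cases. If $p\lambda\le 1$, I use $1-(1-p)^\lambda\le p\lambda$ and observe that $p\lambda\le \frac{2p\lambda}{1+p\lambda}$ holds because it is equivalent to $1+p\lambda\le 2$. If $p\lambda>1$, I use $1-(1-p)^\lambda\le 1$ and observe that $\frac{2p\lambda}{1+p\lambda}\ge 1$ holds because it is equivalent to $p\lambda\ge 1$. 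Together these two cases cover all $p\lambda\ge 0$.

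There is essentially no obstacle here: the proof amounts to invoking the right elementary facts ($1-p\le \eulerE^{-p}$, $\eulerE^{x}\ge 1+x$, and Bernoulli's inequality) and keeping the case distinction tidy. The boundary situations are harmless, since $p=0$ makes every quantity equal to $0$ and $\lambda\in\Natural$ guarantees $\lambda\ge 1$, so no separate treatment is needed.
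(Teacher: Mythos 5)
Your proof is correct: the lower bound via $1-p\le \eulerE^{-p}$ and $\eulerE^{p\lambda}\ge 1+p\lambda$, and the upper bound via Bernoulli's inequality combined with the case distinction on whether $p\lambda\le 1$, are all valid elementary steps with no gaps. Note that the paper itself does not prove this statement but imports it as Lemma~10 from \citet{Badkobeh2015}, so there is no in-paper argument to compare against; your write-up simply supplies a self-contained derivation of the cited fact.
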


We also require a tail bound for the sum of independent geometric random variables, which is a valuable tool for deriving runtime bounds that hold not only in expectation, but also with high probability.

\begin{lemma}[Theorem 15 in~\citet{DOERR2019115}]\label{lem:Doerr-dominance}
	Let $X_1, \ldots , X_n$ be independent geometric random variables with success probabilities $p_1, \ldots , p_n$. Let $X=\sum_{i=1}^n X_i$, $s=\sum_{i=1}^n (1/p_i)^2$ and $p_{\min}:=\min\{p_i \mid i \in [n]\}$. Then for all $\lambda \geq 0$
	\begin{enumerate}
		\item[(1)]
		$\Pr(X \geq \expect{X} + \lambda) \leq \exp \left(-\frac{1}{4} \min \left\{\frac{\lambda^2}{s}, \lambda p_{\min}\right\}\right)$,
		\item[(2)]
		$\Pr(X \leq \expect{X} - \lambda) \leq \exp \left(-\frac{\lambda^2}{2s}\right)$.
	\end{enumerate} 
\end{lemma}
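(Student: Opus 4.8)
\emph{Proof idea.}
This is a Chernoff-type concentration bound, so the plan is to run the exponential-moment method on the centred variables $Y_i := X_i - 1/p_i$, exploiting that a geometric random variable is sub-Gaussian to the left (it is bounded below by $1$) but only sub-exponential to the right (its right tail is $(1-p_i)^k$). Write $Y := X - \E{X} = \sum_{i=1}^n Y_i$; both inequalities then become tail bounds for $\pm Y$, and since $\mathrm{Var}(X_i)=(1-p_i)/p_i^2\le 1/p_i^2$, the quantity $s$ will play the role of a variance proxy for $Y$.

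The technical core is a pair of per-variable moment-generating-function estimates. Starting from the closed form $\E{e^{tX_i}}=\frac{p_i e^t}{1-(1-p_i)e^t}$ (valid for $e^t<1/(1-p_i)$), I would expand $\ln\E{e^{\pm tY_i}}$ around $t=0$ and bound the remainder to obtain
\begin{align*}
\ln \E{e^{tY_i}} &\le \frac{t^2/p_i^2}{2\,(1-t/p_i)} \qquad (0\le t<p_i),\\
\ln \E{e^{-tY_i}} &\le \frac{t^2}{2p_i^2} \qquad (t\ge 0).
\end{align*}
By independence these add up: for the upper tail, using $1-t/p_i\ge 1-t/p_{\min}$ gives $\ln\E{e^{tY}}\le \frac{t^2 s}{2(1-t/p_{\min})}$ on $[0,p_{\min})$, which is exactly the classical Bernstein moment bound with variance proxy $s$ and scale parameter $1/p_{\min}$; for the lower tail one simply gets $\ln\E{e^{-tY}}\le t^2 s/2$ for all $t\ge 0$.

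The remainder is the standard optimisation. Markov's inequality applied to $e^{\pm tY}$ gives $\Pr(X\ge\E{X}+\lambda)\le \exp\!\big(-t\lambda+\tfrac{t^2 s}{2(1-t/p_{\min})}\big)$ and $\Pr(X\le\E{X}-\lambda)\le\exp\!\big(-t\lambda+t^2 s/2\big)$. For the lower tail, $t=\lambda/s$ yields exactly $\exp(-\lambda^2/(2s))$, which is part~(2). For the upper tail, the Bernstein choice $t=\lambda/(s+\lambda/p_{\min})$, which indeed lies in $[0,p_{\min})$ since $s>0$, yields $\exp\!\big(-\tfrac{\lambda^2}{2(s+\lambda/p_{\min})}\big)$; splitting into the cases $s\ge\lambda/p_{\min}$ and $s<\lambda/p_{\min}$ and using $\frac{1}{s+\lambda/p_{\min}}\ge\frac12\min\{1/s,\,p_{\min}/\lambda\}$ turns this into $\exp\!\big(-\tfrac14\min\{\lambda^2/s,\lambda p_{\min}\}\big)$, which is part~(1).

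I expect the only genuinely non-routine step to be justifying the first per-variable estimate: one must quantify how the geometric MGF blows up as $t\uparrow p_i$ and show that the logarithmic Taylor remainder is dominated by $\frac{t^2/p_i^2}{2(1-t/p_i)}$ uniformly on all of $[0,p_i)$, not merely for small $t$. (An alternative for the special case of equal $p_i$ avoids MGFs altogether via the identity $\Pr\!\big(\sum_{i=1}^n X_i>m\big)=\Pr(\Bin(m,p)<n)$ together with binomial Chernoff bounds, extending to general $p_i$ by stochastic domination after replacing each $p_i$ by $p_{\min}$; but that route recovers the $\lambda p_{\min}$ regime cleanly only and would still need extra work for the $\lambda^2/s$ regime, so the direct computation above seems preferable.)
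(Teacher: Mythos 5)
The paper does not prove this lemma at all: it is imported verbatim as Theorem~15 of \citet{DOERR2019115} (a result going back to Witt's work on sums of geometric random variables), so there is no in-paper argument to compare yours against. Your proposed derivation is the standard Bernstein/Chernoff route and it is sound: both per-variable bounds on $\ln \mathrm{E}[e^{\pm tY_i}]$ are correct, summing them and optimising $t$ exactly as you describe reproduces the two stated tails (the choice $t=\lambda/s$ gives part~(2) with no loss, and $s+\lambda/p_{\min}\le 2\max\{s,\lambda/p_{\min}\}$ turns the Bernstein exponent $\frac{\lambda^2}{2(s+\lambda/p_{\min})}$ into the constant $\tfrac14$ of part~(1)). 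The one step you leave open, namely that $\ln \mathrm{E}[e^{tY_i}]\le \frac{t^2/p_i^2}{2(1-t/p_i)}$ holds on all of $[0,p_i)$ and not just for small $t$, is true and can be closed as follows: with $\psi_i(t):=\ln\mathrm{E}[e^{tY_i}]$ and $q_i:=1-p_i$ one has $\psi_i''(u)=\frac{q_ie^u}{(1-q_ie^u)^2}$ (the variance of the exponentially tilted geometric); since $q_ie^{p_i}\le 1$ gives $p_i-u\le\ln\bigl(1/(q_ie^u)\bigr)$, the elementary inequality $\sqrt{v}\,\ln(1/v)\le 1-v$ for $v\in(0,1]$ applied to $v=q_ie^u$ yields $\psi_i''(u)\le (p_i-u)^{-2}$ for $0\le u<p_i$, and integrating the Taylor remainder gives $\psi_i(t)\le -t/p_i-\ln(1-t/p_i)=\sum_{k\ge 2}(t/p_i)^k/k\le \frac{(t/p_i)^2}{2(1-t/p_i)}$. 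With that estimate written out, your proof is complete and correct.
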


We frequently consider sums of independent geometrically distributed random variables $X_1, \ldots, X_n$, where the success probability of $X_i$ is proportional to $i$.

\begin{lemma}[Theorem 16 in~\citet{DOERR2019115}]\label{lem:Doerr-dominance2}
Let $X_1, \ldots, X_n$ be independent geometric random variables with success probabilities $p_1, \ldots, p_n$. Assume that there exists a value $C \le 1$ such that $p_i \ge C \frac{i}{n}$ for all $i \in \{1, \ldots ,n]$. Let $X = \sum_{i=1}^n X_i$. Then the following holds. 
\begin{enumerate}
    \item[(1)] $\expect{X} \le \frac{1}{C} n H_n \le \frac{1}{C} n (1 + \ln n)$,
    \item[(2)] $\Pr(X \ge (1+\delta)\tfrac{1}{C} n \ln n) \le n^{-\delta} \text{ for all } \delta \ge 0$.
\end{enumerate}
\end{lemma}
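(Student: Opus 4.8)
The plan is to dispatch part~(1) by linearity of expectation and then reduce part~(2) to a coupon-collector tail bound via stochastic domination. For~(1), a geometric random variable with success probability $q$ has expectation $1/q$, hence $\expect{X_i}=1/p_i\le n/(Ci)$ by assumption, and
\[
\expect{X}\;=\;\sum_{i=1}^{n}\expect{X_i}\;\le\;\frac{n}{C}\sum_{i=1}^{n}\frac1i\;=\;\frac{n}{C}\,H_n\;\le\;\frac{n}{C}\,(1+\ln n),
\]
using the standard estimate $H_n\le 1+\int_1^n \mathrm{d}x/x=1+\ln n$.

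For part~(2), since $p_i\ge Ci/n$ each $X_i$ is stochastically dominated by a geometric random variable $Y_i$ with success probability exactly $Ci/n$; as the $X_i$ are independent and the $Y_i$ may be chosen independent, $X=\sum_{i=1}^n X_i$ is stochastically dominated by $Y:=\sum_{i=1}^n Y_i$, so it suffices to show $\Prob{Y\ge(1+\delta)\tfrac1C n\ln n}\le n^{-\delta}$. I would model $Y$ as a \emph{lazy} coupon collector on the set $[n]$: in each round draw a uniformly random coupon and, independently, mark the round \emph{active} with probability $C$; coupon $j$ is collected the first time it appears in an active round. When exactly $i$ coupons are still missing, the number of rounds until the next active round hitting a missing coupon is geometric with parameter $C\cdot i/n$, and by memorylessness of the independent uniform draws these successive waiting times are mutually independent; hence the total number of rounds until completion has the same distribution as $Y$.

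Writing $T_j$ for the round in which coupon $j$ is collected, one has that $\max_{j\in[n]}T_j$ has the same distribution as $Y$, with $T_j$ geometric of parameter $C/n$, so a union bound over the $n$ coupons gives, for $t:=(1+\delta)\tfrac1C n\ln n$,
\[
\Prob{Y\ge t}\;\le\;\sum_{j=1}^{n}\Prob{T_j\ge t}\;\le\;n\Bigl(1-\tfrac Cn\Bigr)^{t-1}\;\le\;n\,\eulerE^{-C(t-1)/n}\;=\;n^{-\delta}\,\eulerE^{C/n},
\]
which simplifies to the claimed $n^{-\delta}$ once the boundary term is handled carefully, exactly as in~\citet{DOERR2019115}; composing with $\Prob{X\ge t}\le\Prob{Y\ge t}$ then finishes the proof.

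The delicate point is the coupling step: one must verify that the phase waiting times of the lazy coupon collector really are independent geometrics with the stated parameters, and that the off-by-one in passing from $(1-C/n)^{t-1}$ to the clean bound $n^{-\delta}$ is absorbed correctly. A coupling-free alternative is a direct Chernoff bound on the moment generating function $\expect{\eulerE^{sY}}=\prod_{i=1}^n (Ci/n)\eulerE^{s}/(1-(1-Ci/n)\eulerE^{s})$ optimized over $s<-\ln(1-C/n)$, but that is the computationally heavier route, so I would prefer the union-bound argument above.
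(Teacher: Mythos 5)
Your proof is correct; note that the paper does not prove this lemma at all but imports it verbatim as Theorem~16 from \citet{DOERR2019115}, and your argument (part~(1) by linearity, part~(2) by domination to the case $p_i = Ci/n$, identification of $Y$ with the completion time of a lazy coupon collector, and a union bound over the $n$ per-coupon geometric tails) is essentially the proof given in that reference. The only loose end is the one you already flag: the union bound yields $n(1-C/n)^{t-1}\le n^{-\delta}\eulerE^{C/n}$ rather than $n^{-\delta}$ exactly, an off-by-one boundary effect that is likewise glossed over in the original source and is harmless for every use of the lemma in this paper.
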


\begin{algorithm}[t]
	Initialize $P_0 \sim \text{Unif}( (\{0,1\}^n)^{\mu})$\\
	\For{$t:= 0$ to $\infty$}{
		Initialize $Q_t:=\emptyset$\\
		\For{$i=1$ to $\mu$}{
                Sample $s$ from $P_t$ uniformly at random\\
                Create $r$ by standard bit mutation on $s$ with mutation probability $1/n$\\
                Update $Q_t:=Q_t \cup \{r\}$\\
                }
            Set $R_t := P_t \cup Q_t$\\
            \If{$a=1$}{
		Update $R_t$ by choosing $\lceil{3\mu/2}\rceil$ solutions from $R_t$ uniformly at random without replacement~\label{line:update}\\
            }
		Partition $R_t$ into layers $F^1_t,F^2_t,\ldots ,F^k_t$ of non-dominated solutions \label{line:non-dominated}\\
            \If{$a=0$}{
            Find $i^* \geq 1$ such that $\sum_{i=1}^{i^*-1} \lvert{F_t^i}\rvert < \mu$ and $\sum_{i=1}^{i^*} \lvert{F_t^i}\rvert \geq \mu$
            }
            \Else{
            Find $i^* \geq 1$ such that $\sum_{i=1}^{i^*-1} \lvert{F_t^i}\rvert < \lceil{\mu/2}\rceil$ and $\sum_{i=1}^{i^*} \lvert{F_t^i}\rvert \geq \lceil{\mu/2\rceil}$
            }
            Compute $Y_t = \bigcup_{i=1}^{i^*-1} F_t^i$\\
            Choose $\tilde{F}_t^{i^*} \subset F_t^{i^*}$ such that $\lvert{Y_t \cup \tilde{F}_t^{i^*}}\rvert = \mu$ if $a=0$ and $\lvert{Y_t \cup \tilde{F}_t^{i^*}}\rvert = \lceil{\mu/2}\rceil$ if $a=1$ with Algorithm~\ref{alg:Survival-Selection}\\
		Create the next population $P_{t+1} := Y_t \cup \tilde{F}^{i^*}_t$ if $a=0$ and $P_{t+1} := Y_t \cup \tilde{F}^{i^*}_t \cup ((P_t \cup Q_t) \setminus R_t)$ if $a=1$ \label{line:survival}\\
	}
	\caption{NSGA-III~(\protect\citep{DebJain2014}) with population size $\mu$, stochastic population update ($a=1$) and without ($a=0$) on a $d$-objective function $f$ for $d \in \mathbb{N}$}
	\label{alg:nsga-iii}
\end{algorithm}

\textbf{Algorithm:} The \nsgaIII algorithm, originated in~\citep{DebJain2014}, with or without \emph{stochastic population update} controlled by the parameter $a \in \{0,1\}$, is shown in Algorithm~\ref{alg:nsga-iii}. If $a=0$, stochastic population update is turned off, while it is turned on, if $a=1$. Initially, a population of size $\mu$ is created by choosing $\mu$ individuals from $\{0,1\}^n$ uniformly at random. Then in each iteration $t$, a multiset $Q_t$ of $\mu$ new offspring is created by $\mu$ times choosing an individual $s \in P_t$ uniformly at random and applying \emph{standard bit mutation} on $s$, i.e. each bit is flipped independently with probability $1/n$ which we call \emph{trial}. During the survival selection, the parent and offspring populations $P_t$ and $Q_t$ are merged into $R_t$. When stochastic population update is turned on then $R_t$ is updated by choosing $\lceil{3\mu/2}\rceil$ individuals from $R_t$ uniformly at random without replacement. Then $R_t$ is partitioned into layers $F^1_t,F^2_t,\dots$ using the \emph{non-dominated sorting algorithm}~\citep{Deb2002} where $F^1_t$ consists of all non-dominated individuals, and $F^i_t$ for $i>1$ of individuals only dominated by those from $F^1_t,\dots,F^{i-1}_t$. Then if $a=0$ the critical rank $i^*$ with $\sum_{i=1}^{i^*-1} \lvert{F_t^i}\rvert < \mu$ and $\sum_{i=1}^{i^*} \lvert{F_t^i}\rvert \geq \mu$ is determined (i.e. there are fewer than $\mu$ search points in $R_t$ with a lower rank than $i^*$, but at least $\mu$ search points with rank at most $i^*$). If $a=1$, then every individual not selected for $R_t$ survives. The number of such individuals is $\mu-\lceil{3\mu/2}\rceil = \lfloor{\mu/2}\rfloor$. Hence, in this case, the critical index $i^*$ refers only on $\mu-\lfloor{\mu/2}\rfloor = \lceil{\mu/2}\rceil$ individuals. All individuals with a lower rank than $i^*$ are included in $P_{t+1}$, while the remaining individuals are selected from $F_t^{i^*}$ using Algorithm~\ref{alg:Survival-Selection}. 

\begin{algorithm}[t]
     $E_{-1} = \emptyset$, $E_t=\emptyset$\\
	\For{$j = 1$ to $d$}{
	$y^{\min}_j = \min\{y^{\min}_j, \min_{y \in R_t} (f_j(y))\}$, $y^{\max}_j = \max\{y^{\max}_j, \max_{y \in F_t^1} (f_j(y))$\}\label{line:max}\\
    Determine an extreme point $e^{(j)}$ in the $j$-th objective from $f(Y_t \cup F_t^{i^*}) \cup E_{t-1}$ using an achievement scalarization function; $E_t = E_t \cup \{e^{(j)}\}$~\label{line:Extreme-points}}
        valid = \textit{false}\\
            \If{$e^{(1)}, \ldots , e^{(d)}$ are linearly independent~\label{line:lin-independent}}{
             valid = \textit{true}; let $H$ be the hyperplane spanned by $e^{(1)}, \ldots , e^{(d)}$\\
            \For{$j=1$ to $d$}{
            Determine the intercept $I_j$ of $H$ with the $j$-th objective axis\\
            \lIf{$I_j \geq \varepsilon_{\text{nad}}$ and $I_j \leq y^{\max}_j$~\label{line:Second-If}}{
            $y_j^{\text{nad}} = I_j$\label{line:intercept}}
            \Else{
            valid = \textit{false}; \textbf{break}\label{line:False}
                    }
                }
            }
		\If{valid = \textit{false}}{
            \For{$j=1$ to $d$}{
            $y_j^{\text{nad}} = \max_{y \in F_t^1}f_j(y)$~\label{line:valid-false}
                }
            }
            \For{$j=1$ to $d$}{
            \If{$y_j^{\text{nad}}<y^{\min}_j + \varepsilon_{\text{nad}}$}{
            $y_j^{\text{nad}} = \max_{y \in F_t^1 \cup \ldots \cup F_t^k}f_j(y)$~\label{line:normalization-value}
                }
            }
         Let $f_j^n(x) = (f_j(x)-y^{\min}_j)/(y_j^{\text{nad}}-y^{\min}_j)$ for each $x \in \{0,1\}^n$ and $j \in \{1, \ldots , d\}$ \label{line:output}
	\caption{Normalization procedure \textsc{Normalize} \citep{DebJain2014,WiethegerD23}}
	\label{alg:normalization}
\end{algorithm}

Hereby, a normalized objective function $f^n$ is computed according to Algorithm~\ref{alg:normalization} and then each individual with rank at most $i^*$ is associated with reference points. For Algorithm~\ref{alg:normalization}, we use the normalization procedure from~\citep{WiethegerD23} which can be also used for maximization problems as shown in~\citep{OprisNSGAIII} if the given positive threshold $\varepsilon_{\text{nad}}$ is sufficiently large. This will be explained in the following.

\begin{algorithm}[t]
        Compute the normalization $f^n$ of $f$ using Algorithm~\ref{alg:normalization}\\
        Associate each $x\in Y_t \cup F_t^{i^*}$ with its reference point $\rp(x)$ such that the distance between $f^n(x)$ and the line through the origin and $\rp(x)$ is minimized\\ 
        For each $r \in \refer$, set $\rho_r:=|\{x\in Y_t \mid \mathrm{rp}(x)=r\}|$\\
        Initialize $\tilde{F}_t^{i^*}=\emptyset$ and $R':=\refer$\\
        \While{true}{
        Determine $r_{\min} \in R'$ such that $\rho_{r_{\min}}$ is minimal (where ties are broken randomly)~\label{line:minimal}\\
        Determine $x_{r_{\min}} \in F_t^{i^*} \setminus \tilde{F}_t^{i^*}$ which is associated with $r_{\min}$ and minimizes the distance between the vectors $f^n(x_{r_{\min}})$ and $r_{\min}$ (where ties are broken randomly)\label{line:association}\\
        \If{$x_{r_{\min}}$ exists}{
        $\tilde{F}_t^{i^*} = \tilde{F}_t^{i^*} \cup \{x_{r_{\min}}\}$\\
        $\rho_{r_{\min}} = \rho_{r_{\min}} + 1$\\
            \If{$a=0$ \emph{and} $\lvert{Y_t}\rvert + \lvert{\tilde{F}_t^{i^*}}\rvert = \mu$}
                {\Return{$\tilde{F}_t^{i^*}$}}
            \If{$a=1$ \emph{and} $\lvert{Y_t}\rvert + \lvert{\tilde{F}_t^{i^*}}\rvert = \lceil{\mu/2}\rceil$}
                {\Return{$\tilde{F}_t^{i^*}$}}
                }
            \lElse{$R'=R' \setminus \{r_{\min}\}$}
        }
	\caption{Selection procedure utilizing a set $\refer$ of reference points to maximize a function}
	\label{alg:Survival-Selection}
\end{algorithm}

Let $t$ be any generation. Let $y_j^{\min}$ and $y_j^{\max}$ be the minimum and maximum value in the $j$-th objective over all joined populations from generation $t$ and previous ones (i.e. from $R_0, \ldots R_t$), respectively. 
The point $y^{\min}:=(y^{\min}_1, \dots, y^{\min}_m)$ is called \emph{the ideal point} in \citep{DebJain2014}. Then for each objective $j$ we compute an extreme point $e^{(j)}$ in that objective by using an achievement scalarization function (see Section 4.2 of~\citep{Blank2019} for the exact definition), and the hyperplane spanned by these $e^{(j)}$ is denoted $H$. The intercept $I_j$ of $H$ 
with the $j$-th objective axis gives the so-called \emph{Nadir point estimate} $y_j^{\text{nad}}$. If $H$ has smaller dimension than $d$ or if an intercept is either larger than a given positive threshold $\varepsilon_{\text{nad}}$ or smaller than $y_j^{\max}$, $y_j^{\text{nad}}$ is set to $\max_{y\in F_t^1} f_j(y)$, the maximum of the $j$-th objective values which occur in the current first layer $F_t^1$ (which may differ from $y_j^{\max}$). We also replace $y_j^{\text{nad}}$ by $\max_{y\in F_t^1} f_j(y)$ if the Nadir point estimate is smaller than $y_j^{\min} + \varepsilon_{\text{nad}}$. Then the normalization $f_j^n(x)$ is computed as in Line~\ref{line:normalization-value} in Algorithm~\ref{alg:normalization} and the normalized objective function $f^n$ is defined as $(f_1^n , \ldots , f_d^n)$. The goal of normalization is to have the normalized fitness vectors in $[0,1]^{d}$ while preserving or properly emphasizing the dominance relations \citep{Blank2019}. Note that an individual $x$ (weakly) dominates another individual $y$ with respect to $f$ if and only if $x$ (weakly) dominates $y$ with respect to $f^n$.

After computing the normalization, each individual $x$ is associated with the reference point $\text{rp}(x)$ such that the distance between $f^n(x)$ and the line through the origin and $\text{rp}(x)$ is minimal (see also Figure~\ref{fig:reference-points}). Ties are broken deterministically, which means that two individuals with the same normalized objective vector are associated to the same reference point. We use the same set of reference points $\refer$ as
proposed in~\citep{DebJain2014}. The points are defined as
\[
\left\{\left(\frac{a_1}{p}, \ldots ,\frac{a_d}{p} \right) 
    \text{ } \Big| \text{ }  
    (a_1,\dots,a_d) \in \mathbb{N}_0^d, 
    \sum_{i=1}^d a_i = p
\right\}
\]
where $p \in \mathbb{N}$ is a parameter one can choose according 
to the fitness function $f$. These are uniformly distributed on the simplex determined by the unit vectors $(1,0,\dots,0)^{\intercal},(0,1,\dots,0)^{\intercal},\dots,(0,0,\dots,1)^{\intercal}$. 


Then, one iterates through all the reference points where the reference point with the fewest associated individuals that are already selected for the next generation $P_{t+1}$ is chosen. A reference point is omitted if it only has associated individuals that are already selected for $P_{t+1}$ and ties are broken uniformly at random. Next, from the individuals associated to that reference point who have not yet been selected, the one closest to the chosen reference point is selected for the next generation, where ties are again broken uniformly at random. Once the required number of individuals is reached (i.e. if $\lvert{Y_t}\rvert+\lvert{\tilde{F}_t^{i^*}}\rvert = \mu$ when $a=0$ and $\lvert{Y_t}\rvert+\lvert{\tilde{F}_t^{i^*}}\rvert = \lceil{\mu/2}\rceil$ when $a=1$) the selection ends. 

In this paper, we evaluate the performance of \nsgaIII on a problem by the expected number of generations. Another common measure is the expected number of fitness evaluations. However, this can be obtained directly by multiplying the expected number of generations by $\mu$.  

\begin{figure}
    \centering
    \includegraphics[scale=0.7]{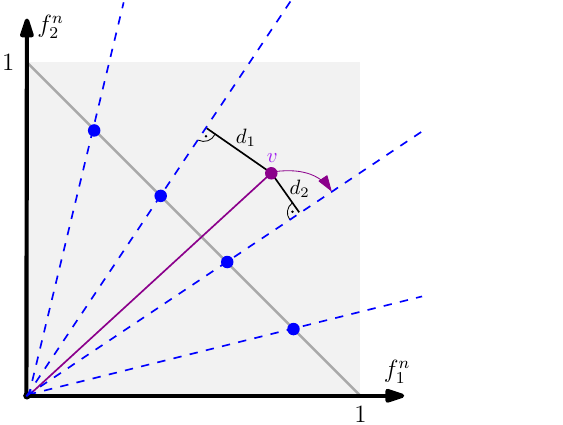}
    \caption{Illustrating how search points with fitness vector $v$ are associated with reference points (dots on line through $(1,0)$ and $(0,1)$ connected by dashed lines through the origin) for $d=2$ objectives in the normalized objective space. The vector $v$ is associated to the nearest reference point to its right.}
    \label{fig:reference-points}
\end{figure}

\section{Structural Results}

We recall the definitions and results from~\citep{OprisNSGAIII}. The major goal is to show that, by employing sufficiently many reference points, once the population covers a fitness vector with a solution $x \in F_t^1$, it is covered for all future generations as long as $x \in F_t^1$. Such a statement was already shown in the groundbreaking paper \citep{WiethegerD23}, but this was limited to the $3$-OMM problem and some precision was lost
when they applied the inequality $\arccos(1-1/24)>2\arccos(1-18/42^2)$. We give a general proof that \nsgaIII maintains good solutions using different analytical arguments and require only $p=2 \cdot 3^{3/2}n \thickapprox 10.39n$ divisions in case of $3$-\OMM, which is approximately by a factor of $2$ smaller. In contrast to~\citep{WiethegerD23} we compute distances from search points to their nearest reference point via the sine function of the angle between the lines running through both points and use a general trigonometrical identity for $\arcsin(x)$ for any $0<x\leq 1$. We start with a lemma about the normalized objectives of a multiobjective function.

\begin{lemma}
\label{lem:Normalization}
Consider \nsgaIII optimizing a many-objective function $f:\{0,1\}^n \to \mathbb{N}_0^d$ with ($a=1$), or without stochastic population update ($a=0$), and suppose that $\varepsilon_{\mathrm{nad}} \geq f_{\max}$. In every generation, after running Algorithm~\ref{alg:normalization}, for every $x \in R_t$ and every objective~$j$ we have $0 \leq f_j^n(x) \leq 1$ and $y_j^{\mathrm{nad}}-y_j^{\min} \leq f_{\max}$.
\end{lemma}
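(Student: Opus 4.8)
The plan is to trace through the normalization procedure of Algorithm~\ref{alg:normalization} branch by branch and to show that the hypothesis $\varepsilon_{\mathrm{nad}}\ge f_{\max}$ forces the Nadir point estimate $y_j^{\mathrm{nad}}$ into one of a small number of easily bounded forms. Two elementary observations set the stage. First, all objective values lie in $\mathbb{N}_0$, so $y_j^{\min}\ge 0$; moreover $y_j^{\min}$ is the minimum of $f_j$ over $R_0,\dots,R_t$, so every $x\in R_t$ satisfies $f_j(x)\ge\min_{y\in R_t}f_j(y)\ge y_j^{\min}$, and hence the numerator $f_j(x)-y_j^{\min}$ of $f_j^n(x)$ is non-negative. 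Second, each of $y_j^{\max}$, $\max_{y\in F_t^1}f_j(y)$ and $\max_{y\in R_t}f_j(y)=\max_{y\in F_t^1\cup\dots\cup F_t^k}f_j(y)$ is a maximum of genuine objective values and therefore at most $f_{\max}$.

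The first step is the bound $y_j^{\mathrm{nad}}\le f_{\max}$. After the validity loop together with the fallback at Line~\ref{line:valid-false}, $y_j^{\mathrm{nad}}$ is either the intercept $I_j$ (possible only in the valid branch, where the test at Line~\ref{line:Second-If} enforces $I_j\le y_j^{\max}$) or $\max_{y\in F_t^1}f_j(y)$; the loop at Line~\ref{line:normalization-value} may afterwards overwrite it by $\max_{y\in R_t}f_j(y)$. By the previous paragraph all three candidates are $\le f_{\max}$, so $y_j^{\mathrm{nad}}\le f_{\max}$; together with $y_j^{\min}\ge 0$ this already yields the second claim $y_j^{\mathrm{nad}}-y_j^{\min}\le f_{\max}$.

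For the first claim I would distinguish according to whether the loop at Line~\ref{line:normalization-value} fires. If it fires, then $y_j^{\mathrm{nad}}=\max_{y\in R_t}f_j(y)\ge f_j(x)$ for every $x\in R_t$, so $0\le f_j(x)-y_j^{\min}\le y_j^{\mathrm{nad}}-y_j^{\min}$ and thus $f_j^n(x)\in[0,1]$. If it does not fire, the current value obeys $y_j^{\mathrm{nad}}\ge y_j^{\min}+\varepsilon_{\mathrm{nad}}\ge y_j^{\min}+f_{\max}$, which combined with $y_j^{\mathrm{nad}}\le f_{\max}$ and $y_j^{\min}\ge 0$ squeezes $y_j^{\mathrm{nad}}=f_{\max}$ and $y_j^{\min}=0$; hence $f_j^n(x)=f_j(x)/f_{\max}\in[0,1]$ because $0\le f_j(x)\le f_{\max}$. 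The denominator $y_j^{\mathrm{nad}}-y_j^{\min}$ is strictly positive in the second case (it equals $f_{\max}$, and $f_{\max}=0$ is the trivial constant case) and in the first case provided the $j$-th objective is not constant along $R_0,\dots,R_t$, which holds for all benchmarks considered here, so that $f_j^n$ is well defined.

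The argument is essentially bookkeeping, and that is where the only real difficulty lies: one must keep careful track of the several branches of Algorithm~\ref{alg:normalization}, in particular the interplay between the validity test at Line~\ref{line:Second-If} and the re-assignment at Line~\ref{line:normalization-value}, and one must notice that $\varepsilon_{\mathrm{nad}}\ge f_{\max}$ is precisely the condition under which any value of $y_j^{\mathrm{nad}}$ that is not overwritten in Line~\ref{line:normalization-value} collapses to $f_{\max}$ (with $y_j^{\min}=0$); no single estimate is hard in isolation. Finally, the parameter $a$ never enters the reasoning, so the same proof covers both $a=0$ and $a=1$: the only difference is that for $a=1$ the symbol $R_t$ refers to the random subset of size $\lceil 3\mu/2\rceil$ chosen in Line~\ref{line:update} rather than to $P_t\cup Q_t$, which affects none of the steps above.
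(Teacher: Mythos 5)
Your proof is correct and follows essentially the same route as the paper: a branch-by-branch trace of Algorithm~\ref{alg:normalization} in which the hypothesis $\varepsilon_{\mathrm{nad}} \geq f_{\max}$, combined with $I_j \leq y_j^{\max} \leq f_{\max}$, forces the intercept case to collapse to $y_j^{\mathrm{nad}} = f_{\max}$. Your reorganization — casing on whether the reassignment at Line~\ref{line:normalization-value} fires, and squeezing $y_j^{\mathrm{nad}} = f_{\max}$ and $y_j^{\min} = 0$ when it does not — is a slightly cleaner piece of bookkeeping, since it avoids the paper's implicit appeal to the fact that every $x \in R_t$ is weakly dominated by some member of $F_t^1$.
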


\begin{proof}
Until Line~\ref{line:normalization-value} of the algorithm, we note that $y_j^{\mathrm{nad}}$ is either set to (i) $\max_{y \in F_t^1} f_j(y)$ or (ii) the intercept $I_j$. For (i), since $f_{\max} \geq \max_{y \in F_t^1} f_j(y)$, $y_j^{\min} \geq 0$ and $f_j(x) \leq \max_{y \in F_t^1}f_j(y)$ we get 
    \begin{align}
    (y_j^{\mathrm{nad}}-y_j^{\min} \leq f_{\max})  
    \wedge (f_j(x) \leq y_j^{\mathrm{nad}})
    \label{eq:proper-normalization}
    \end{align}
and the results follow. In the case of (ii), the setting is triggered by Line~\ref{line:Second-If} of the algorithm, thus the condition of the if statement must be true. Specifically $I_j \geq \varepsilon_{\mathrm{nad}} \geq f_{\max}$ and $I_j \leq y_j^{\max} \leq f_{\max}$ and this implies $y_j^{\mathrm{nad}}=I_j = f_{\max}$. If the condition in the next for loop is also triggered then we fall back to case (i), otherwise note that \eqref{eq:proper-normalization} still holds because $f_j(x)\leq f_{\max}$.   
\end{proof}

The aim for the rest of this section is to generalize Lemma~2 in~\citep{WiethegerD23} to arbitrary objective functions with an arbitrary number of objectives. We start with the following technical result about general vectors $v,w \in [0,1]^d$.

\begin{lemma}
\label{lem:angle-helper}
    Let $v,w \in [0,1]^d$, and suppose that there are indices $i, j$ with $w_i=v_i+\ell_1$ and $w_j=v_j-\ell_2$ where $\ell_1,\ell_2 \geq 1/f_{\max}$. 
    Denote by $\varphi$ the angle between $v$ and $w$. Then $\sin(\varphi) \geq 1/(\sqrt{d} f_{\max})$.
\end{lemma}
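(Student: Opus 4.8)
The plan is to reduce the $d$-dimensional estimate to a two-dimensional one on the coordinate plane spanned by the $i$-th and $j$-th axes, where the hypotheses give concrete control. First I would record two easy consequences: since $w\in[0,1]^d$ we have $w_j=v_j-\ell_2\ge 0$, hence $v_j\ge\ell_2\ge 1/f_{\max}>0$; in particular $v\neq 0$ and also $w\neq 0$ (because $w_i=v_i+\ell_1\ge 1/f_{\max}>0$), so the angle $\varphi$ and the line $L:=\{tv:t\in\Real\}$ are well defined and no degeneracy can occur. The next step is the identity $\mathrm{dist}(w,L)=\|w\|\sin\varphi$ (e.g. from $\mathrm{dist}(w,L)^2=\|w\|^2-(v\circ w)^2/\|v\|^2=\|w\|^2\sin^2\varphi$), which, together with $\|w\|\le\sqrt{d}$ (as $w\in[0,1]^d$), reduces the claim to showing $\mathrm{dist}(w,L)\ge 1/f_{\max}$.

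To lower-bound $\mathrm{dist}(w,L)$ I would discard all coordinates except $i$ and $j$: for every $t\in\Real$,
$$\|w-tv\|^2=\sum_{k=1}^d (w_k-tv_k)^2\ge (w_i-tv_i)^2+(w_j-tv_j)^2,$$
so $\mathrm{dist}(w,L)\ge\mathrm{dist}_2\!\big((w_i,w_j),\{s(v_i,v_j):s\in\Real\}\big)$, the distance in $\Real^2$ from $(w_i,w_j)$ to the line through the origin in direction $(v_i,v_j)$, which is a genuine line since $v_j>0$. The standard point-to-line formula gives this distance as $|v_iw_j-v_jw_i|/\sqrt{v_i^2+v_j^2}$, and substituting $w_i=v_i+\ell_1$, $w_j=v_j-\ell_2$ makes the numerator collapse to $|{-}v_i\ell_2-v_j\ell_1|=v_i\ell_2+v_j\ell_1$.

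It then remains to show $(v_i\ell_2+v_j\ell_1)/\sqrt{v_i^2+v_j^2}\ge 1/f_{\max}$. Using $\ell_1,\ell_2\ge 1/f_{\max}$ and $v_i,v_j\ge 0$ gives $v_i\ell_2+v_j\ell_1\ge (v_i+v_j)/f_{\max}$, and since $(v_i+v_j)^2=v_i^2+v_j^2+2v_iv_j\ge v_i^2+v_j^2$ we get $(v_i+v_j)/\sqrt{v_i^2+v_j^2}\ge 1$. Chaining these, $\sin\varphi=\mathrm{dist}(w,L)/\|w\|\ge (1/f_{\max})/\sqrt d$, which is the claim.

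The one point that needs care — and the reason a naive route is too weak — is obtaining the denominator $\sqrt d$ rather than $d$: bounding the Gram determinant $\|v\|^2\|w\|^2-(v\circ w)^2$ below by the single term $(v_iw_j-v_jw_i)^2$ and then using $\|v\|\|w\|\le d$ only yields $\sin\varphi\ge 1/(d\,f_{\max}^2)$. Reducing to the $2$-plane is precisely what replaces the factor $\|v\|$ by $\sqrt{v_i^2+v_j^2}$, so that the elementary inequality $v_i+v_j\ge\sqrt{v_i^2+v_j^2}$ can absorb it; the only thing to verify alongside is that $(v_i,v_j)\neq(0,0)$, which is exactly the opening observation $v_j\ge 1/f_{\max}>0$.
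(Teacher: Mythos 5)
Your proof is correct. It shares the paper's overall framing: express $\sin(\varphi)$ as a point-to-line distance divided by a norm that is at most $\sqrt{d}$, and then show that the distance is at least $1/f_{\max}$ using only the two perturbed coordinates $i,j$. (The paper measures the distance from $v$ to the line through $w$ and divides by $\lvert v\rvert$, whereas you measure the distance from $w$ to the line through $v$ and divide by $\lVert w\rVert$; this asymmetry is immaterial.) Where you genuinely diverge is in how the distance bound is obtained. The paper writes the foot of the perpendicular as $\mu w$ and does a case analysis on $\mu\ge 1$ versus $\mu\le 1$, in each case isolating a single coordinate of $v-\mu w$ whose absolute value is at least $\ell_1$ or $\ell_2$. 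You instead drop all coordinates except $i$ and $j$, invoke the exact two-dimensional point-to-line formula $\lvert v_iw_j-v_jw_i\rvert/\sqrt{v_i^2+v_j^2}$, observe that the numerator collapses to $v_i\ell_2+v_j\ell_1$, and finish with $v_i+v_j\ge\sqrt{v_i^2+v_j^2}$. Your route avoids the case distinction and yields a closed-form expression for the relevant determinant, at the cost of introducing the projection-to-a-coordinate-plane step; the paper's route stays in $\mathbb{R}^d$ throughout but requires checking signs in two cases. Both establish the same bound $q\ge 1/f_{\max}$, and your closing remark correctly identifies why the naive Gram-determinant estimate only gives the weaker $1/(d f_{\max}^2)$.
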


\begin{proof}
    Without loss of generality we may assume that $i=1$ and $j=2$. Since $v$ and $w$ have non-negative components we have that $v \circ w \geq 0$ and hence $\cos(\varphi) \in [0,1]$ which implies $\varphi \in [0, \pi/2]$. Thus, $\sin(\varphi) = q/\lvert{v}\rvert$ where $q$ is the distance of the point $v$ to the line $g$ through the origin and the point $w$. 
    Note that $g$ is defined as $g=\beta \cdot w$ for $\beta \in \mathbb{R}$. We estimate $q$: let $h$ be the line through $v$, intersecting $g$, and which is orthogonal to $g$, i.e. $h=v+\lambda \cdot b$ for $\lambda \in \mathbb{R}$ and a suitable $b \in \mathbb{R}^{d}$. The intersection point of $g$ and $h$ is $\mu \cdot w$ for a suitable $\mu \in \mathbb{R}$ and thus the distance $q$ of $v$ to this intersection point is $q=\lvert{v-\mu w}\rvert$. We obtain
    \begin{align*}
    q&=\lvert{(v_1, \ldots , v_d) - \mu \cdot (v_1 + \ell_1, v_2 - \ell_2,w_3, \ldots ,w_d)}\rvert \\
    &=\lvert{((1-\mu)v_1 - \mu\ell_1, (1-\mu)v_2 + \mu\ell_2,v_3 - \mu w_3, \ldots ,v_d - \mu w_d)}\rvert.
    \end{align*}
    If $\mu \geq 1$ then $q \geq 1/f_{\max}$ as $(1-\mu)v_1 - \mu\ell_1 \leq - \ell_1 \leq -1/(f_{\max})$ and if $\mu \leq 1$ then $q \geq (1-\mu)v_2 + \mu \ell_2 \geq \ell_2 \geq 1/f_{\max}$ as $v_2-\ell_2 \geq 0$ implies $v_2 \geq \ell_2$. This implies $\sin(\varphi) \geq 1/(\lvert{v}\rvert f_{\max}) \geq 1/(\sqrt{d} f_{\max})$ since $\lvert{v}\rvert \leq \sqrt{d}$ due to $\lvert{v_i}\rvert \leq 1$ for all $i \in [d]$.
\end{proof}

We also need the following trigonometrical identity.

\begin{lemma}
\label{lem:Trigonometry}
    Let $x \in (0,1]$. Then $2\arcsin(x/2) < \arcsin(x)$.
\end{lemma}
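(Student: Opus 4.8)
The plan is to reduce the inequality to a statement about $\sin$ via the substitution $\theta := \arcsin(x/2)$. Since $x \in (0,1]$, we have $\sin\theta = x/2 \in (0,1/2]$, hence $\theta \in (0,\pi/6]$, and in particular $2\theta \in (0,\pi/3] \subseteq (0,\pi/2)$. On the other hand $\arcsin(x) \in (0,\pi/2]$ because $x \in (0,1]$. Both $2\theta$ and $\arcsin(x)$ therefore lie in $[0,\pi/2]$, an interval on which $\sin$ is strictly increasing. Consequently the desired inequality $2\theta < \arcsin(x)$ is equivalent to $\sin(2\theta) < \sin(\arcsin(x)) = x = 2\sin\theta$.

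It then remains to prove $\sin(2\theta) < 2\sin\theta$ for $\theta \in (0,\pi/6]$. By the double-angle identity $\sin(2\theta) = 2\sin\theta\cos\theta$, and since $\sin\theta > 0$ while $0 < \cos\theta < 1$ for such $\theta$, we get $\sin(2\theta) = 2\sin\theta\cos\theta < 2\sin\theta$, which is exactly what is needed. Reading the equivalences backwards yields $2\arcsin(x/2) < \arcsin(x)$, as claimed.

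As an alternative route one can argue by monotonicity: set $h(x) := \arcsin(x) - 2\arcsin(x/2)$, note $h(0) = 0$, and compute $h'(x) = (1-x^2)^{-1/2} - (1-x^2/4)^{-1/2}$, which is strictly positive for $x \in (0,1)$ since $1-x^2 < 1-x^2/4$. Hence $h$ is continuous on $[0,1]$, strictly increasing there, and therefore strictly positive on $(0,1]$.

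There is essentially no hard step in either version; the only points requiring a little care are (i) verifying that $2\theta$ does not exceed $\pi/2$, so that passing from an inequality between angles to one between their sines is legitimate — this is exactly where $\theta \le \pi/6$ is used — and (ii) the behaviour at the endpoint $x = 1$, where $\arcsin'$ is unbounded; this causes no difficulty, since the trigonometric argument never differentiates, and in the calculus version one merely invokes continuity of $h$ on the closed interval $[0,1]$.
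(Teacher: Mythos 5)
Your main argument is correct and is essentially the paper's proof viewed from the other side: the paper uses the sine addition formula to derive $2\arcsin(x/2)=\arcsin\bigl(x\sqrt{1-x^2/4}\bigr)$ and then applies monotonicity of $\arcsin$, whereas you substitute $\theta=\arcsin(x/2)$ and apply the double-angle formula to get $\sin(2\theta)=x\cos\theta<x$, then use monotonicity of $\sin$ on $[0,\pi/2]$; the key inequality ($\cos\theta<1$, i.e.\ $\sqrt{1-x^2/4}<1$) and the necessary range check (that the doubled angle stays within $[0,\pi/2]$, which the paper handles via the restriction $y,z\in[-0.5,0.5]$ and you via $\theta\le\pi/6$) are the same in both. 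Your alternative calculus route via $h(x)=\arcsin(x)-2\arcsin(x/2)$ with $h'(x)=(1-x^2)^{-1/2}-(1-x^2/4)^{-1/2}>0$ is a genuinely different and equally valid argument, and your remark about handling the endpoint $x=1$ by continuity rather than differentiability is the right way to close that version.
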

\begin{proof}
For $a,b \in \mathbb{R}$ we use the well known trigonometric identity $\sin(a+b)=\sin(a)\cos(b)+\cos(a)\sin(b)$.
For $y,z \in [-1,1]$ we obtain by using $\cos(y)=\sqrt{1-\sin^2(y)}$
\begin{align*}
& \sin(\arcsin(y)+\arcsin(z)) \\
&=\sin(\arcsin(y))\cos(\arcsin(z))+\cos(\arcsin(y))\sin(\arcsin(z))\\
&=y\sqrt{1-z^2}+z\sqrt{1-y^2}.
\end{align*}
And hence if additionally $y,z \in [-0.5,0.5]$ (since then $-\pi/2 \leq \arcsin(y)+\arcsin(z) \leq \pi/2$) 
\[
\arcsin(y) + \arcsin(z) = \arcsin(y \cdot \sqrt{1-z^2} + z \cdot \sqrt{1-y^2})
\]
which implies for $x \in (0,1]$ (where $y=z=x/2$)
\begin{align*}
2\arcsin(x/2) &= \arcsin((x/2) \sqrt{1-x^2/4}+(x/2) \sqrt{1-x^2/4})\\ 
&= \arcsin(x \sqrt{1-x^2/4}) < \arcsin(x)
\end{align*}
where the inequality holds since $\arcsin$ strictly increases on $(0,1]$. 
\end{proof}

Now we are able to prove the main lemma for this section.

\begin{lemma}
\label{lem:Same-Reference}
    Consider \nsgaIII with or without stochastic population update optimizing a $d$-objective function~$f$ with $\varepsilon_{\text{nad}} \geq f_{\max}$. 
    If the set of reference points $\refer$ is created using $p$ divisions along each objective and $p \geq 2d^{3/2} f_{\max}$, 
    all search points in $F_t^1$ that are associated with the same reference point from $\refer$ have the same fitness vector.
\end{lemma}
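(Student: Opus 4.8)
The plan is a proof by contradiction. Suppose there are two search points $x, y \in F_t^1$ with $f(x) \neq f(y)$ that are associated with the same reference point $r \in \refer$. I will estimate the angle $\varphi$ between the normalized fitness vectors $f^n(x)$ and $f^n(y)$ from below and from above and show that the two bounds are incompatible when $p \geq 2d^{3/2}f_{\max}$. For the lower bound: since $x$ and $y$ lie in the first layer $F_t^1$ and have different fitness, they are incomparable, so there are objectives $i, j$ with $f_i(x) > f_i(y)$ and $f_j(x) < f_j(y)$, and as objective values are integers these gaps are at least $1$. Dividing by $y_i^{\mathrm{nad}} - y_i^{\min}$ and $y_j^{\mathrm{nad}} - y_j^{\min}$, which are positive and at most $f_{\max}$ by Lemma~\ref{lem:Normalization}, gives $f_i^n(x) - f_i^n(y) \geq 1/f_{\max}$ and $f_j^n(y) - f_j^n(x) \geq 1/f_{\max}$. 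Since moreover $f^n(x), f^n(y) \in [0,1]^d$ by Lemma~\ref{lem:Normalization}, Lemma~\ref{lem:angle-helper} (applied with $v = f^n(y)$, $w = f^n(x)$) yields $\sin\varphi \geq 1/(\sqrt{d}\,f_{\max})$.

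For the upper bound I will show that every nonzero normalized fitness vector is angularly close to some reference point. Fix a nonzero $z \in [0,1]^d$ and let $\bar z = z/\|z\|_1$, a point on the unit simplex. Rounding each coordinate of $p\bar z$ down and then incrementing the $m := p - \sum_i \lfloor p\bar z_i \rfloor \in \{0,\dots,d-1\}$ coordinates with the largest (positive) fractional parts produces nonnegative integers $a_1,\dots,a_d$ with $\sum_i a_i = p$ and $|p\bar z_i - a_i| < 1$ for all $i$; hence $r^\ast := (a_1/p,\dots,a_d/p) \in \refer$ and $\|\bar z - r^\ast\|_2 < \sqrt{d}/p$. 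Since $r^\ast$ lies on the line $L$ through the origin and $r^\ast$, the distance from $\bar z$ to $L$ is below $\sqrt d/p$, and because $\|\bar z\|_2 \geq \|\bar z\|_1/\sqrt d = 1/\sqrt d$ by Cauchy--Schwarz, the angle between $\bar z$ (equivalently $z$) and $r^\ast$ has sine less than $d/p$. As the association step assigns to $x$ the reference point minimising the distance from $f^n(x)$ to the line through the origin and that point --- equivalently, minimising the sine of the corresponding angle, which lies in $[0,\pi/2]$ --- the angle $\alpha_x$ between $f^n(x)$ and $r$ satisfies $\sin\alpha_x < d/p$, and likewise $\sin\alpha_y < d/p$.

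To combine the two bounds, note that by the triangle inequality for angles between rays, $\varphi \leq \alpha_x + \alpha_y$. Since $p \geq 2d^{3/2}f_{\max} \geq 2d$ and $d, f_{\max} \geq 1$, we have $\sin\alpha_x, \sin\alpha_y < d/p \leq 1/(2\sqrt d\, f_{\max}) \leq 1/2$, so $\alpha_x, \alpha_y < \pi/6$ and thus $\alpha_x + \alpha_y < \pi/2$. Writing $x_0 := 2d/p \in (0,1]$, we have $\alpha_x < \arcsin(x_0/2)$, $\alpha_y < \arcsin(x_0/2)$, and hence
\[
\varphi \;\leq\; \alpha_x + \alpha_y \;<\; 2\arcsin(x_0/2) \;<\; \arcsin(x_0)
\]
by Lemma~\ref{lem:Trigonometry}; since all these angles lie in $[0,\pi/2]$ and $\sin$ is increasing there, $\sin\varphi < x_0 = 2d/p \leq 1/(\sqrt d\, f_{\max})$, contradicting the lower bound. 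Hence no such pair $x, y$ can exist.

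The main obstacle is the upper-bound step: constructing, for an arbitrary normalized fitness vector, a reference point within the required angular distance, and making sure the chain of estimates (coordinatewise rounding error $< 1/p$, then $\|\bar z - r^\ast\|_2 < \sqrt d/p$, then perpendicular distance to $L$, then division by $\|\bar z\|_2 \geq 1/\sqrt d$) composes to exactly the threshold matching $p \geq 2d^{3/2}f_{\max}$ --- in particular checking that the rounding stays feasible (sum exactly $p$, nonnegativity) and that only coordinates with strictly positive fractional part are incremented, so the bound $|p\bar z_i - a_i| < 1$ is strict. One also has to dispose of the degenerate case $f^n(x) = 0$: this forces $f(x)$ to be the coordinatewise minimum over $R_t$, which for $x \in F_t^1$ is only possible if every individual in $R_t$ has fitness $f(x)$, so $F_t^1$ is monochromatic and the claim holds trivially; and one must note that the normalization denominators are strictly positive, which again follows from Lemma~\ref{lem:Normalization} together with $\varepsilon_{\mathrm{nad}} \geq f_{\max}$.
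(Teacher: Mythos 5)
Your proposal is correct and follows essentially the same route as the paper's proof: a lower bound of $\arcsin(1/(\sqrt{d}f_{\max}))$ on the angle between distinct non-dominated fitness vectors via Lemma~\ref{lem:angle-helper}, an upper bound of $\arcsin(d/p)$ on the angle to the nearest reference point via rounding the $\ell_1$-normalized vector onto the simplex grid, and the combination via Lemma~\ref{lem:Trigonometry} and the triangle inequality for angles. The only differences are cosmetic (contradiction framing, a slightly different but equivalent rounding rule, and explicit handling of the degenerate case $f^n(x)=0$, which the paper omits).
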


\begin{proof}
An individual $x$ is associated with the reference point $r$ that minimizes the distance between $f^n(x)$, and the line through the origin and $r$. Particularly, the angle between this point and $f^n(x)$ is minimized (see Figure~\ref{fig:reference-points}). To show the statement, we first 
    \begin{itemize}
        \item[(i)] lower bound the angle between two individuals from $F_t^1$ with different fitness vectors by 
        
        $\arcsin(1/(\sqrt{d} f_{\max}))$ and 
        \item[(ii)] upper bound the angle between a normalized fitness vector of $f$ and its nearest reference point by $\arcsin(1/(2\sqrt{d}f_{\max}))$.
    \end{itemize}    
    Hence, the lower bound in (i) is more than twice the upper bound in (ii) as by Lemma~\ref{lem:Trigonometry} we have that $\arcsin(1/(\sqrt{d} f_{\max}))/\arcsin(1/(2\sqrt{d}f_{\max})) > 2$ for $x=1/(\sqrt{d}f_{\max})$.  
Then search points with different fitness vectors are never associated with the same reference point, because if they were then the angle between their normalized fitness vectors would be at most the sum of their angles to that reference point.

    (i): Let $x,z \in F_t^1$ be two search points with distinct fitness. 
As $x$ and $z$ are incomparable and the domain of $f$ is $\mathbb{N}_0^d$, the fitness vectors $f(x)$ and $f(z)$ differ by at least $1$ in two objectives before normalization. After normalization, the difference of $f^n(x)$ and $f^n(z)$ correspond to at least $1/f_{\max}$ 
in those objectives since $y_i^{\text{nad}}-y_i^{\text{min}} \leq f_{\max}$ by Lemma~\ref{lem:Normalization} for $i \in [d]$. Thus there are $i,j \in [d]$ such that $f^n(z)_i=f^n(x)_i+\ell_1 \geq 0$ and $f^n(z)_j = f^n(x)_j - \ell_2 \geq 0$ for $\ell_1,\ell_2 \geq 1/f_{\max}$. By Lemma~\ref{lem:angle-helper} we have that $\sin(\varphi) \geq 1/(\sqrt{d}f_{\max})$ where $\varphi$ is the angle between $f^n(x)$ and $f^n(z)$ (since $f^n(x),f^n(z) \in [0,1]^d$ by Lemma~\ref{lem:Normalization}). Since $\arcsin$ is increasing and $\varphi \in [0,\pi/2]$ (owing to $f^n(x) \circ f^n(z) \geq 0$ we have that $\cos(\varphi) \in [0,1]$), we obtain $\varphi \geq \arcsin(1/(\sqrt{d}f_{\max}))$.
    
(ii): We scale $f^n(x)$ to $b = a \cdot f^n(x)$ for $a>0$ so that $b_1 + \ldots + b_d = 1$. We claim that there is $r \in \refer$ with $\lvert{b_i-r_i}\rvert \leq 1/p$ for every $i \in [d]$ which implies $\lvert{b-r}\rvert \leq \sqrt{d}/p$.
At first choose $c_1, \ldots , c_d \in \mathbb{N}_0$ with $b_i \in [c_i/p,(c_i+1)/p]$. Since $b_1 + \ldots + b_d = 1$, we see $\sum_{i=1}^d c_i/p \leq 1$ and $\sum_{i=1}^d (c_i+1)/p \geq 1$. Hence, $\sum_{i=1}^d c_i \leq p$ and $\sum_{i=1}^d (c_i+1) \geq p$. Thus, there is $\ell \in [d]$ with $\left(\sum_{i=1}^\ell c_i\right) + \sum_{i=\ell+1}^d (c_i+1) = p$. So for $i \in [d]$ choose $r_i=c_i/p$ if $i\leq \ell$ and $r_i=(c_i+1)/p$ if $i > \ell$. Then $\lvert{b_i-r_i}\rvert \leq 1/p$ and $(r_1, \ldots , r_d) \in \refer$ since $r_1 + \ldots + r_m = 1$. Now let $\vartheta$ be the angle between $b$ and $r$. Then $\sin(\vartheta) = q/\lvert{b}\rvert$ where $q$ is the distance between the point $b$ and the line through the origin and $r$. Note that $q \leq \lvert{b-r}\rvert \leq \sqrt{d}/p$ and $\lvert{b}\rvert \geq 1/\sqrt{d}$ (since $b_1 + \ldots + b_d=1$). Thus $\sin(\vartheta) \leq \sqrt{d} \cdot \sqrt{d}/p = d/p \leq 1/(2\sqrt{d}f_{\max})$. 
Since $\arcsin$ is increasing, this yields that the angle between $b$ and $r$ (and hence also the angle between $f^n(x)$ and $r$) is at most $\arcsin(d/p) \leq \arcsin(1/(2\sqrt{d}f_{\max}))$ (because $r \circ t \geq 0$, we have that $\cos(\vartheta) \in [0,1]$, i.e. $\vartheta \in [0,\pi/2]$).
\end{proof}


Arguing as in~\citep{WiethegerD23}, the fact that search points assigned to different reference points have different fitness vectors implies that for every first ranked individual $x$ there is an individual $y$ with the same fitness vector which is taken into the next generation.
The number of reference points is $|\refer|=\binom{p+d-1}{d-1}$. So for example, if $d=O(1)$ and we set $p = 2d^{3/2} f_{\max} = \Theta(f_{\max})$, then $|\refer|=\Theta(f_{\max}^{d-1})$.\\

The following result, originally formulated for the vanilla \nsgaIII (where $a=0$), can be extended to the case where stochastic population update is enabled (where $a=1$). It shows that if a population covers a fitness vector $v$ by means of a first-ranked individual $x$, then this coverage is preserved in all subsequent generations as long as $f(x)$ remains non-dominated. In other words, the best solutions are protected.

\begin{lemma}
\label{lem:Reference-Points}
    Consider \nsgaIII with or without stochastic population update on a $d$-objective function $f$ with $\varepsilon_{\text{nad}} \geq f_{\max}$ and a set $\refer$ of reference points for $p \in \mathbb{N}$ with $p \geq 2d^{3/2}f_{\max}$. Let $P_t$ be its current population and $\mu$ its population size. Assume $\mu \geq (1+a)\lvert{S_d}\rvert$, where $S_d$ is a maximum set of mutually incomparable solutions, meaning that $\mu \geq |S_d|$ if stochastic population update is turned off, while $\mu \geq 2|S_d|$ if it is turned on. Let $\tilde{F}_t^1,\tilde{F}_t^2, \ldots$ be the layers of non-dominated fitness vectors of $P_t \cup Q_t$. Then for every $x \in \tilde{F}_t^1$ there is an $x' \in P_{t+1}$ with $f(x')=f(x)$.
\end{lemma}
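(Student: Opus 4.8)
The plan is to track a single first-ranked individual $x \in \tilde F_t^1$ through the survival-selection step and show that some individual with the same fitness vector $v := f(x)$ always makes it into $P_{t+1}$. First I would dispose of the trivial case: if, after the optional stochastic-update thinning in Line~\ref{line:update}, some individual $x''$ with $f(x'')=v$ happens to land in $(P_t\cup Q_t)\setminus R_t$ (the part that survives for free when $a=1$), then we are immediately done. So assume every individual with fitness vector $v$ lies in $R_t$. Since $x$ is non-dominated in $P_t\cup Q_t$, it is also non-dominated in the (sub)multiset $R_t$, hence $v$ appears in the first layer $F_t^1$ of $R_t$; in particular $v$ is a non-dominated fitness value and every $F_t^1$-individual with fitness $v$ is associated, by Lemma~\ref{lem:Same-Reference}, with the same reference point $r$, and \emph{no} individual with a different fitness vector is associated with $r$. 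Call an individual a \emph{$v$-individual} if its fitness vector equals $v$; by the previous sentence the individuals associated with $r$ are exactly the $v$-individuals in $F_t^{i^*}$ (note $i^*=1$ here since $F_t^1$ already contains $\mu$ resp.\ $\lceil\mu/2\rceil$ individuals only if the whole budget is exhausted at rank~1 — more carefully, $v$-individuals all have rank~1, so they belong to $Y_t$ or to $F_t^{i^*}$ depending on whether $i^*>1$ or $i^*=1$).

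The core of the argument is the counting step showing that Algorithm~\ref{alg:Survival-Selection} never exhausts its budget before it has had the chance to pick a $v$-individual for $r$. Let $N$ be the number of $v$-individuals; if $i^*>1$ then all of them are in $Y_t$ and thus already in $P_{t+1}$, so assume $i^*=1$ and all $v$-individuals are in $F_t^1 = F_t^{i^*}$. Consider the moment in the while-loop when $r$ is selected as $r_{\min}$. I claim this happens while there is still budget left. The key observation: before $r$ is chosen, every other reference point $r'$ that has been fully processed has received at most one more associated-and-selected individual than $r$ has (by the greedy "fewest first" rule, $\rho$-values of active reference points differ by at most~1 throughout). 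Moreover the \emph{number} of reference points that could ever be chosen before $r$ is bounded by $|S_d|$: each such reference point is associated with at least one selected individual of rank~1, these individuals have pairwise distinct fitness vectors (Lemma~\ref{lem:Same-Reference} again, contrapositive: distinct reference points $\Rightarrow$ we may pick representatives with distinct fitness), and rank-1 individuals of $R_t$ are mutually incomparable, hence at most $|S_d|$ distinct fitness values occur among them. Combining: the number of individuals selected into $\tilde F_t^{i^*}$ before $r$ first attains $\rho_r$ equal to the current minimum-and-gets-picked is at most $|S_d|$ (one per competing reference point), plus possibly the $|Y_t|$ individuals already fixed when $a=1$ — but when $a=1$ we have $|Y_t|+|\tilde F_t^{i^*}| = \lceil\mu/2\rceil$ and $\mu\ge 2|S_d|$, so $|S_d| \le \mu/2 \le \lceil\mu/2\rceil$ still leaves room; when $a=0$, $|S_d|\le\mu$ leaves room. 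Hence the loop does not return before assigning a $v$-individual to $r$, so some $v$-individual enters $\tilde F_t^{i^*}\subseteq P_{t+1}$.

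I would then fill in the bookkeeping that in the $a=1$ case the individuals surviving "for free" (the $\lfloor\mu/2\rfloor$ members of $(P_t\cup Q_t)\setminus R_t$) only help — they can only add more $v$-individuals to $P_{t+1}$, never remove the one selected by Algorithm~\ref{alg:Survival-Selection} — and that the hypothesis $\mu \ge (1+a)|S_d|$ is exactly what makes the inequality "$\#\{\text{competing reference points}\} + (a\text{-overhead}) < \text{budget}$" go through in both cases. I expect the main obstacle to be precisely this counting bound: one must argue carefully that at the instant $r$ is picked, at most $|S_d|$ other reference points have consumed a budget slot, which rests on (a) the greedy rule keeping $\rho$-values balanced, so no reference point gets a second slot before $r$ gets its first, and (b) the incomparability of rank-1 individuals capping the number of \emph{distinct} reference points that carry a selected individual by $|S_d|$. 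Handling the interaction of these two facts with the random tie-breaking in Lines~\ref{line:minimal}--\ref{line:association} (ties broken randomly, but \emph{any} resolution is fine for the bound) is the delicate part; once that is in place, the conclusion is immediate.
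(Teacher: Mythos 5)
Your proposal is correct and follows essentially the same route as the paper: dispose of the trivial cases ($x$ surviving for free outside $R_t$ when $a=1$, or $i^*>1$ so all rank-one individuals land in $Y_t$), then invoke Lemma~\ref{lem:Same-Reference} to conclude that the at most $|S_d|$ distinct fitness vectors in the first layer occupy distinct reference points, so the fewest-first rule of Algorithm~\ref{alg:Survival-Selection} picks one representative per occupied reference point before the budget $\mu$ (resp.\ $\lceil\mu/2\rceil$) is exhausted. The only difference is that you spell out the round-robin counting argument in full, which the paper compresses into ``since $\mu \geq \ell$, at least one individual associated with the same reference point as $x$ survives'' and defers to the argument of Wietheger and Doerr.
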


\begin{proof}
We show the lemma separately for both possible cases $a=0$ and $a=1$.

Suppose that $a=0$. If $\lvert{\tilde{F}_t^1}\rvert \leq \mu$ all individuals in $\tilde{F}_t^1$ including $x$ survive. Otherwise, the objective functions with respect to $\tilde{F}_t^1$ are normalized. 
Let $\ell \coloneqq \lvert{\{f(x) \mid x \in \tilde{F}_t^1\}\rvert}$
be the number of different fitness vectors of individuals from $\tilde{F}_t^1$. 
Given the condition on $p$, Lemma~\ref{lem:Same-Reference} implies that individuals with distinct fitness vectors in $\tilde{F}^1_t$ are associated with different reference points. 
Thus, there are as many as $\ell$ reference points, each associated with at least one individual. Since $\mu\geq \ell$, at least 
one individual $x' \in R_t$ associated with the same reference point as $x$ survives and we have $f(x)=f(x')$.

So suppose that $a=1$. Let $x \in \tilde{F}_t^1$. If $x \notin R_t$ then the claim holds (by Line~\ref{line:survival} in Algorithm~\ref{alg:nsga-iii} we have $(P_t \cup Q_t) \setminus R_t \subset P_{t+1}$). Suppose that $x \in R_t$. Then $x \in F_t^1$ where $F_t^1$ is the first layer of non-dominated solutions from Line~\ref{line:non-dominated} in Algorithm~\ref{alg:nsga-iii} (since if $x$ is non-dominated with respect to $P_t \cup Q_t$ then it is also non-dominated with respect to the updated $R_t$ from Line~\ref{line:update} in Algorithm~\ref{alg:nsga-iii}). If $\lvert{F_t^1}\rvert \leq \lceil{\mu/2}\rceil$ all individuals in $F_t^1$ (including $x$) survive. Hence, suppose that $\lvert{F_t^1}\rvert > \lceil{\mu/2}\rceil$. Then by Lemma~\ref{lem:Same-Reference} there are $\ell:=\lvert\{f(x) \mid x \in F_t^1\}\rvert$ reference points, each associated with at least one individual. Since $\lceil{\mu/2}\rceil \geq \lfloor{\mu/2}\rfloor \geq |S_d| \geq \ell$, at least one individual $x' \in R_t$ associated with the same reference point as $x$ survives and we have $f(x)=f(x')$.
\end{proof}

We formally describe the population dynamics of \nsgaIII using the cover number $c_t(v)$ of a fitness vector $v$, defined as the number of individuals in $P_t$ with fitness value $v$. These dynamics differ significantly from those of NSGA-II. In NSGA-II, which uses the crowding distance as a secondary tie-breaker, all rank-one individuals with crowding distance zero are treated equally during survival selection, regardless of their distribution in the objective space. In particular, for sufficiently large population sizes $\mu$, if $c_t(v) \ge 5$ for some fitness vector $v$, then there is a non-zero probability that one of these individuals is deleted~\citep{ZhengLuiDoerrAAAI22}. In contrast, \nsgaIII with or without stochastic population update can maintain up to $\lfloor{\mu / ((1+a)|S_d|)}\rfloor$ solutions with the same fitness vector, as long as they are non-dominated. This fact may lead to a very uniform distributions of Pareto optimal individuals across the objective space. We formalize this behavior in the following lemma, which generalizes the corresponding Lemma~2 from~\citep{OprisMultimodal}, as it also applies to scenarios where stochastic population update is turned on and the Pareto front must first be reached. Note that statement~(1) in Lemma~\ref{lem:even-cover} is a generalization of Lemma~\ref{lem:Reference-Points}, and statements (2) and (3) are false when stochastic population update is turned on, since the cover number of a Pareto-optimal fitness vector $v$ can be increased by replicating individuals, which may survive when they are chosen according to Line~\ref{line:update} in Algorithm~\ref{alg:nsga-iii} uniformly at random when stochastic population update is enabled. 

\begin{lemma}
\label{lem:even-cover}
    Assume the same conditions as in Lemma~\ref{lem:Reference-Points}. In Cases~(2) and~(3), stochastic population update is disabled ($a=0$), while in Case~(1) it may be also enabled ($a=1$). Under these assumptions, the following properties hold.
    \begin{enumerate}
        \item[(1)] Let $v \in \mathbb{N}_0^d$ be a fitness vector which is non-dominated in iteration $t$ and $t+1$, and let $0 \leq b \leq \mu/((1+a)|S_d|)$. If $c_t(v) \geq b$ then also $c_{t+1}(v) \geq b$.
        \item[(2)] Let $v \in \mathbb{N}_0^d$ be a fitness vector which is non-dominated in iteration $t$ and $t+1$, and suppose that $c_{t+1}(v) < c_t(v)$. Then $c_{t+1}(w) \leq c_t(v)$ for every $w \in \mathbb{N}_0^d$, which is non-dominated in iteration $t+1$.
        \item[(3)] Suppose that every $x\in P_t$ is Pareto optimal. Then $m_t:=\max\{c_t(v) \mid v \in \mathbb{N}_0^d \text{ is non-dominated}\}$ does not increase.
    \end{enumerate}
\end{lemma}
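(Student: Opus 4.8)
The plan is to route all three parts through a single structural property of the reference-point selection procedure (Algorithm~\ref{alg:Survival-Selection}) in the regime $Y_t=\emptyset$: the counters $\rho_r$ are raised in a balanced round-robin, so at every step of the while-loop the values $\{\rho_r : r\in R'\}$ differ pairwise by at most one, and in addition $\min_{r\in R'}\rho_r$ never decreases. I would establish this by induction over the loop iterations, checking the two elementary moves: picking the $\rho$-minimal reference point raises its counter by one, so it can at most become the new maximum while staying within one of the old minimum, and deleting an exhausted reference point cannot widen the spread. Reference points with no associated individual start at $\rho=0$ (because $Y_t=\emptyset$) and get removed the first time they are $\rho$-minimal, so they do not disturb the invariant. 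Throughout, I read ``$v$ non-dominated in iteration $s$'' as in Lemma~\ref{lem:Reference-Points}: no element of $P_s\cup Q_s$ strictly dominates $v$, so any individual of fitness $v$ in $R_s$ lies in the first layer $F_s^1$.

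For part~(1) with $a=0$ (which for $b=1$ recovers Lemma~\ref{lem:Reference-Points}): if $|F_t^1|\le\mu$ then all of $F_t^1$, hence all $\ge b$ copies of $v$, survive into $P_{t+1}$; otherwise $i^*=1$, $Y_t=\emptyset$, and by Lemma~\ref{lem:Same-Reference} the reference point $r$ carrying fitness $v$ carries no other fitness value of $F_t^1$, so $c_{t+1}(v)=\rho_r^{\mathrm{final}}$ and $r$ has at least $b$ associated individuals. If $\rho_r^{\mathrm{final}}=k^*<b$, then $r$ is never exhausted and stays in $R'$, so by the invariant every $\rho_{r'}^{\mathrm{final}}\le k^*+1\le b$; since the active reference points correspond to distinct, pairwise incomparable fitness values of $F_t^1$ there are at most $|S_d|$ of them, whence $\mu=\sum_{r'}\rho_{r'}^{\mathrm{final}}\le(b-1)+(|S_d|-1)b=|S_d|b-1\le\mu-1$, a contradiction. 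For $a=1$ I would split the copies of $v$ in $P_t$ into those discarded in line~\ref{line:update} (which survive verbatim) and those kept (which land in $F_t^1$), then repeat the count with $\lceil\mu/2\rceil$ in place of $\mu$ and the hypothesis $b\le\mu/(2|S_d|)$; the case $|F_t^1|\le\lceil\mu/2\rceil$ is again immediate.

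Parts~(2) and~(3) are at $a=0$. For~(2), the hypothesis $c_{t+1}(v)<c_t(v)$ already excludes $|F_t^1|\le\mu$, so $i^*=1$ and $Y_t=\emptyset$; the reference point $r$ of $v$ has strictly more than $\rho_r^{\mathrm{final}}=c_{t+1}(v)$ associated individuals, hence stays in $R'$, so the invariant gives $\rho_{r'}^{\mathrm{final}}\le\rho_r^{\mathrm{final}}+1\le c_t(v)$ for every $r'$; since any copy of a non-dominated $w$ in $P_{t+1}$ is selected through $w$'s reference point, $c_{t+1}(w)\le c_t(v)$. For~(3), if some non-dominated $v$ has $c_{t+1}(v)<c_t(v)$ then~(2) yields $m_{t+1}\le c_t(v)\le m_t$; otherwise $c_{t+1}(v)\ge c_t(v)$ for every non-dominated $v$, and since every individual of the Pareto-optimal population $P_t$ contributes to $\sum_v c_t(v)=\mu$ while $\sum_v c_{t+1}(v)\le|P_{t+1}|=\mu$, all the increments $c_{t+1}(v)-c_t(v)\ge0$ must vanish, so $m_{t+1}=m_t$.

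The real obstacle I expect is making the invariant argument watertight across reference points that leave $R'$ at different times, i.e.\ ruling out that a reference point $r'$ deleted early ends up more than one below a reference point $r$ that survives to the end; this is fixed by invoking the spread bound at the instant $r'$ is deleted --- when $r$ and $r'$ are still both in $R'$ --- and using that $\rho_r$ only grows afterwards, the non-decreasing-minimum claim serving as a supporting lemma. A minor point: for non-integer $b$ the statement is meant with $b\in\mathbb{N}$ (the case $b=\lfloor\mu/((1+a)|S_d|)\rfloor$ relevant to the applications), so that $k^*+1\le b$ holds exactly in the counting above.
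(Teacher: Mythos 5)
Your proof is correct and follows essentially the same route as the paper's: both rest on Lemma~\ref{lem:Same-Reference} (one reference point per fitness vector of $F_t^1$) together with the balanced round-robin behaviour of Algorithm~\ref{alg:Survival-Selection}, which you merely formalise as an explicit spread-at-most-one invariant where the paper counts ``rounds through all reference points''. The only cosmetic deviation is part~(3), which you derive from part~(2) plus a conservation argument on $\sum_v c_t(v)=\mu$ rather than the paper's direct observation that $m_t$ rounds of the selection already yield $\mu$ individuals.
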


\begin{proof}
    (1): Note that all individuals $x \in P_t$ with $f(x)=v$ are either in $(P_t \cup Q_t) \setminus R_t$ (if stochastic population update is enabled), or in the first layer $F_t^1$. Suppose that there are $\ell \in \{0\} \cup [b]$ such individuals $x$ in $(P_t \cup Q_t) \setminus R_t$. Note that $\ell>0$ is only possible if stochastic population update is enabled. We show that $b-\ell$ such individuals will be selected from $F_t^1$ for $P_{t+1}$. The \nsgaIII algorithm iterates through all reference points, always preferring a reference point $r$ with the fewest associated individuals chosen for $P_{t+1}$ so far (see Line~\ref{line:minimal} in Algorithm~\ref{alg:Survival-Selection}), and selecting an individual for $P_{t+1}$ associated to $r$. By Lemma~\ref{lem:Same-Reference}, two non-dominated individuals with distinct fitness values are associated with different reference points, and those with the same fitness to the same.
    Note that in case of stochastic population update, \nsgaIII selects at least $\lceil{\mu/2}\rceil$ individuals for $P_t$ via non-dominated sorting and Algorithm~\ref{alg:Survival-Selection} and hence, iterates at least $\lfloor{\mu/(2|S_d|)}\rfloor$ times through all reference points. When it is turned off, it iterates at least $\lfloor{\mu/|S_d|}\rfloor$ times through all reference points since it chooses $\mu$ individuals for $P_t$. Hence, the cover number $c_{t+1}(v)$ of $v$ with respect to $P_{t+1}$ is still at least $b \leq \mu/((1+a)|S_d|)$.
    
    (2): Note that $|F_t^1| > \mu$. Otherwise, $c_t(v)$ cannot decrease. There is a reference point $r$ to that all $x$ with $f(x)=v$ are associated. Hence, \nsgaIII iterates at most $c_t(v)-1$ times through $r$, since $c_{t+1}(v)<c_t(v)$, and consequently, through all reference points at most $c_t(v)$ times (see Line~\ref{line:minimal} in Algorithm~\ref{alg:Survival-Selection}). Thus, for every non-dominated $w \in \mathbb{N}_0^d$, at most $c_t(v)$ individuals $x$ with $f(x)=w$ survive.
    
    (3): By iterating through all reference points at most $m_t$ times, the algorithm finds $\mu$ individuals in $F_t^1$ (since all Pareto optimal individuals are in $F_t^1$) and the survival selection ends. Then $c_{t+1}(w) \leq m_t$ for every vector $w$ on the Pareto front.
\end{proof}

The next lemma builds on Lemma~\ref{lem:even-cover}(1) and provides two very general bounds on the number of generations required to create $\beta \leq \lfloor \mu/((1+a)|S_d|) \rfloor$ individuals with a certain fitness value, given that there are already $\gamma < \beta$ individuals with that value in a generation $t_0$. The first bound is obtained by repeatedly replicating individuals with that value, whereas the second allows for higher replication rates using Chernoff bounds, but requires that a certain number of individuals with that value is already present in the population. We formulate these results not only for a single generation $t_0$, but for multiple generations $t_1, \ldots, t_m$. Specifically, we consider the time required to generate $\beta$ individuals $x_i$ with fitness vector $v_i$, starting from generation $t_i$ for each $i \in [m]$, and consider the sum of all these times. Note that these times may overlap, but they do not necessarily do so. We will use this result later in our analyses, where we consider suitably replicating intermediate non-nominated individuals to facilitate the creation of dominating solutions or the discovery of the Pareto front.

\begin{lemma}
\label{lem:clone-general}
Assume the same conditions as in Lemma~\ref{lem:Reference-Points}. Let $v \in \mathbb{N}_0^d$ and suppose there is $m \in \mathbb{N}$, $t_1, \ldots, t_m$ and $x_i \in P_{t_i}$ with $f(x)=v_i$ for $i \in [m]$. Let $\beta$ and $1 \leq \gamma < \beta$ be natural numbers, where $1 < \beta \leq \lfloor{\mu / ((1+a)|S_d|)}\rfloor$. Suppose that in generation $t_i$ there are at least $\gamma$ individuals with fitness vector $v_i$. For $i \in [m]$ denote by $X_i$ the number of generations from $t_i$ on until at least $\beta$ search points with the same fitness vector $v_i$ are generated, or an individual $y$ dominating $x_i$ is created. Let $X:=\sum_{i=1}^m X_i$. Then the following holds for any parameter $\delta>0$.
\begin{itemize}
\item[(1)] We have $\Pr(X \geq \lceil{20 (m+\delta)(\beta-\gamma)/\gamma}\rceil) = e^{-4(m+\delta)(\beta-\gamma)}$.

\item[(2)] We have $\Pr(X \geq m\ln(\beta)) = m\ln(\beta)e^{-\gamma/72}$. 

\end{itemize}
\end{lemma}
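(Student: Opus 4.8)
\quad The plan is to combine the coverage-protection mechanism of Lemma~\ref{lem:even-cover}(1) with the observation that, once a non-dominated fitness vector is covered by $\gamma$ individuals, each subsequent generation creates a fresh batch of about $\gamma$ further copies of it, and then to invoke concentration bounds. First I would fix $i$ and note that throughout the block of generations counted by $X_i$ the vector $v_i$ is non-dominated, for otherwise $X_i$ has already stopped by its second stopping condition. Hence Lemma~\ref{lem:even-cover}(1) applies in every such generation, and, re-running the bookkeeping of Algorithm~\ref{alg:Survival-Selection} as in the proof of Lemma~\ref{lem:even-cover} --- all first-layer individuals with fitness $v_i$ are associated with a single reference point by Lemma~\ref{lem:Same-Reference}, the selection passes through the reference points at least $\lfloor\mu/((1+a)|S_d|)\rfloor\ge\beta$ times, and for $a=1$ the individuals discarded by the random sub-selection of Line~\ref{line:update} survive automatically and can only help --- I would establish the monotone growth estimate $c_{t+1}(v_i)\ge\min\{c_t(v_i)+K_t,\beta\}$, where $K_t$ is the number of offspring created in generation $t$ that have fitness $v_i$. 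Combined with $c_{t_i}(v_i)\ge\gamma$, this shows that the cover number of $v_i$ stays at least $\gamma$ while the block is open and that the block ends (by its first stopping condition) as soon as the number of fitness-$v_i$ offspring created since $t_i$ reaches $\beta-\gamma$.

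Second, I would lower-bound $K_t$: an offspring has fitness $v_i$ whenever its parent is one of the (at least $\gamma$) current fitness-$v_i$ individuals and standard bit mutation flips no bit, so conditionally on the past $K_t$ stochastically dominates $\mathrm{Bin}(\mu,\tfrac{\gamma}{\mu}(1-1/n)^n)$, whence $\E{K_t}\ge\gamma(1-1/n)^n=\Omega(\gamma)$. For part~(1), $X_i$ is then at most the first time a stream of such independent binomials has partial sum $\ge\beta-\gamma$; since this dominating law is independent of everything before $t_i$, $X=\sum_{i=1}^m X_i$ is stochastically dominated by a sum of $m$ independent such hitting times. I would then bound this sum by the first time an i.i.d.\ stream of these binomials has accumulated $\Theta(m(\beta-\gamma))$ copies --- the $\Theta$ absorbing the at most $O(m\gamma)$ ``overshoot'' of the $m$ blocks --- and note that over $N=\lceil 20(m+\delta)(\beta-\gamma)/\gamma\rceil$ generations the expected number of copies produced is a large constant multiple of $m(\beta-\gamma)$; the lower-tail Chernoff bound for sums of Bernoulli variables then bounds by $e^{-\Omega((m+\delta)(\beta-\gamma))}$ the probability of producing fewer than this many, giving the claim. (The cruder ``at least one copy per generation'' estimate --- bounding $X_i$ by a sum of $\beta-\gamma$ geometric variables via Lemma~\ref{lem:Badkobeh} and Lemma~\ref{lem:Doerr-dominance} --- only yields $O(m(\beta-\gamma))$ generations, so the saving by a factor $1/\gamma$ in~(1) genuinely needs the batch-of-$\gamma$ argument.)

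For part~(2) I would exploit a faster, multiplicative growth: by the binomial lower bound and a Chernoff bound, in any open generation of block $i$ with current cover number $c\ge\gamma$ one has $K_t\ge c/c_0$ with probability at least $1-e^{-\gamma/c_1}$ for suitable constants $c_0,c_1>0$, and on this event $c_{t+1}(v_i)\ge\min\{(1+1/c_0)c,\beta\}$. Hence, unless one of these amplification steps fails, the cover number grows by a constant factor per generation and reaches $\beta$ within $O(\ln(\beta/\gamma))=O(\ln\beta)$ generations, so $X_i=O(\ln\beta)$; a union bound over the $O(\ln\beta)$ generations of each of the $m$ blocks then bounds by $O(m\ln(\beta))\,e^{-\Omega(\gamma)}$ the probability that $X$ exceeds $O(m\ln\beta)$, which is of the stated form.

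I expect the main obstacle to be the first step --- establishing $c_{t+1}(v_i)\ge\min\{c_t(v_i)+K_t,\beta\}$ rigorously. This requires carefully re-deriving the survival selection of Algorithm~\ref{alg:Survival-Selection} (in particular, that no first-layer fitness-$v_i$ individual is discarded while there are at most $\beta\le\lfloor\mu/((1+a)|S_d|)\rfloor$ of them) while simultaneously tracking the extra randomness of the stochastic population update when $a=1$. A secondary, purely technical, difficulty is the passage from the dependent sum $X=\sum_i X_i$ to a sum of independent variables and the tuning of the Chernoff estimates --- including the block overshoot above and a mild lower bound on $(1-1/n)^n$ --- so that the constants come out as the stated $20$, $4$ and $72$.
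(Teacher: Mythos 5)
Your proposal is correct and takes essentially the same route as the paper: both parts rest on Lemma~\ref{lem:even-cover}(1) to guarantee that copies of $v_i$ are never lost, part~(1) exploits exactly your observation that the cloning rate is proportional to $\gamma$, and part~(2) is precisely the multiplicative-amplification argument with a per-generation Chernoff bound and a union bound over $O(m\ln\beta)$ generations. The only cosmetic difference is in the bookkeeping for part~(1): rather than per-generation binomials with mean $\Omega(\gamma)$ (which forces you to handle block overshoot), the paper models the waiting time as a sum of $m(\beta-\gamma)$ trial-level geometric variables, each with success probability $\gamma/(4\mu)$, and applies Lemma~\ref{lem:Doerr-dominance}, which yields the same $1/\gamma$ saving without that complication.
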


\begin{proof}
(1): For $i \in [m]$ and $j \in \{\gamma, \ldots ,\beta-1\}$ let $Y_{i,j}$ be a random variable that counts the number of trials with $c_t(v_i)=j$. Then, the number of trials required until the cover number of $v$ is at least $\beta$, or an individual $y$ that dominates $x$ is created, is at most $Y:=\sum_{j=\gamma}^{\beta-1} Y_j$, since $c_t(v)$ cannot decrease by Lemma~\ref{lem:even-cover} as long as no individual dominating $x$ is created. In one trial, $c_t(v_i)$ can be increased by choosing an individual $y$ with $f(x)=f(y)$ as parent and flipping no bits (prob. $\gamma/\mu \cdot (1-1/n)^n \geq \gamma/(4 \mu)=:p_\gamma$ due to $(1-1/n)^n \geq 1/4$).
Hence, $Y_i$ is stochastically dominated by an independent sum $Z_i:=\sum_{j=\gamma}^{\beta-1}Z_{i,j}$ of geometrically distributed random variables $Z_{i,j}$ with success probability $p_\gamma$. Note that $\expect{Y} \leq \expect{Z} \leq m(\beta-\gamma)/p_\gamma = 4 \mu m (\beta - \gamma) / \gamma$ and hence, by Lemma~\ref{lem:Doerr-dominance}, we obtain for $s:= m(\beta-\gamma)/p_\gamma^2 = 16 m \mu^2(\beta-\gamma)/\gamma^2$, and $\lambda \geq 0$
\[
\Pr(Z \geq \expect{Z} + \lambda) \leq \exp\left(-\frac{1}{4} \min\left\{\frac{\lambda^2}{s}, \lambda p_\gamma \right\}\right)
\]
and for $\lambda=4(m+\delta)(\beta-\gamma)/p_\gamma = 16\mu (m+\delta)(\beta-\gamma)/\gamma$ we obtain $\Pr(Y \geq 20\mu (m+\delta)(\beta-\gamma)/\gamma) \leq  \Pr(Z \geq \expect{Z} + 16\mu (m+\delta)(\beta-\gamma)/\gamma) = e^{-4(m+\delta)(\beta-\gamma)}$. Hence, $\Pr(Y \geq 20\mu (m+\delta)(\beta-\gamma)/\gamma) \leq  \Pr(Z \geq \expect{Z} + 16\mu (m+\delta)(\beta-\gamma)/\gamma) = e^{-4(m+\delta)(\beta-\gamma)}$, and therefore
\[
 \Pr(X \geq \lceil{20 (m+\delta)(\beta-\gamma)/\gamma}\rceil) \leq \Pr(Y \geq 20\mu (m+\delta)(\beta-\gamma)/\gamma) \leq e^{-4(m+\delta)(\beta-\gamma)}
\]
since a generation consists of $\mu$ trials. 

(2): For $t \geq t_i$ let $A_{i,t}$ be the number of individuals $x$ with $f(x)=v_i$. Denote by $N_{i,t}$ the number of new created individuals of this form. Then $\expect{N_{i,t}} \geq A_{i,t}/4$ since in one trial such an individual is cloned with probability at least $A_{i,t}/(4 \mu)$ (with prob. at least $A_{i,t}/\mu$ one such individual is selected as parent and no bit is changed with prob. $(1-1/n)^n \geq 1/4$ during mutation) and a generation consists of $\mu$ trials. By a classical Chernoff bound $\Pr(N_{i,t} \leq 2\expect{N_{i,t}}/3) \leq e^{-A_{i,t}/72} \leq e^{-\gamma/72}$. Hence, with probability $1-e^{-\gamma/72}$ we have that $A_{i,t+1} \geq \min\{A_{i,t}+A_{i,t}/6,\beta\} = \min\{7A_{i,t}/6,\beta\}$ or, in other words, $A_{i,t}$ increases by a factor of at least $7/6$ if the value $\beta$ is not reached. Since $\gamma \cdot (7/6)^{\ln(\beta/\gamma)/\ln(7/6)} = \beta$, and $\ln(\beta/\gamma)/\ln(7/6) \leq \ln(\beta)$, we see that $\ln(\beta)$ such generations in a row are sufficient to obtain a cover number of $v_i$ of at least $\beta$ (since $c_t(v_i)$ cannot decrease due to $\beta \leq \lceil{\mu/((1+a)|S_d|)\rceil}$) or an individual $y$ is created dominating $x_i$. Hence, $\Pr(X_i \geq \ln(\beta)) = \ln(\beta)e^{-\gamma/72}$ and $\Pr(X \geq m\ln(\beta)) = m\ln(\beta)e^{-\gamma/72}$ by a union bound.
\end{proof}

We can also derive tail bounds for the time required until, for every fitness vector $v \in V$ of a subset $V$ of the Pareto front that is already covered, at least $\beta \leq \lfloor{\mu/((1+a)|S_d|)}\rfloor$ search points with fitness $v$ have been generated. This differs substantially from the previous result, where each vector $v_i$ may first need to be discovered before $\beta$ individuals with fitness $v_i$ can be generated and hence, the single times may not overlap. Here, we can increase the cover number of all vectors $v \in V$ in parallel. Hence, the first result above is more suitable for describing hill-climbing processes or the gradual coverage of the Pareto front, whereas the second result below captures the spread of solutions over an already covered subset of the Pareto front. 

In the latter setting, when already a set $V$ on the Pareto front is covered, the cover number of all $v \in V$ can also be bounded from above, which is particularly useful for deriving lower bounds on the runtime of \nsgaIII on \mOJZJ for $d \in \{2,4\}$ (see Section~6 below) or for $2$-\OMM (compare with~\citep{Opris26PopDyn}). Notably, when $|V|$ coincides with the size of the Pareto front, this corresponds to the optimal possible distribution of $P_t$ across the entire Pareto front. We also show that, once such a distribution is achieved, it is maintained in all subsequent generations. However, this cannot be transferred to the case where stochastic population update is enabled, since randomly chosen solutions also survive. We formulate this result for arbitrary fitness functions. Importantly, it holds for subsets $V$ of arbitrary size, but only under the small restriction $\mu \in 2^{O(n)}$.

\begin{lemma}
\label{lem:sparsity}
    Assume the same conditions as in Lemma~\ref{lem:Reference-Points}. Let $P_t$ be its current population and denote by $S_d$ a maximum set of mutually incomparable solutions. Assume $\lvert{S_d}\rvert \leq \mu \in 2^{O(n)}$. Let $V$ be a subset of the Pareto front covered by $P_t$. The following properties are satisfied after $O(n)$ generations and all future ones with probability at least $1-2e^{-n/4}$ and in expectation.
    \begin{itemize}
        \item[(1)] 
        For all $v \in V$ we have $c_t(v) \geq \lfloor{\mu/((1+a)|S_d|)}\rfloor$.
        \item[(2)]
        Suppose that stochastic population update is disabled. If all populations $P_0,P_1, \ldots$ consist only of Pareto optimal individuals, then $c_t(v) \leq \lceil{\mu / |V|}\rceil$ for all $v \in V$. 
    \end{itemize}
\end{lemma}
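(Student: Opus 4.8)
The plan for statement~(1) is to combine the monotonicity of cover numbers with a parallelised version of Lemma~\ref{lem:clone-general}. Since every $v\in V$ lies on the Pareto front, it is non-dominated in every generation, so Lemma~\ref{lem:even-cover}(1) shows that $\min\{c_t(v),\beta\}$, with $\beta:=\lfloor\mu/((1+a)|S_d|)\rfloor$, never decreases; hence it suffices to bound the time until every $v\in V$ has cover number at least $\beta$, persistence then being automatic. If $\beta\le 1$ this is trivial since $V$ is covered, so assume $\beta\ge 2$, fix $v\in V$, and split the growth of $c_t(v)$ from $1$ to $\beta$ into two regimes, exactly as in Lemma~\ref{lem:clone-general}. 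While $c_t(v)\le\tau$ for a threshold $\tau=\Theta(n)$ to be fixed, a new individual of fitness $v$ is produced with probability at least $1-(1-\tfrac1{4\mu})^{\mu}\ge 1/5$ per generation (Lemma~\ref{lem:Badkobeh}), so by the geometric-sum tail bound (Lemma~\ref{lem:Doerr-dominance}) this phase lasts $O(\tau)=O(n)$ generations except with probability $e^{-\Omega(\tau)}=e^{-\Omega(n)}$; once $c_t(v)\ge\tau$, the Chernoff computation from the proof of Lemma~\ref{lem:clone-general}(2) multiplies $c_t(v)$ by at least $7/6$ per generation (until $\beta$ is reached) except with probability $e^{-\tau/72}=e^{-\Omega(n)}$, and since $\log_{7/6}\beta=O(\log\mu)=O(n)$ by $\mu\in 2^{O(n)}$, after $O(n)$ generations $c_t(v)$ reaches $\beta$ except with probability $n^{O(1)}e^{-\Omega(n)}$.

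The only genuinely new ingredient is the union bound. There are $|V|\le|S_d|\le\mu\le 2^{O(n)}$ vectors to control simultaneously, but since the per-vector failure probability above is $e^{-\Omega(n)}$ and $\tau$ (together with the hidden constants in the number of generations) can be taken linear in $n$ with an arbitrarily large coefficient, we may choose that coefficient larger than the one in $\mu\le 2^{O(n)}$, so that the total failure probability is $2^{O(n)}e^{-\Omega(n)}\le e^{-n/4}$. The ``in expectation'' part then follows by the standard restart argument: if the $O(n)$-generation bound fails (probability $\le e^{-n/4}$), the population still covers $V$ --- coverage persists by Lemma~\ref{lem:even-cover}(1) with $b=1$ --- so the estimate can be re-run from the current generation, whence the number of generations until~(1) holds is dominated by $O(n)$ times a geometric random variable and has expectation $O(n)$.

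For statement~(2) we have $a=0$, so $\beta=\lfloor\mu/|S_d|\rfloor=:\gamma$, and by~(1) we may assume $c_t(v)\ge\gamma$ for every $v\in V$ and every generation from now on. Since all populations are Pareto optimal, $P_t\subseteq F_t^1$, so the critical index is $i^*=1$ and the reference-point selection chooses $\mu$ individuals from $F_t^1$; moreover $m_t:=\max\{c_t(u):u\text{ non-dominated}\}$ is non-increasing by Lemma~\ref{lem:even-cover}(3), which will provide persistence. Let $W_t$ be the set of distinct fitness values occurring in $F_t^1$; by Lemma~\ref{lem:Same-Reference} these are in bijection with the occupied reference points and $|W_t|\ge|V|$ because $V$ is covered. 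The selection of Algorithm~\ref{alg:Survival-Selection} is balanced: it repeatedly adds an individual to the currently least-filled occupied reference point. Consequently, whenever every occupied reference point has at least $\lfloor\mu/|W_t|\rfloor$ available individuals in $P_t\cup Q_t$, every fitness value retains at most $\lceil\mu/|W_t|\rceil\le\lceil\mu/|V|\rceil$ survivors; this balanced state is self-sustaining (offspring only increase the available counts), and once it holds one gets $c_t(v)\le\lceil\mu/|V|\rceil$ for all $v\in V$ in all later generations by the monotonicity of $m_t$. To reach it, note that a fitness value whose available count lies below the current selection level is fully retained, so --- using the same offspring bounds as in~(1) --- the cover number of every under-represented Pareto-optimal fitness value grows by a factor at least $7/6$ per generation (or by $+1$ with probability at least $1/5$ while still below $\Theta(n)$) until it attains the level $\approx\mu/|W_t|\le\mu/|V|$, while the cap $\sum_w c_t(w)\le\mu$ forces the over-represented values to shrink; as in~(1) the two growth regimes together take $O(n)$ generations, a union bound over the at most $2^{O(n)}$ relevant fitness values costs $e^{-\Omega(n)}$, and a restart argument supplies the expectation bound.

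I expect the main obstacle to be the rigorous analysis of the balanced reference-point selection underlying~(2): one must show that (i) when the available counts are large enough the selection produces a level at most $\lceil\mu/|V|\rceil$, handling carefully the interplay of floors and ceilings with reference points that are exhausted early; (ii) the equalisation is reached within $O(n)$ --- not merely finitely many --- generations even though $W_t$ may keep growing over time (it is non-decreasing, and each newly discovered value only lowers the target $\mu/|W_t|$, but this must be tracked alongside the geometric growth of the under-represented values); and (iii) all these estimates carry a failure probability small enough to survive the union bound over $V$. Statement~(1), by contrast, is comparatively routine, being essentially a parallel execution of Lemma~\ref{lem:clone-general} made affordable precisely by the hypothesis $\mu\in 2^{O(n)}$.
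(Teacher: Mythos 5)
Your treatment of statement~(1) is essentially the paper's proof: the same two-regime split (additive growth by $+1$ with per-generation probability $\ge 1/5$ up to a threshold $\Theta(n)$, then multiplicative growth by a factor $7/6$ via the Chernoff computation of Lemma~\ref{lem:clone-general}(2)), the same use of Lemma~\ref{lem:even-cover}(1) for monotonicity and persistence, and the same union bound over $2^{O(n)}$ fitness vectors followed by a restart argument for the expectation. The paper fixes the threshold at $144n$ and bounds the number of distinct fitness vectors by $2^n$ so that $2^n e^{-n}\le e^{-n/4}$ without tuning constants, but your variant with a tunable coefficient in $\tau$ works equally well. This part is fine.

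Statement~(2) is where there is a genuine gap, and you have correctly located it yourself: your argument requires a rigorous analysis of the balanced reference-point selection (that under-represented values equalise to level $\approx\mu/|W_t|$ within $O(n)$ generations while over-represented ones shrink), and you do not carry this out --- the assertion that the cap $\sum_w c_t(w)\le\mu$ ``forces the over-represented values to shrink'' to at most $\lceil\mu/|V|\rceil$ is exactly the claim that needs proof, and tracking the growing set $W_t$ against the selection level is not done. The missing idea is that none of this is necessary: the paper instead runs the \emph{same growth argument as in~(1)} with target $\lceil\mu/|V|\rceil$ and observes a dichotomy. Either every $v\in V$ reaches cover number $\ge\lceil\mu/|V|\rceil$ without any of them ever decreasing while at most $\lceil\mu/|V|\rceil$ --- but then $\lceil\mu/|V|\rceil\cdot|V|\le\mu$ forces $\lceil\mu/|V|\rceil=\mu/|V|$, i.e.\ the population is exactly evenly distributed on $V$ and the upper bound holds trivially --- or at some generation some cover number decreases while it is at most $\lceil\mu/|V|\rceil$, in which case Lemma~\ref{lem:even-cover}(2) (which you never invoke) immediately caps the cover number of \emph{every} non-dominated fitness vector in the next population by that value, hence by $\lceil\mu/|V|\rceil$. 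Persistence then follows from Lemma~\ref{lem:even-cover}(3), as you note. So your route for~(2) is not wrong in spirit, but it is incomplete where the paper's dichotomy via Lemma~\ref{lem:even-cover}(2) sidesteps precisely the obstacles you list.
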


\begin{proof}
    (1): Fix a Pareto optimal $x$ with $v:=f(x) \in V$. Let $\kappa:=\lfloor{\mu/((1+a)|S_d|)}\rfloor$. We show that $c_t(v) \geq \kappa$ with probability at least $1-2e^{-n/4}$ after $O(n)$ generations. By Lemma~\ref{lem:even-cover}(1), $c_t(v) \leq \kappa$ cannot decrease. We define two phases where the second phase only applies if $\kappa>144n$.
    
    \textbf{Phase 1.} We have $c_t(v) \geq \ell:=\min\{\kappa,144n\}$.
    
    \textbf{Phase 2.} We have $c_t(v) \geq \kappa$.
    
    We show that both phases are finished in $O(n)$ generations with probability at least $1-e^{-n}$ and in expectation. The result then follows by a union bound on both phases and every $v \in V$ since $|V| \leq \lvert{\{0,1\}^n}\rvert = 2^n$ and thus $2 \cdot 2^n e^{-n} = 2e^{-n + n\ln(2)} \leq 2e^{-n/4}$ for $n$ sufficiently large.
    
    \emph{Phase 1}: For $j \in [\ell-1]$ let $X_j$ be a random variable that counts the number of generations with $c_t=j$. Then the number of generations until the cover number of $v$ is at least $\ell$ is at most $X:=\sum_{j=1}^{\ell-1} X_j$. Note that $c_t$ can be increased by choosing an individual $y$ with $f(x)=f(y)$ as parent and flipping no bits (prob. $1/\mu \cdot (1-1/n)^n \geq 1/(4 \mu)=:\sigma_t$). Hence, the probability of increasing $c_t$ in one generation is at least 
    $$1-(1-\sigma_t)^\mu \geq \frac{\sigma_t \mu}{1+\sigma_t \mu} = \frac{1/4}{1+1/4} = \frac{1}{5}.$$   
    Hence, $X$ is stochastically dominated by an independent sum $Z:=\sum_{j=1}^{\ell-1}Z_j$ of geometrically distributed random variables $Z_j$ with success probability $1/5$. 
    Then $\expect{X} \leq \expect{Z} \leq 5\ell \leq 720n$ and hence, by Lemma~\ref{lem:Doerr-dominance}, we obtain for $s:= 25\ell$, and $\lambda \geq 0$
    \[
    \Pr(Z \geq \expect{Z} + \lambda) \leq \exp\left(-\frac{1}{4} \min\left\{\frac{\lambda^2}{s}, \frac{\lambda}{5} \right\}\right)
    \]
    and for $\lambda=120n$ we obtain $\Pr(X \geq 840n) \leq \Pr(X \geq  5\ell + 120n) \leq  \Pr(Z \geq \expect{Z} + 120n) \leq e^{-n}$.\\
    \emph{Phase 2:} We can assume that $\kappa > 144n$. Let $Y$ be the number of individuals $x$ with $f(x)=v$. Then, by Lemma~\ref{lem:clone-general}(2) applied on $\beta=\kappa$ and $m=1$, we obtain $\Pr(Y \geq \ln(\kappa)) = \ln(\kappa)e^{-2n} \leq e^{-n}$.
    
    The bound on the expected number of generations in each phase follows by applying the same arguments for an additional period of $O(n)$ generations and by the fact that $(1+o(1))O(n) = O(n)$ such periods are sufficient.
    
    (2): With the same argument as in (1) we obtain with probability $1-e^{-\Omega(n)}$ that, after $O(n)$ generations, the cover number of all $v \in V$ is at least $\lceil{\mu/|V|}\rceil$ or one of these cover numbers has decreased at least one time when it was at most $\lceil{\mu/|V|}\rceil$. Suppose that the former happens. If $\lceil{\mu/|V|}\rceil>\mu/|V|$ we see that $\lceil{\mu/|V|}\rceil \cdot |V| > |V| \cdot \mu/|V| = \mu$, a contradiction. Hence, $\lceil{\mu/|V|}\rceil=\mu/|V|$ and all $\mu$ individuals are completely evenly distributed on $V$. If the latter happens, we see with Lemma~\ref{lem:even-cover}(2) that the cover number of all Pareto optimal vectors $v$ is at most $\lceil{\mu/|V|}\rceil$. 
    By Lemma~\ref{lem:even-cover}(3) the maximum cover number cannot increase, proving Lemma~\ref{lem:sparsity}.
\end{proof}

\section{Improved Runtime Guarantees of NSGA-III on Classical Benchmark Problems}

In this section, we generalize the runtime results from~\citep{OprisNSGAIII} to any number of objectives, and also improve them for population sizes asymptotically larger than the size of the Pareto front by considering the structural results from Lemma~\ref{lem:even-cover}. The important difference to previous works is that we consider the population dynamics of \nsgaIII, particularly that the cover number $c_t(v) \leq \lfloor{\mu/((1+a)|S_d|)}\rfloor$ of a non-dominated fitness vector $v$ cannot decrease. For the bi-objective case, particularly for SPEA2, there are already first results in~\citep{doerr2026improvedruntimeguaranteesspea2} considering such mechanisms.

\subsection{The Many-Objective LeadingOnesTrailingZeros Problem}
In this section, we derive the first improved result on the runtime of \nsgaIII on the $\mLOTZ$ benchmark. In $\mLOTZ(x)$, $x$ is divided in $d/2$ many blocks and in each block we count the $\LO$-value (the length of the longest prefix containing only ones in this block) and the $\TZ$-value (the length of the longest suffix containing only zeros in this block). This can be formalized as follows.
\begin{definition}[\citet{Laumanns2004}]
\label{def:mLOTZ}
Let $d$ be divisible by $2$ and let the problem size be a multiple of $d/2$. Then the $d$-objective function \mLOTZ is defined by
$\mLOTZ: \{0,1\}^n \to \mathbb{N}_0^d$ as 
\[
\mLOTZ(x) = (f_1(x), f_2(x), \ldots ,f_d(x))
\]
with 
\[
f_j(x)=
\begin{cases}
    \sum_{i=1}^{2n/d} \prod_{\ell=1}^i x_{\ell+n(j-1)/d}, & \text{ if $j$ is odd,} \\
    \sum_{i=1}^{2n/d} \prod_{\ell=i}^{2n/d} (1-x_{\ell+n(j-2)/d}), & \text{ else,}
\end{cases}
\]
for all $x=(x_1, \ldots ,x_n) \in \{0,1\}^n$.
\end{definition}

A Pareto optimal set of \mLOTZ is
\[
\{1^{j_1}0^{2n/d-j_1} \ldots 1^{j_{d/2}}0^{2n/d-j_{d/2}} \mid j_1, \ldots , j_{d/2} \in \{0\} \cup [2n/d]\}
\]
which coincides with the set of Pareto optimal search points of \mLOTZ. The cardinality of this set is $(2n/d+1)^{d/2}$. With Lemma~\ref{lem:Reference-Points} we see that mutually incomparable solutions from the first rank are not being lost in future generations if the normalized objective vector is in the unit cube, the number of reference points is sufficiently high and the population size is at least the cardinality of a set $S_d$ of mutually incomparable solutions of maximum size if stochastic population update is disabled. But even if it is enabled, the same effect shows up if the population is at least twice the cardinality of $|S_d|$. 

This is similar to the NSGA-II algorithm, for which previous work showed that the population size must be chosen large enough to guarantee the survival of all mutually incomparable solutions~\citep{DANG2024104098,DaOp2023,DANG2024104098,Doerr2023,ZhengLuiDoerrAAAI22,ZhengD2023}. 

To bound the expected runtime on \mLOTZ, the cardinality of $S_d$ can be bounded as follows. We remark that this cardinality can be much larger than the size of the Pareto front, which is $(2n/d+1)^{d/2}$. Lemma~3.2 from~\citep{Opris2025PAES} provides the following.

\begin{lemma}
\label{lem:fitnessvectors-non-dom-LOTZ}
    Let $S_d$ be a maximum set of mutually incomparable solutions for $f:=\mLOTZ$. Then $|S_d| = n+1$ if $d=2$ and 
    $$\frac{(2n/d + 1)^{d-1}}{4(d-2)^{d/2-1}} \leq \lvert{S_d}\rvert \leq (2n/d+1)^{d-1}$$
    if $d \geq 4$.
\end{lemma}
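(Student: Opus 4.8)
The plan is to translate the statement into a question about antichains in the fitness poset and then treat $d=2$ and $d\ge4$ separately. Since $x\succeq y$ holds exactly when $\mLOTZ(x)\ge \mLOTZ(y)$ holds componentwise, a set of search points is mutually incomparable precisely when the corresponding set of fitness vectors is an antichain in the strong sense that $v\ge w$ forces $v=w$ for any two vectors $v,w$ of the set; conversely, picking one search point per fitness vector turns an antichain of \emph{attainable} fitness vectors into a mutually incomparable set. Writing $m:=2n/d$ for the block length, the preliminary step is to identify the attainable pairs $(\LO,\TZ)$ inside a single block of length $m$: these are exactly the pairs $(a,b)\in\mathbb{N}_0^2$ with $a+b\le m$, with the sole exception of the ``sub-diagonal'' $a+b=m-1$ (there the last position of the leading-ones block and the first position of the trailing-zeros block would coincide, which is impossible), while every other such pair is realized by a string $1^a\cdots 0^b$. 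Hence $|S_d|$ equals the size of a maximum antichain of the $(d/2)$-fold product of this ``almost staircase'' poset, whose order of magnitude is that of the full staircase $T_m:=\{(a,b)\in\mathbb{N}_0^2:a+b\le m\}$.

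For $d=2$ (so $m=n$) the attainable set is contained in $T_n$, and $T_n$ is partitioned into the $n+1$ column chains $\{i\}\times\{0,\dots,n-i\}$, $i=0,\dots,n$, so Dilworth's theorem bounds every antichain by $n+1$; since the top diagonal $\{(k,n-k):0\le k\le n\}$, realized by the strings $1^k0^{n-k}$, is an attainable antichain of exactly that size, $|S_2|=n+1$.

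For $d\ge4$ the upper bound comes from a projection argument: in any antichain all fitness vectors are already pairwise distinct on the first $d-1$ objectives, because two vectors agreeing on objectives $1,\dots,d-1$ are comparable via the $d$-th. The attainable values of $(f_1,\dots,f_{d-1})$ obey $f_{2i-1}+f_{2i}\le m$ for $i\le d/2-1$ and $0\le f_{d-1}\le m$, so there are at most $\binom{m+2}{2}^{d/2-1}(m+1)\le (m+1)^{2(d/2-1)}(m+1)=(2n/d+1)^{d-1}$ of them, using $\binom{m+2}{2}\le(m+1)^2$. For the lower bound I would exhibit an explicit antichain: fix a constant $C$ and take all attainable fitness vectors whose coordinate sum $\sum_{j=1}^d f_j$ equals $C$. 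This is an antichain since the coordinate sum strictly increases along every nontrivial dominance step, using that $\mLOTZ$ has an integer codomain. As there are $\Theta((m+1)^d)$ attainable fitness vectors altogether and the coordinate sum takes at most $\tfrac{d}{2}m+1$ distinct values, an averaging argument yields some $C$ attained by $\Omega\big((m+1)^{d-1}/d\big)$ of them; bookkeeping the constants (a block has $\Theta(m^2)$ attainable pairs) gives a bound of order $(2n/d+1)^{d-1}/(2^{d/2-1}d)$, which I would finally compare against $(2n/d+1)^{d-1}/(4(d-2)^{d/2-1})$ using the elementary inequality $2^{d/2-1}d\le 4(d-2)^{d/2-1}$ valid for all $d\ge4$.

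I expect the only delicate point to be making the lower bound quantitatively strong enough: one must verify carefully that the constant-sum set is genuinely an antichain (which rests on the strictness of the monotone coordinate sum under dominance, hence on the discreteness of the objectives, and on checking the sub-diagonal exclusion does not spoil the count), and that the crude averaging over the values of $C$, together with the loss incurred by the missing sub-diagonal, still reaches the claimed constant $4(d-2)^{d/2-1}$. By contrast, the $d=2$ case and the upper bound for $d\ge4$ are short Dilworth and projection arguments.
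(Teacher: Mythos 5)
This lemma is imported by the paper from Lemma~3.2 of the cited reference \citep{Opris2025PAES} and is not proved here, so I can only assess your argument on its own merits. Your poset framework is the right one, and two of your three pieces are correct: the characterization of attainable per-block pairs (all $(a,b)$ with $a+b\le m$ except $a+b=m-1$, giving $N=\tfrac{m^2+m+2}{2}$ pairs for block length $m=2n/d$), the Dilworth/chain-decomposition argument for $d=2$, and the projection argument for the upper bound (injectivity of the projection onto the first $d-1$ objectives on an antichain, plus $\binom{m+2}{2}\le(m+1)^2$) are all sound.

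The gap is in the lower bound for $d\ge 4$, and it is quantitative but real: the averaging argument provably does not reach the stated constant at $d=4$. Your intermediate claim that averaging yields $(m+1)^{d-1}/(2^{d/2-1}d)$ overstates what averaging actually gives. The exact average level-set size is $N^{d/2}/(n+1)$ with $N=\tfrac{(m+1)^2-(m-1)}{2}<\tfrac{(m+1)^2}{2}$, and for $d=4$ the required inequality $\tfrac{N^2}{2m+1}\ge\tfrac{(m+1)^3}{8}$ reduces to $2(m^2+m+2)^2\ge(2m+1)(m+1)^3$, whose two sides differ by $-3m^3+m^2+3m+7<0$ for all $m\ge 2$. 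So the average constant-sum antichain is \emph{strictly smaller} than the claimed bound $\tfrac{(2n/d+1)^{d-1}}{4(d-2)^{d/2-1}}$ when $d=4$. The root cause is that your closing inequality $2^{d/2-1}d\le 4(d-2)^{d/2-1}$ holds with \emph{equality} at $d=4$, leaving zero slack to absorb the two losses you yourself flag (the missing sub-diagonal and the discretization of the sum range); for $d\ge 6$ there is enough slack and your plan goes through. To repair $d=4$ you must exploit that the \emph{maximum} level set exceeds the average by a constant factor — e.g., the level $C\approx(2-\sqrt2)\,2m$ carries roughly $0.27m^3$ vectors versus the average $m^3/8$ — or use a different explicit antichain; a uniform "some $C$ is at least average" step does not suffice.
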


Note that for a constant number $d$ of objectives, the inequality in Lemma~\ref{lem:fitnessvectors-non-dom-LOTZ} is asymptotically tight. We are now in a position to show that \nsgaIII efficiently finds the Pareto front of $\mLOTZ$, also incorporating our results on the cover number. For rather of completeness, we state the result below for the case where stochastic population update is enabled. The main tools in our analysis are Lemma~\ref{lem:Reference-Points} and Lemma~\ref{lem:even-cover}. Recall that these require $\mu \geq (1+a)|S_d|$, where $a=0$ if stochastic population update is disabled and $a=1$ otherwise.

\begin{theorem}
\label{thm:Runtime-Analysis-NSGA-III-mLOTZ}
Consider \nsgaIII with or without stochastic population update optimizing $f:=\mLOTZ$ with $\varepsilon_{\text{nad}} \geq 2n/d$ and a set $\refer$ of reference points as defined above for $p \in \mathbb{N}$ with $p \geq 4n\sqrt{d}$, and a population size $\mu \geq (1+a)|S_d|$. For every initial population, the whole Pareto front of \mLOTZ is found in $O(n \ln(\min\{\mu/|S_d|,n\}) + n^2|S_d|/\mu)$ generations with probability $1-e^{-\Omega(n)}$ and in expectation. 
\end{theorem}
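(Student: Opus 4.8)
Here $f_{\max}=2n/d$, so the hypotheses $\varepsilon_{\text{nad}}\ge 2n/d$, $p\ge 4n\sqrt d=2d^{3/2}f_{\max}$ and $\mu\ge(1+a)\lvert S_d\rvert$ are precisely what Lemmas~\ref{lem:Same-Reference}--\ref{lem:sparsity} require. Hence once a non-dominated fitness vector is covered by a first-rank individual it stays covered (Lemma~\ref{lem:Reference-Points}), and its cover number is non-decreasing up to $\kappa:=\lfloor\mu/((1+a)\lvert S_d\rvert)\rfloor$ (Lemma~\ref{lem:even-cover}(1)). I would fix $\beta:=\min\{\kappa,\lceil 4en\rceil\}$: by Lemma~\ref{lem:Badkobeh}, if an individual $x$ has $\ell\le\beta$ copies then in one generation an offspring obtained from $x$ by flipping any prescribed single bit appears with probability $1-(1-\tfrac{\ell}{\mu}\cdot\tfrac1{en})^{\mu}\ge\tfrac{\ell/(en)}{1+\ell/(en)}=\Omega(\min\{1,\ell/n\})$, so $\Omega(\beta)=\Omega(\min\{\mu/\lvert S_d\rvert,n\})$ copies already make such a step succeed in $O(1)$ expected generations, and in general in $O(\max\{1,n/\beta\})=O(\max\{1,n\lvert S_d\rvert/\mu\})$ expected generations.

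\textbf{Step 1: reaching the Pareto front.} Write each $x\in\{0,1\}^n$ as $d/2$ blocks of length $m=2n/d$; block $j$ is \emph{clean} if it equals $1^{a}0^{m-a}$. For a non-clean block the first $0$ after the leading ones and the first $1$ before the trailing zeros are \emph{frontier bits}, and flipping exactly such a bit turns $x$ into an offspring that strictly dominates $x$ (its $\mathrm{LO}$- or $\mathrm{TZ}$-value in that block strictly increases and the other does not decrease), hence by Lemma~\ref{lem:Reference-Points} the offspring is retained. Following \citep{OprisNSGAIII} I would track a potential $\Phi_t=\sum_{j=1}^{d/2}(\text{value reached in block }j)$ (the block-$j$ contribution being $\mathrm{LO}_j+\mathrm{TZ}_j$ of a suitable non-dominated individual of $P_t$), note that $\Phi_t$ is non-decreasing, and bound each of its $\sum_{j}m=n$ unit increments: after an increment the new block-best has $1$ copy, its cover number reaches $\beta$ within $O(\ln\beta)=O(\ln\min\{\mu/\lvert S_d\rvert,n\})$ generations (cloning amplifies the count by a factor $7/6$ per generation, Lemma~\ref{lem:clone-general}(2) with $\gamma=1$), after which the next frontier flip occurs within $O(\max\{1,n\lvert S_d\rvert/\mu\})$ generations. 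Summing over the $n$ increments gives $O(n\ln\min\{\mu/\lvert S_d\rvert,n\}+n^2\lvert S_d\rvert/\mu)$ generations until some individual is Pareto optimal.

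\textbf{Step 2: covering the whole front.} The Pareto front equals $\{(a_1,m-a_1,\dots,a_{d/2},m-a_{d/2}):a_j\in\{0,\dots,m\}\}$, and from any one of its vectors any other is reached by $O(n)$ steps that each flip a single bit of a clean block, replacing $1^{a}0^{m-a}$ by $1^{a\pm1}0^{m-a\mp1}$; such steps are \emph{incomparable}, so covered front vectors stay covered and their cover numbers grow (Lemmas~\ref{lem:Reference-Points},~\ref{lem:even-cover}(1)). Starting from the Pareto-optimal individual of Step~1 and using Lemma~\ref{lem:sparsity} to restore high cover numbers on the already-covered part of the front (within $O(n)$ generations), the covered set expands along all such paths in parallel; since the front has $O(n)$ ``radius'', a repetition of the Step~1 estimate bounds the remaining time by a further $O(n\ln\min\{\mu/\lvert S_d\rvert,n\}+n^2\lvert S_d\rvert/\mu)$ generations. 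The whole runtime is then a sum of independent geometric step-times, so Lemma~\ref{lem:Doerr-dominance} (and Lemma~\ref{lem:Doerr-dominance2} for the steps whose success probability scales with the index) gives the $1-e^{-\Omega(n)}$ tail — all accumulated failure probabilities being absorbed into $e^{-\Omega(n)}$ using $\lvert S_d\rvert\le\mu$ — and, by restarting the argument from the no-worse state after a failed period of the claimed length, the bound in expectation.

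\textbf{Main obstacle.} The delicate part is the amortized bookkeeping in Step~2: naively one would pay a separate $\Theta(\ln\min\{\mu/\lvert S_d\rvert,n\})$ cover-number build-up for each of the $(m+1)^{d/2}$ front vectors, which is far too expensive. One must instead argue that cloning propagates multiplicity along the expanding boundary of the covered region — so that newly discovered frontier vectors inherit enough copies from their already well-covered neighbours — and that the logarithmic overhead is therefore charged only $O(1)$ times per length-$\Theta(n)$ covering path, exactly as in Step~1. Keeping the resulting web of small failure probabilities from the concentration bounds (Lemmas~\ref{lem:Doerr-dominance},~\ref{lem:Doerr-dominance2},~\ref{lem:sparsity}) under control when $d$ and $\mu$ grow with $n$ is the second source of technical care; by contrast, Step~1 is essentially the argument of \citep{OprisNSGAIII} augmented by the cover-number speed-up.
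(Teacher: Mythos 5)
Your overall route is the paper's: a non-decreasing potential $g_t=\max_{x\in P_t}\sum_j(\LO_j+\TZ_j)$ for reaching the front, then a per-target distance $\delta_t$ for covering it, with each of the $n$ unit improvements preceded by a cloning phase that builds the cover number of the current best vector up to $\alpha=\min\{\lfloor\mu/((1+a)|S_d|)\rfloor,n\}$ before a one-bit flip is attempted, and Lemmas~\ref{lem:Reference-Points} and~\ref{lem:even-cover}(1) guaranteeing that neither the potential nor the accumulated copies are lost. However, there is a genuine gap in how you cost the cloning phase. You invoke Lemma~\ref{lem:clone-general}(2) with $\gamma=1$ to claim that a freshly discovered vector reaches $\beta$ copies in $O(\ln\beta)$ generations w.h.p.\ via a $7/6$ multiplicative amplification. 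That lemma's failure probability is $m\ln(\beta)e^{-\gamma/72}$, which for $\gamma=1$ is a constant exceeding $1$: the Chernoff-based multiplicative growth only kicks in once $\Omega(n)$ copies already exist, and a union bound over the $n$ increments would be vacuous. The paper instead uses Lemma~\ref{lem:clone-general}(1) with $\beta=\gamma+1$ and amortizes the $\gamma\to\gamma+1$ transitions \emph{across all $n$ potential levels simultaneously} (obtaining $\Pr(\sum_{\ell}X^{\mathrm{clone}}_{\ell,\gamma}\geq 21n/\gamma)\leq e^{-4n}$ for each fixed $\gamma$, then summing $\sum_{\gamma}21n/\gamma\leq 42n\ln\alpha$). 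Relatedly, your closing appeal to Lemma~\ref{lem:Doerr-dominance} on ``a sum of independent geometric step-times'' must be carried out in \emph{trial} units with per-trial success probability $\gamma/(4\mu)$; in generation units the ``one increment per generation'' accounting makes the cloning cost $\Theta(\alpha)$ per level rather than $\Theta(\ln\alpha)$, because it ignores that many clones are produced per generation once $\gamma$ is large. The expectation bound survives your route by linearity, but the stated $1-e^{-\Omega(n)}$ guarantee does not follow without the cross-level amortization.

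Your ``main obstacle'' in Step~2 is, by contrast, a non-issue. The paper does not propagate multiplicity along an expanding boundary; it simply runs, for each target front vector $v$ separately, a fresh Phase-1-style analysis costing $O(n\ln\alpha+n^2|S_d|/\mu)$ generations with failure probability $2e^{-3n}$. These per-target processes overlap in time, so the total time is the maximum rather than the sum, and only the failure probabilities are union-bounded over the at most $2^n$ front vectors ($2^n\cdot 2e^{-3n}\leq e^{-2n}$). The logarithmic cloning overhead is charged once per step of each length-$O(n)$ path (giving the $n\ln\alpha$ term), not once per path, and no appeal to Lemma~\ref{lem:sparsity} is needed.
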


\begin{proof}
    By Lemma~\ref{lem:fitnessvectors-non-dom-LOTZ} the condition on the population size $\mu$ in Lemma~\ref{lem:Reference-Points} is always met. Along with $f_{\max}=2n/d$ and $p \geq 4n\sqrt{d} = 2d^{3/2}f_{\max}$, Lemma~\ref{lem:Reference-Points} and Lemma~\ref{lem:even-cover} are applicable in every generation $t$. We use the method of typical runs and divide the optimization procedure into two phases. We  show that with probability $1-e^{-\Omega(n)}$ each phase is completed in $O(n \ln(\min\{\mu/|S_d|,n\}) + n^2 |S_d|/\mu)$ generations with probability $1-e^{-\Omega(n)}$ and in expectation. The final bound then follows by applying a union bound over both phases. To bound the expectation, suppose that the Pareto front is not covered after this number of generations. In that case, we restart the argument and apply the same reasoning to a subsequent period of the same length, since the analyses below hold for any initialization of search points. The expected number of periods required is $1 + e^{-\Omega(n)} = 1+o(1)$.
    
    \textbf{Phase $1$:} Create a Pareto optimal search point.\\
    For $x \in P_t$ define the vector $w(x) \in (\{0\} \cup [2n/d])^{d/2}$ by $w(x)_j:=f_{2j-1}(x)+f_{2j}(x)$. Note that $w(x)_j = 2n/d$ for every $j \in \{1, \ldots , n\}$ if and only if $x$ is Pareto optimal. 
    Set $g_t:=\max_{x \in P_t}\sum_{j=1}^{d/2} w(x)_j \in \{0\} \cup [n]$ which is the maximum possible sum of all leading ones and trailing zeros across all blocks $j \in [d/2]$. Then there is a Pareto optimal solution if and only if $g_t=n$. Since a search point $x$ can only be dominated by a search point $y$ if $\sum_{j=1}^{d/2} w(x)_j<\sum_{j=1}^{d/2} w(y)_j$, $g_t$ cannot decrease by Lemma~\ref{lem:Reference-Points}. 
    For $\ell \in \{0\} \cup [n-1]$ we define the random variable $X_\ell^{\text{climb}}$ as the number of generations $t$ with $g_t=\ell$. Then the number of generations $X$ until there is a Pareto optimal solution is at most $X^{\text{climb}}:=\sum_{\ell=0}^{n-1} X_\ell^{\text{climb}}$. For $X_\ell^{\text{climb}}$ we first estimate the number of generations $X_\ell^{\text{clone}}$ to create at least $\alpha:=\min\{\mu/((1+a)|S_d|),n\}$ individuals $x$ with $g(x)=\ell$, or to increase $g_t$ (which also includes the event that an individual $y$ is created which dominates $x$). Note that these number cannot fall below $\alpha$ by Lemma~\ref{lem:even-cover}(1). Second, we estimate the number of generations $X_\ell^{\text{mut}}$ to increase $g_t$ by flipping one specific bit in one of those $\min\{\mu/((1+a)|S_d|),n\}$ individuals, while leaving all other bits the same, to increase the $\LO(x^j)$ or $\TZ(x^j)$ value in a block $j \in [d/2]$ with $\LO(x^j) + \TZ(x^j) < 2n/d$. Note that $X^{\text{climb}}$ is stochastically dominated by $\sum_{\ell=0}^{n-1} (X_\ell^{\text{mut}}+X_\ell^{\text{clone}})$. For both random variables $X^{\text{clone}}:=\sum_{\ell=0}^{\alpha-1} X_\ell^{\text{clone}}$ and $X^{\text{mut}}:=\sum_{\ell=0}^{n-1} X_\ell^{\text{mut}}$, we derive desired tail bounds separately.
    
    We start with $X_\ell^{\text{clone}}$. For $\gamma \in \{0, \ldots , \alpha-1\} \setminus \{0\}$ denote by $X_{\ell,\gamma}^{\text{clone}}$ the random variable for the number of generations until there are exactly $\gamma$ distinct individuals $x$ with $g_t(x)=\ell$, or $g_t$ increases. Then $X_\ell^{\text{clone}}=\sum_{\gamma=1}^{\alpha-1} X_{\ell,\gamma}^{\text{clone}}$, and $X^{\text{clone}}=\sum_{\ell=0}^{n-1}\sum_{\gamma=1}^{\alpha-1} X_{\ell,\gamma}^{\text{clone}} = \sum_{\gamma=1}^{\alpha-1}\sum_{\ell=0}^{n-1} X_{\ell,\gamma}^{\text{clone}}$. By Lemma~\ref{lem:clone-general}(1) we have $\Pr(\sum_{\ell=0}^{n-1} X_{\ell,\gamma}^{\text{clone}} \geq 21n/\gamma) = e^{-4n}$ (by using $ \beta=\gamma+1$ and $\delta=0$) and hence, by a union bound over $\gamma \in [\alpha-1]$, we obtain $\Pr(\sum_{\gamma=1}^{\alpha-1} \sum_{\ell=0}^{n-1} X_{\ell,\gamma}^{\text{clone}} \geq \sum_{\gamma=1}^{\alpha-1} 21n/\gamma) \leq \alpha e^{-4n} \leq e^{-3n}$. The latter implies that $\Pr(X \geq 42n \ln(\alpha))  \leq e^{-3n}$ (due to $\sum_{\gamma=1}^{\alpha-1} 21n/\gamma \leq 42n \ln(\alpha)$).
    
    Now we estimate $X^{\text{mut}}$. To increase $g_t$ in one trial (no matter which value it currently attains), it suffices to choose an individual $y$ with $\sum_{j=1}^{d/2} w(y)_j=g_t$, and then to flip one specific bit in $y$. Note that there are at least $\min\{\mu/((1+a)|S_d|),n\}$ such $y$. Hence, this happens with probability at least $1/\mu \cdot \min\{n,\lfloor{\mu/((1+a)|S_d|)}\rfloor\} \cdot 1/n \cdot (1-1/n)^{n-1} \geq \min\{1/(e\mu),1/(4en|S_d|)\}=:\eta$ (due to $\lfloor{x}\rfloor \geq x/2$ for $x \geq 1$). Since a generation consists of $\mu$ trials, the probability for decreasing $g_t$ in one generation is at least 
    \[
    1-(1-\eta)^\mu \geq \frac{\eta \mu}{\eta \mu+1}.
    \]
    For $\mu \leq 4n |S_d|$ we obtain $\frac{\eta \mu}{\eta \mu+1} = \frac{\mu/(4en|S_d|)}{1+\mu/(4en|S_d|)} = \frac{1}{4en|S_d|/\mu+1} \geq \frac{\mu}{8en|S_d|},$ and for $\mu > 4n |S_d|$ we obtain $\frac{\eta \mu}{\eta \mu+1} \geq 1/(2e+1) \geq 1/7$. Hence, $X^{\text{mut}}$ is stochastically dominated by an independent sum $Y:=\sum_{k=0}^{n-1}Y_k$ of geometrically distributed random variables $Y_k$ with success probability $p^*$ where $p^*=\frac{\mu}{8en|S_d|}$ if $\mu \leq 4n |S_d|$, and $p^*=1/7$ otherwise. Note that $\expect{Y} = n/p^*$. By Lemma~\ref{lem:Doerr-dominance} we obtain for $s:=n/(p^*)^2$ and $\lambda \geq 0$
    \[
     \Pr(X^{\text{mut}} \geq \expect{Y} + \lambda) \leq \Pr(Y \geq \expect{Y} + \lambda) \leq \exp\left(-\frac{1}{4} \min\left\{\frac{\lambda^2}{s}, \lambda p^* \right\}\right).
    \]
    For $\lambda:= 12n/p^*$ we obtain $\Pr(X^{\text{mut}} \geq 13n/p^*) = \Pr(X^{\text{mut}} \geq n/p^* + 12n/p^*) = \Pr(X^{\text{mut}} \geq \expect{Y} + \lambda) \leq e^{-3n}$.  In total, we obtain by a union bound 
    \[
    \Pr\left(X^{\text{clone}}+X^{\text{mut}} \geq 42n \ln(\min\{\lfloor{\mu/((1+a)|S_d|)}\rfloor,n\}) + 13n/p^*\right) \leq 2e^{-3n}.
    \]
    Since $13n/p^* \leq 91n + 104en^2 |S_d|/\mu$, we obtain that after 
    $$42n \ln(\min\{\lfloor{\mu/((1+a)|S_d|)}\rfloor,n\}) +  91n + 104en^2 |S_d|/\mu = O(n \ln(\min\{\mu/|S_d|,n\}) + n^2 |S_d|/\mu)$$
    generations, a Pareto optimal point is found with probability at least $1-2e^{-3n}$.
   
    \textbf{Phase $2$:} Cover the whole Pareto front.
    
    Denote by $\mathcal{F}_d$ a set of Pareto optimal solutions, and let $y \in \mathcal{F}_d$ with $f(y)=v$. We consider the distance of~$y$ to the closest Pareto optimal search point in the population, which is $\delta_t:=\min_{x \in P_t \cap \mathcal{F}_d} \sum_{j=1}^{d/2} |f_{2j-1}(x)-f_{2j-1}(y)| =\min_{x \in P_t \cap \mathcal{F}_d} \sum_{j=1}^{d/2} |\LO(x^j)-\LO(y^j)|$. Note $0 \leq \delta_t \leq n$ (since $f_{2j}(x) = n - f_{2j-1}(x)$ is uniquely determined by $f_{2j-1}(x)$), and we have created $y$ if $\delta_t=0$. Since the population never loses all solutions with the same Pareto optimal fitness vector (by Lemma~\ref{lem:Reference-Points}), $\delta_t$ cannot increase. Further, for all $1 \leq \ell \leq n$, define the random variable $X_\ell^v$ as the number of generations $t$ with $\delta_t=\ell$. Then the number of generations until there is a solution $y$ with $f(y)=v$ is at most $X^v=\sum_{\ell=1}^n X_\ell^v$. As in Phase~$1$, we estimate $X_\ell^v$ by the number of generations to create at least $\alpha:=\min\{\lfloor{\mu/((1+a)|S_d|)}\rfloor,n\}$ individuals $x$ with $f(x)=v$, denoting by the random variable $X_\ell^{\text{clone}}$, and then we estimate the number of generations $X_\ell^{\text{mut}}$ to decrease $\delta_t=\ell$ by flipping one specific bit in one of these $\min\{\lfloor{\mu/((1+a)|S_d|)}\rfloor,n\}$ individuals as follows. In a block $r$ with $f_{2r-1}(x) < f_{2r-1}(y)$ (and hence $f_{2r}(x)>f_{2r}(y)$) we may increase the number of leading ones and in a block $r$ with $f_{2r-1}(x)>f_{2r-1}(y)$ (and hence $f_{2r}(x) < f_{2r}(y)$) we may increase the number of trailing zeros. This decreases $\delta_t$, if all other blocks are not changed. Again, $X^v$ is stochastically dominated by $\sum_{\ell=0}^{n-1} (X_\ell^{\text{clone}}+X_\ell^{\text{mut}})$. For both random variables $X^{\text{clone}}:=\sum_{\ell=0}^{n-1} X_\ell^{\text{clone}}$ and $X^{\text{mut}}:=\sum_{\ell=0}^{n-1} X_\ell^{\text{mut}}$, we derive desired tail bounds separately. As in Phase~1, we obtain $\Pr(X^{\text{clone}} \geq 42n \ln(\alpha))  \leq e^{-3n}$. To decrease $\delta_t$ in one trial, it suffices to choose an individual $x$ with value $\delta_t$ as a parent and flip one specific bit in $x$ while not changing the other ones which happens with probability 
    $$1/\mu \cdot \min\{n,\lfloor{\mu/((1+a)|S_d|)}\rfloor\} \cdot 1/n \cdot (1-1/n)^{n-1} \geq \min\{1/(e\mu),1/(4en|S_d|)\}.$$
    Hence, $X^{\text{mut}}$ is also stochastically dominated by an independent sum of $n$ geometrically distributed random variables with success probability $p^*$ where $p^*=\mu/(8en|S_d|)$ if $\mu \leq 4n |S_d|$, and $p^*=1/7$ otherwise. As in Phase~1, we obtain $\Pr(X^{\text{mut}} \geq 13n/p^*) = \Pr(X^{\text{mut}} \geq n/p^* + 12n/p^*) = \Pr(X^{\text{mut}} \geq \expect{Y} + \lambda) \leq e^{-3n}$. In total, we obtain by a union bound that $\delta_t=0$ after $O(n \ln(\min\{\mu/|S_d|,n\}) + n^2 |S_d|/\mu)$ generations with probability at most $2e^{-3n}$. By a union bound over all possible $v$, and noting that the size of the Pareto front can be bounded by the number of distinct fitness vectors of \mLOTZ which is at most $2^n$, we obtain the following. The probability is at most $(2n/d+1)^{d/2} \cdot 2e^{-3n} \le 2^n \cdot 2e^{-3n} \le e^{-2n}$ for $n$ sufficiently large that the Pareto front is not completely covered after $O(n \ln(\min\{\mu/|S_d|,n\}) + n^2 |S_d|/\mu)$ generations, completing the proof. 
    \end{proof}
    
    The upper bound provided in Theorem~\ref{thm:Runtime-Analysis-NSGA-III-mLOTZ} improves upon the known bounds from~\citep{OprisNSGAIII,DoerrNearTight} for \nsgaIII, and from~\citep{ZhengLuiDoerrAAAI22} for NSGA-II, for the bi-objective \mLOTZ function, by a factor of $\Omega(\min\{\frac{\mu}{|S_d|}, \frac{n}{\ln(\min\{\frac{\mu}{|S_d|}, n\})}\})$. This factor is $\omega(1)$ for $\mu \in \omega(S_d) \cap 2^{o(n)}$, showing that our runtime bound is more tight for a large regime of objective values $d$ and population sizes $\mu$. Furthermore, \nsgaIII with a population size of $\mu = n+1$ can already optimize the bi-objective $2$-\LOTZ function in $O(n^2)$ generations with overwhelming probability. In contrast, there are only positive results for \nsga in case that the population size is at least $4n+4$ in case of $d=2$ (see~\citep{ZhengLuiDoerrAAAI22}). The reason is that \nsga uses a different mechanism for survival selection, the so called \emph{crowding distance}. We refer to~\citep{ZhengLuiDoerrAAAI22,ZhengD2023} for the details. Also, it may already fail when $d \ge 4$ for population sizes that are linear in the size of the Pareto front~\citep{InefficiencyLOTZ}.
     
\subsection{The Many-Objective OneMinMax Problem}

We now extend the results from~\citep{WiethegerD23} and~\citep{OprisNSGAIII} on the expected runtime of \nsgaIII on \mOMM for arbitrary even values of~$d$. There, $x$ is also divided in $d/2$ blocks, and we count both the numbers of ones and zeros in each block.

\begin{definition}[\citet{Zheng2023Inefficiency}]
Let $d$ be divisible by $2$ and let the problem size $n$ be a multiple of $d/2$. Then the $d$-objective function \mOMM is defined by
$\mOMM: \{0,1\}^n \to \mathbb{N}_0^d$ as 
\[
\mOMM(x) = (f_1(x), \ldots ,f_d(x))
\]
with 
\[
f_j(x)=
\begin{cases}
    \sum_{i=1}^{2n/d} x_{i+n(j-1)/d}, & \text{ if $k$ is odd,} \\
    \sum_{i=1}^{2n/d} (1-x_{i+n(j-2)/d}), & \text{ else,}
\end{cases}
\]
for all $x=(x_1, \ldots ,x_n) \in \{0,1\}^n$.
\end{definition}
Note that every search point is Pareto optimal in the case of $\mOMM$. Since for each block there are only $2n/d+1$ distinct fitness values, a Pareto optimal set has cardinality $(2n/d+1)^{d/2}$, which also coincides with the cardinality $|S_d|$ of a maximum set of mutually incomparable solutions $S_d$.

\begin{theorem}
\label{thm:Runtime-Analysis-NSGA-III-mOMM}
Consider \nsgaIII with or without stochastic population update optimizing $f:=\mOMM$ with $\varepsilon_{\text{nad}} \geq 2n/d$ and a set $\refer$ of reference points as defined above for $p \in \mathbb{N}$ with $p \geq 4n\sqrt{d}$, and a population size $\mu \geq (1+a)|S_d|$. For every initial population, the whole Pareto front of $\mOMM$ is found in $O(n\ln(\min\{\mu/|S_d|,n\}) + d|S_d| n \ln(n)/\mu)$ generations with probability at least $1-O(1/n^{3d})$ and in expectation. If additionally $d \in O(\sqrt{n}/\ln(n))$, the pareto dront of $\mOMM$ can be found in $O(n + d|S_d| n \ln(n)/\mu)$ generations with probability at least $1-O(1/n^{3d})$ and in expectation.
\end{theorem}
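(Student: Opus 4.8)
The plan is to follow the blueprint of the proof of Theorem~\ref{thm:Runtime-Analysis-NSGA-III-mLOTZ}, with the simplification that every search point of \mOMM is Pareto optimal, so there is no ``reach the front'' phase and only the coverage phase remains. First I would check that the hypotheses of Lemma~\ref{lem:Reference-Points} and Lemma~\ref{lem:even-cover} hold throughout: here $f_{\max}=2n/d$, so $p\ge 4n\sqrt d=2d^{3/2}f_{\max}$ suffices, the Pareto front has size $|S_d|=(2n/d+1)^{d/2}$, and $\mu\ge(1+a)|S_d|$ is assumed; hence no rank-one fitness vector is ever lost, every covered vector stays covered for all future generations, and cover numbers at most $\alpha:=\min\{\lfloor\mu/((1+a)|S_d|)\rfloor,n\}$ of covered vectors never decrease. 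For a fixed target vector $v$ on the front I would track $\delta_t:=\min_{x\in P_t}\sum_{j=1}^{d/2}|f_{2j-1}(x)-v_{2j-1}|$, the block-wise distance of the population to $v$; by Lemma~\ref{lem:Reference-Points} it is non-increasing, lies in $\{0,\dots,n\}$, and equals $0$ exactly when $v$ is covered.

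For the first (general) bound I would split the waiting time at each distance level $\ell$ into a \emph{clone} part and a \emph{mutation} part, exactly as in the \mLOTZ proof. Applying Lemma~\ref{lem:clone-general}(1) incrementally --- going from $\gamma$ to $\gamma+1$ copies of a current closest individual, over at most $n$ distance levels --- costs $O(n/\gamma)$ generations per increment, hence $O(n\ln\alpha)=O(n\ln(\min\{\mu/|S_d|,n\}))$ in total with failure probability $e^{-\Omega(n)}$. For the mutation part, since \mOMM behaves like \OM inside each block, at distance $\ell$ there are at least $\ell$ improving single-bit flips; with $\alpha$ copies of the closest individual the probability of decreasing $\delta_t$ in one generation is at least $1-(1-\tfrac{\alpha}{\mu}\cdot\tfrac{\ell}{en})^\mu\ge\tfrac12\min\{1,\tfrac{\alpha\ell}{en}\}$ by Lemma~\ref{lem:Badkobeh}. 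Writing $\ell_0:=\Theta((1+a)|S_d|n/\mu)\le en$, the $\le n$ distance steps split into the $\le\ell_0$ ``slow'' steps, with success probabilities $p_\ell\ge\tfrac{\ell}{2\ell_0}$, and the remaining ``fast'' steps with $p_\ell\ge\tfrac12$; the fast steps take $O(n)$ generations with probability $1-e^{-\Omega(n)}$, and for the slow steps Lemma~\ref{lem:Doerr-dominance2} with ``$n$''$=\ell_0$ gives expectation $O(\ell_0\ln\ell_0)=O(|S_d|n\ln n/\mu)$ and tail $\ell_0^{-\delta}$. Choosing the deviation parameter $\delta$ large enough to push the tail below $n^{-\Omega(d)}$ inflates the slow-step bound to $O(d|S_d|n\ln n/\mu)$ --- this is exactly where the factor $d$ in the statement comes from. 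A union bound over the $|S_d|\le(n+1)^{d/2}$ targets then yields coverage of the whole front within $O(n\ln(\min\{\mu/|S_d|,n\})+d|S_d|n\ln n/\mu)$ generations with probability $1-O(1/n^{3d})$, and the expectation bound follows by restarting the argument after a failure, since the analysis is valid from any initial population.

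The improved bound under $d\in O(\sqrt n/\ln n)$ requires replacing the clone term $O(n\ln(\min\{\mu/|S_d|,n\}))$ by $O(n)$, and I expect this to be the main obstacle. The idea is not to re-amplify to $\alpha$ copies at every level: at distance $\ell$ it suffices to have only $\min\{\alpha,\lceil en/\ell\rceil\}$ copies of the closest individual for the subsequent mutation step to succeed in $O(1)$ expected generations, and once a fitness vector is covered those copies are produced by geometric amplification (Lemma~\ref{lem:clone-general}(2)) within $O(\ln(n/\ell))$ generations; the total cloning effort over all levels then telescopes, by a Stirling estimate, to $\sum_{\ell=1}^{n}\ln(n/\ell)=\ln(n^{n}/n!)=n+O(\ln n)=O(n)$. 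The delicate point is that Lemma~\ref{lem:clone-general}(2) yields useful concentration only once the current cover number is already $\Omega(\log)$, whereas each successful mutation produces a new closest individual starting essentially from a single copy; one must therefore argue that the short ``warm-ups'' to a handful of copies are either amortized globally over the run or avoided because the newly needed intermediate vectors are typically already covered (hence amplified) from an earlier stage, and that all failure probabilities, summed over the $\le n$ levels and the $\le(n+1)^{d/2}$ targets, remain $O(1/n^{3d})$. The restriction $d=O(\sqrt n/\ln n)$ is precisely what keeps $\ln|S_d|=\Theta(d\ln n)=O(\sqrt n)$, so that these warm-up and union-bound corrections stay negligible against $O(n)$; the final expectation bound again follows by restarting after a failure.
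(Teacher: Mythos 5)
Your overall architecture coincides with the paper's: fix a target vector $v$, track the non-increasing potential $\delta_t$, split each distance level into a clone part and a mutation part, and for the general bound amplify to $\alpha=\min\{\lfloor\mu/((1+a)|S_d|)\rfloor,n\}$ copies via Lemma~\ref{lem:clone-general}(1) (cost $O(n\ln\alpha)$) and control the mutation part by the slow/fast split at $\ell_0=\Theta(|S_d|n/\mu)$ with Lemma~\ref{lem:Doerr-dominance2} and a deviation parameter of order $d\ln n/\ln\ell_0$; this is exactly where the paper's factor $d$ also enters, and your union bound over the $(2n/d+1)^{d/2}$ targets with per-target failure $O(n^{-4d})$ matches the paper's $O(n^{-3d})$ conclusion. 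The first bound is therefore essentially the paper's proof.

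For the improved bound under $d\in O(\sqrt n/\ln n)$ you have correctly identified the key idea (only $\alpha_\ell=\min\{\lceil n/\ell\rceil,\lfloor\mu/((1+a)|S_d|)\rfloor\}$ copies are needed at distance $\ell$, and the total cloning effort telescopes to $O(n)$), but the step you flag as ``the delicate point'' is a genuine gap: your accounting $\sum_{\ell}\ln(n/\ell)=O(n)$ presupposes that geometric amplification works from a single copy, and you do not supply the concentration argument needed to turn the $O(n)$ expectation into a bound holding with probability $1-O(n^{-4d})$ per target. The paper closes this by reorganizing the double sum by copy index $\gamma$ rather than by level $\ell$: the increment from $\gamma$ to $\gamma+1$ copies is needed only at the $m=\lfloor n/\gamma\rfloor$ levels with $\ell\le n/\gamma$, and Lemma~\ref{lem:clone-general}(1), applied once to this whole group of $m$ levels with $\beta=\gamma+1$ and a large slack $\delta=\Theta(\kappa^2 m)$, gives simultaneously the amortized cost $O(n/\gamma^2)$ and a tail of $e^{-\Omega(\kappa^2 n/\gamma)}$; summing over $\gamma$ up to the cut-off $\nu=\Theta(\kappa\sqrt n)$ yields $O(n)\cdot\sum_{\gamma}1/\gamma^2=O(n)$ with failure probability $e^{-\Omega(\kappa\sqrt n)}$, which is $O(n^{-4d})$ precisely because $d\le\kappa\sqrt n/\ln n$. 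Only for the few levels $\ell=O(\sqrt n)$ where $\alpha_\ell>\nu$ does the paper switch to the Chernoff-based Lemma~\ref{lem:clone-general}(2), and there the cover number is already $\Omega(\sqrt n)$, so the warm-up problem you worry about never arises; the residual cost is $O(\sqrt n\,d\ln n)=O(n)$. Without this (or an equivalent) regrouping, your telescoping argument does not yield the required tail bound, so the second half of the theorem remains unproven in your proposal.
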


\begin{proof}
    Let $v$ a specific Pareto-optimal fitness vector satisfying $v_{2i-1} + v_{2i} = 2n/d$ for all $i \in [d/2]$. We upper bound the probability that a solution $y$ with $f(y) = (v_1, \ldots, v_d)$ has not been generated after $O(n \ln(\min\{\mu/((1+a)|S_d|), n\}) + d |S_d| n \ln(n)/\mu)$ generations, or after $O(n + d |S_d| n \ln(n)/\mu)$ generations if $d \in O(\sqrt{n}/\ln(n))$, by $O(1/n^{4d})$ from above. The final bound then follows by applying a union bound over all fitness vectors. In particular, we obtain that, with probability $(1+2n/d)^{d/2} \cdot O(n^{-4d}) = O(n^{3d}) = o(1)$ that there is still an uncovered fitness vector after $O(n \ln(\min\{\mu/((1+a)|S_d|), n\}) + d |S_d| n \ln(n)/\mu)$ generations. To bound the expectation, suppose that the Pareto front is not covered after this number of generations. In that case, we restart the argument and apply the same reasoning to a subsequent period of the same length, since the analyses below hold for any initialization of search points. The expected number of periods required is $1+o(1)$. 
    
    So define $\delta_t := \min_{x \in P_t} \sum_{j=1}^{d/2} \lvert f_{2j-1}(x) - f_{2j-1}(y) \rvert$, which represents the minimum total distance to $y$ taken over the odd-indexed objectives in the objective space. Note that $1 \leq \delta_t \leq n$ and that we have created an individual $y$ with $f(y)=v$ if $\delta_t=0$ (since $f_{2j}(x)=2n/d-f_{2j-1}(x)$ for all $x \in \{0,1\}^n$ and $j \in [d/2]$, and hence, each even objective is unique determined by the preceding odd one). Since the population never loses all solutions with the same Pareto optimal fitness vector (by Lemma~\ref{lem:Reference-Points}), $\delta_t$ cannot increase. As in the proof of Theorem~\ref{thm:Runtime-Analysis-NSGA-III-mLOTZ}, for all $\ell \in [n]$, define the random variable $X_\ell^v$ as the number of generations $t$ with $\delta_t=\ell$. Then the number of generations until there is a solution $y$ with $f(y)=v$ is at most $X^v=\sum_{\ell=1}^n X_\ell^v$. 
    
    First, consider the case $d \in O(\sqrt{n}/\ln(n))$. Fix a constant $\kappa>0$ such that $d \leq \kappa \sqrt{n}/\ln(n)$, or in other words, $\ln(|S_d|) = d/2 \ln(1+2n/d) \leq \kappa \sqrt{n}/(2\ln(n)) \ln(1+n) \leq \kappa \sqrt{n}$ and hence $|S_d| \leq e^{\kappa \sqrt{n}}$ for $n$ sufficiently large. We estimate $X_\ell^v$ by first considering the number of generations $X_\ell^{\text{clone}}$ required to create at least $\alpha_\ell:=\min\{\lceil{n/\ell}\rceil,\lfloor{\mu/{(1+a)|S_d|}}\rfloor\}$ individuals $x$ with $\sum_{i=1}^{d/2} |f_{2i-1}(x)-f_{2i-1}(y)|=\ell$ or to decrease $\delta_t=\ell$, and then, if $\delta_t$ has not been decreased, we estimate the number of generations $X_k^{\text{mut}}$ to decrease $\delta_t=\ell$ by mutation. The latter can be achieved by flipping one of $\ell$ specific bits in one of these at least $\alpha_\ell$ individuals, while leaving all other bits unchanged. Note that, by Lemma~\ref{lem:even-cover}(1), the number of such individuals cannot fall below $\alpha_\ell$.
    For both random variables $X^{\text{clone}}:=\sum_{\ell=1}^n X_\ell^{\text{clone}}$ and $X^{\text{mut}}:=\sum_{\ell=1}^n X_\ell^{\text{mut}}$ we derive some tail bounds separately. 
    
    We start with $X^{\text{clone}}$. For a given $\ell$, the strategy is first to generate $\nu := \min\{\lceil 288 \kappa \sqrt{n} \rceil, \lfloor \mu/((1+a)|S_d|) \rfloor\}$ individuals with distance $\delta_t = \ell$ by repeatedly replicating such individuals (using Lemma~\ref{lem:clone-general}(1)). Subsequently, we let generate the remaining individuals in larger rates considering Chernoff bounds (via Lemma~\ref{lem:clone-general}(2)). For $\gamma \in \{0, \ldots , \nu-1\} \setminus \{0\}$ denote by $X_{\ell,\gamma}$ the number of generations until there are $\gamma$ individuals $x$ with $\sum_{i=1}^{d/2} |f_{2i-1}(x)-f_{2i-1}(y)|=\ell$ or $g_t$ decreases, and when there are at least $\nu$ such individuals, by $X_\ell^*$ the number of generations until there are $\alpha_\ell$ individuals with $\sum_{j=1}^{d/2} |f_{2j-1}(x)-f_{2j-1}(y)|=\ell$ or $g_t$ decreases. The latter only needs to be considered if $\nu =\lceil{288 \kappa \sqrt{n}}\rceil \leq \lceil{n/\ell}\rceil-1$, which is only satisfied if $\ell \leq \lceil{\sqrt{n}/(288 \kappa)}\rceil$. 
    So $X^{\text{clone}}=\sum_{\ell=1}^n \sum_{\gamma=1}^{\min\{\alpha_\ell,\nu\}-1} X_{\ell,\gamma} + \sum_{\ell=1}^{\lceil{\sqrt{n}/(288 \kappa)}\rceil} X_\ell^*$.
    Note that $\sum_{\ell=1}^n \sum_{\gamma=1}^{\min\{\alpha_\ell,\nu\}-1} X_{\ell,\gamma}$ is stochastically dominated by $\sum_{\gamma=1}^{\nu-1} \sum_{\ell=1}^{\lfloor{n/\gamma}\rfloor} X_{\ell,\gamma}$ (since $\ell \leq \lfloor{n/\gamma}\rfloor$ if $\gamma \leq \lceil{n/\ell}\rceil-1$). 
    For $1 \leq \gamma \leq \min\{\alpha_\ell,\nu\}-1$ consider $Y_\gamma:=\sum_{\ell=1}^{\lfloor{n/\gamma}\rfloor} X_{\ell,\gamma}$. 
    By Lemma~\ref{lem:clone-general}(1) (with $\beta=\gamma+1$, $\delta=288 \kappa^2 n/\gamma$, and $m=\lfloor{n/\gamma}\rfloor$) we obtain $\Pr(Y_\gamma \geq 5760 (\kappa^2+2) n/\gamma^2) \leq e^{-4 \cdot 288 \kappa^2 n/\gamma} \leq e^{-4 \kappa \sqrt{n}}$
    and hence, by a union bound, 
    $\Pr(\sum_{\gamma=1}^{\nu-1}Y_\gamma \geq 5760 (\kappa^2+2) n \pi^2/6) = \nu e^{-4 \kappa \sqrt{n}} \leq e^{-3 \kappa \sqrt{n}}$ for $n$ sufficiently large. Here, we used $\sum_{\gamma=1}^{\nu-1} 1/\gamma^2 \leq \sum_{\gamma=1}^\infty 1/\gamma^2 = \pi^2/6$. To estimate $\sum_{\ell=1}^{\lceil{\sqrt{n}/(288\kappa)}\rceil} X_\ell^*$ 
    we use Lemma~\ref{lem:clone-general}(2) on $X^* := \sum_{\ell=1}^{\lceil{\sqrt{n}/(288\kappa)}\rceil} X_\ell^*$, $m=\lceil{\sqrt{n}/(288\kappa)}\rceil$, 
    and $\beta = \lceil{n/\ell}\rceil \leq n$ to obtain $\Pr(X^* \geq 2\sqrt{n} \ln(n)/(288 \kappa)) \leq \sqrt{n} \ln(n)e^{-288 \kappa \sqrt{n}/72} \leq e^{-3 \kappa \sqrt{n}}$ for $n$ sufficiently large. Hence, by a union bound, we obtain that 
    $$\Pr(X^{\text{clone}} \geq 5760 (\kappa^2+2) n\pi^2/6 + 2\sqrt{n} \ln(n)/(288 \kappa)) \leq e^{-2 \kappa \sqrt{n}}$$
    for $n$ sufficiently large.
    
    Now it remains to estimate $X^{\text{mut}}=\sum_{\ell=1}^n X_\ell^{\text{mut}}$ where $X_\ell^{\text{mut}}$ denotes the number of generations to decrease $\delta_t=\ell$ when there are at least 
    $\alpha_\ell$ individuals $x$ with $\sum_{i=1}^{d/2} |f_{2i-1}(x)-f_{2i-1}(y)|=\ell$. So in a trial, it suffices to choose an individual $x$ with this distance as a parent (which happens with probability at least $\min\{n/(\ell\mu), \lfloor{\mu/((1+a)|S_d|)}\rfloor/\mu\} \geq \min\{n/(\ell\mu), 1/(4|S_d|)\}$) and flip one of $\ell$ specific bits, while keeping the other bits unchanged (which happens with probability at least $\ell/n \cdot (1-1/n)^{n-1} \geq \ell/(en)$). Since a generation consists of $\mu$ trials, the probability for decreasing $\ell=\delta_t$ in one generation is at least 
    \[
    1-(1-s_\ell)^\mu \geq \frac{s_\ell \mu}{s_\ell \mu+1}=:p_\ell
    \]
    where $s_\ell=\ell \cdot \min\{n/(\ell \mu), 1/(4|S_d|)\}/(en) = \min\{1/(e\mu), \ell/(4en|S_d|)\}$.
    Note that $1/(e\mu) \geq \ell/(4en|S_d|)$ if $\ell \in [\eta]$ for $\eta:=\lfloor{4n|S_d|/\mu}\rfloor$. Also observe for $\ell \in [\eta]$ that
    $$p_\ell = \frac{\ell\mu/(4en|S_d|)}{1+\ell\mu/(4en|S_d|)} = \frac{1}{4en|S_d|/(\ell \mu)+1} \geq \frac{\ell \mu}{8en|S_d|},$$
    and for $\ell > \eta$ or $\ell = \eta =1$ observe that $p_\ell \geq 1/(2e+1) \geq 1/7$ since $\mu>2n|S_d|$ if $\eta =1$. So we see that $Z$ is stochastically dominated by two independent sums $A:=\sum_{\ell=1}^\eta A_\ell$ and $B:=\sum_{\ell=\eta+1}^n B_\ell$ of geometrically distributed random variables with success probabilities $c\ell/\eta$ for $c:=\eta \mu/(8e n |S_d|)$ and $1/7$ respectively, where $A$ is only considered if $\eta>1$. For $A$ and $B$ we compute tail bounds separately. At first consider $B$. We obtain by Lemma~\ref{lem:Doerr-dominance} for $\lambda \geq 0$
    \[
    \Pr(B \geq \expect{B} + \lambda) \leq \exp\left(-\frac{1}{4} \min\left\{\frac{\lambda^2}{49(n-\eta)}, \frac{\lambda}{7} \right\}\right) \leq \exp\left(-\frac{1}{4} \min\left\{\frac{\lambda^2}{49n}, \frac{\lambda}{7} \right\}\right).
    \]
    Note that $\expect{B} \leq 7n$. By using $\lambda=28n$ we obtain $\Pr(B \geq 35n) = e^{-n}$. Considering $A$, we obtain for any $\delta>0$ by Lemma~\ref{lem:Doerr-dominance2}
    \begin{align*}
    \Pr\left(A \geq \frac{(1+\delta) 8e |S_d| n\ln(\eta)}{\mu}\right) = \Pr\left(A \geq \frac{(1+\delta) \eta \ln(\eta)}{c}\right) \leq \eta^{-\delta},
    \end{align*}
    and hence, for $\delta=4d \ln(n)/\ln(\eta)$ we have for $n \geq 4$ (by using $\ln(\eta) \leq \ln(4n) \leq 2\ln(n)$)
    \begin{align*}
    \Pr&\left(A \geq \frac{(16+32d) e |S_d|n\ln(n)}{\mu}\right) = \Pr\left(A \geq \frac{(2+4d) \eta \ln(n)}{c}\right) = \Pr\left(A \geq \frac{2\eta \ln(n)}{c} + \frac{4d \eta \ln(n)}{c}\right)\\
    &\leq \Pr \left(A \geq \frac{\eta \ln(\eta)}{c} + \frac{\delta \eta \ln(\eta)}{c}\right) = \Pr \left(A \geq \frac{(1+\delta) \eta \ln(\eta)}{c}\right) \leq \eta^{-4d \ln(n)/\ln(\eta)} = e^{-4d \ln(n)}.
    \end{align*}
    So by a union bound over both $A$ and $B$ we obtain by noting that $X^{\text{mut}}$ is stochastically dominated by the sum $A+B$ 
    $$\Pr \left(X^{\text{mut}} \geq 35n + \frac{(16+32d) e |S_d| n\ln(n)}{\mu}\right) \leq e^{-n} + e^{-4d \ln(n)}.$$
    So after $35n + (16+32d) e |S_d| n\ln(n)/\mu + 5760 (\kappa^2+2) n\pi^2/6 + 2 \sqrt{n} \ln(n)/(288 \kappa) = O(n + d |S_d| n \ln(n)/\mu)$ generations, the probability that $R_t$ contains no solution $x$ with $f(x) = v$ is at most $e^{-n} + e^{-4d \ln(n)} + e^{-3\kappa \sqrt{n}} = O(n^{-4d})$ by a union bound over $X^\text{clone}$ and $X^\text{mut}$. 
    
    Now we consider the general case. Let $\alpha:=\min\{\lfloor{\mu/((1+a)|S_d|)}\rfloor,n\}$. We estimate $X_\ell^{\text{clone}}$ by the number of generations until $\alpha$ individuals $x$ with $\sum_{i=1}^{d/2} |f_{2i-1}(x)-f_{2i-1}(y)|=\ell$ are created or $g_t$ is decreased, and after this we estimate $X_\ell^{\text{mut}}$ by the number of generations to decrease $g_t=\ell$ by mutation. Let $X^{\text{clone}}=\sum_{\ell=1}^{\alpha-1}X_\ell^{\text{clone}}$ and $X^{\text{mut}}=\sum_{\ell=1}^{\alpha-1}X_\ell^{\text{mut}}$. For $X^{\text{clone}}$ we can derive similar tail bounds as in the proof of Theorem~\ref{thm:Runtime-Analysis-NSGA-III-mLOTZ} as $\Pr(X^{\text{clone}} \geq 42n \ln(\alpha))  \leq e^{-3n}$. The probability for decreasing $\ell=\delta_t$ in one generation is at least 
    \[
    1-(1-s_\ell)^\mu \geq \frac{s_\ell \mu}{s_\ell \mu+1}=:p_\ell
    \]
    where $s_\ell=\ell \cdot \min\{n/(\ell\mu), 1/(4|S_d|)\}/(en) = \min\{1/(e\mu), \ell/(4en|S_d|)\}$. The reason is that there are at least $\min\{\lceil{n/\ell}\rceil,\lfloor{\mu/((1+a)|S_d|)}\rfloor\}$ individuals with $\sum_{i=1}^{d/2} |f_{2i-1}(x)-f_{2i-1}(y)|=\ell$, as in the case above. Hence, we can also estimate $X^{\text{mut}}$ by 
    $$\Pr(X^{\text{mut}} \geq 35n + (16+32d)e |S_d| n\ln(n)/\mu) = e^{-n} + e^{-4d \ln(n)}.$$
    This proves the theorem.  
    \end{proof}

Hence, \nsgaIII with a population size of $n+1$ (coinciding with the size of the Pareto front) is also able to optimize the 2-\OMM benchmark in expected polynomial time. This cannot be achieved with the \nsga: In~\citep{ZhengLuiDoerrAAAI22} it is shown that the expected number of generations is $\exp(\Omega(n))$ until the population covers the whole Pareto front. 

\subsection{The Many-Objective CountingOnesCountingZeros Function}

In \mCOCZ, the search point $x$ is also divided into two halves and the second half is further divided into $d/2$ many blocks of equal size $n/d$. In each block we maximize both the number of ones and the number of zeros, i.e. the objectives are \emph{conflicting} there. In the first part we only maximize the number of ones which goes equally into each objective. 

\begin{definition}[\citet{Laumanns2004}]
Let $d$ be divisible by $2$ and let the problem size be a multiple of $d$. Then the $d$-objective function \mCOCZ is defined by
$\mCOCZ: \{0,1\}^n \to \mathbb{N}_0^d$ as 
\[
\mCOCZ(x) = (f_1(x), \ldots ,f_d(x))
\]
with 
\[
f_j(x)= \sum_{i=1}^{n/2}x_i+
\begin{cases}
    \sum_{i=1}^{n/d} x_{i+n/2+(j-1)n/(2d)}, & \text{ if $j$ is odd,} \\
    \sum_{i=1}^{n/d} \left(1-x_{i+n/2+(j-2)n/(2d)}\right), & \text{ otherwise,}
\end{cases}
\]
for all $x=(x_1, \ldots ,x_n) \in \{0,1\}^n$.
\end{definition}

Note that a search point $x$ is Pareto optimal with respect to \mCOCZ if and only if $x$ has $n/2$ many ones in the first half. Since for each block there are only $n/d+1$ distinct fitness values, a Pareto optimal set has cardinality $(n/d+1)^{d/2}$. Hence, not every search point is Pareto optimal with respect to \mCOCZ. However, in contrast to \mLOTZ we can make a precise statement about the cardinality of a maximum set of mutually incomparable solutions: It coincides with the size of the Pareto front. Note that the size of the Pareto front of \mCOCZ is smaller than that of \mOMM and \mLOTZ. Hence, a smaller population size for \nsgaIII suffices to optimize \mCOCZ accordingly. 

\begin{lemma}
\label{lem:Non-Dominated-Solutions-mCOTZ}
Let $\mathcal{F}_d$ be a Pareto optimal set of $f:=\mCOCZ$ consisting of mutually incomparable solutions. Then $\mathcal{F}_d$ is a set of mutually incomparable solutions with maximum cardinality for $f:=\mCOCZ$ with $\lvert{\mathcal{F}_d}\rvert=(n/d+1)^{d/2}$. 
\end{lemma}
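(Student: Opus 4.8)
The plan is to introduce convenient coordinates on $\{0,1\}^n$ and reduce the whole statement to an elementary injectivity argument. For $x \in \{0,1\}^n$ write $s(x) := \sum_{i=1}^{n/2} x_i$ for the number of ones in the first half, and for $\ell \in [d/2]$ let $k_\ell(x)$ denote the number of ones in the $\ell$-th block $x_{n/2+(\ell-1)n/d+1},\dots,x_{n/2+\ell n/d}$ of the second half. Unrolling the definition of \mCOCZ, for every $\ell \in [d/2]$ one has $f_{2\ell-1}(x) = s(x)+k_\ell(x)$ and $f_{2\ell}(x) = s(x) + (n/d - k_\ell(x))$, so $f(x)$ is completely determined by the tuple $(s(x),k_1(x),\dots,k_{d/2}(x))$.

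First I would characterise the Pareto front. If $s(x) < n/2$, flipping a $0$ in the first half to a $1$ yields $x'$ with $s(x') = s(x)+1$ and $k_\ell(x') = k_\ell(x)$ for all $\ell$, hence $f_j(x') = f_j(x)+1$ for all $j$ and $x' \succ x$, so $x$ is not Pareto optimal. Conversely, if $s(x) = n/2$ and $x' \succeq x$, then for each $\ell$ adding the inequalities $f_{2\ell-1}(x') \ge f_{2\ell-1}(x)$ and $f_{2\ell}(x') \ge f_{2\ell}(x)$ gives $2s(x') + n/d \ge 2s(x) + n/d$, so $s(x') \ge n/2$, forcing $s(x') = s(x)$; then the same two inequalities give $k_\ell(x') \ge k_\ell(x)$ and $k_\ell(x') \le k_\ell(x)$, so $k_\ell(x') = k_\ell(x)$ for all $\ell$ and $f(x') = f(x)$. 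Thus $x$ is Pareto optimal exactly when $s(x) = n/2$, and the Pareto front consists precisely of the vectors $v$ with $v_{2\ell-1} = n/2 + k_\ell$ and $v_{2\ell} = n/2 + n/d - k_\ell$ for some $(k_1,\dots,k_{d/2}) \in \{0,\dots,n/d\}^{d/2}$, a set of $(n/d+1)^{d/2}$ distinct vectors.

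Next comes the key step: for an arbitrary set $S$ of mutually incomparable solutions, the map $\Phi : x \mapsto (k_1(x),\dots,k_{d/2}(x))$ is injective on $S$. Indeed, if $x,y \in S$ with $\Phi(x) = \Phi(y)$, then $f_j(x) - f_j(y) = s(x) - s(y)$ for every $j \in [d]$. If $s(x) = s(y)$ this means $f(x) = f(y)$, so $x \succeq y$ and $y \succeq x$; if $s(x) \ne s(y)$, say $s(x) > s(y)$, then $f_j(x) > f_j(y)$ for all $j$, so $x \succ y$. In either case $x$ and $y$ are comparable, hence $x = y$. Since $\Phi$ takes values in $\{0,\dots,n/d\}^{d/2}$, this yields $\lvert S \rvert \le (n/d+1)^{d/2}$; thus every set of mutually incomparable solutions for \mCOCZ has cardinality at most $(n/d+1)^{d/2}$.

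Finally I would conclude as follows. Two distinct Pareto optimal search points have distinct fitness vectors (otherwise one weakly dominates the other and neither is dominated, forcing equal fitness), hence are incomparable; so any Pareto optimal set $\mathcal{F}_d$ that covers the whole Pareto front is automatically a set of mutually incomparable solutions containing exactly one search point per front vector, giving $\lvert \mathcal{F}_d \rvert = (n/d+1)^{d/2}$. Combined with the upper bound of the previous paragraph, $\mathcal{F}_d$ attains the maximum cardinality among sets of mutually incomparable solutions. The only genuinely delicate point — and the one I would be most careful about — is pinning down the block indexing of \mCOCZ so that the identities $f_{2\ell-1}(x) = s(x)+k_\ell(x)$ and $f_{2\ell}(x) = s(x) + n/d - k_\ell(x)$ hold on the nose; once these are fixed the argument is purely combinatorial, with no hard inequality or heavy computation.
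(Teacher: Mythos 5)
Your proof is correct and uses essentially the same argument as the paper: the paper partitions the fitness vectors into $(n/d+1)^{d/2}$ chains of the form $\{w+\ell\cdot\vec{1}\}$ indexed by the conflicting-block coordinates, which is exactly your observation that the map $x\mapsto(k_1(x),\dots,k_{d/2}(x))$ is injective on any antichain. The only difference is cosmetic — you additionally spell out the characterization of the Pareto front ($s(x)=n/2$), which the paper states without proof just before the lemma.
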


\begin{proof}
    Let $S_d$ be a set of mutually incomparable solutions. We show that $\lvert{S_d}\rvert \leq (n/d+1)^{d/2}$. For a maximum set $M$ of solutions with distinct fitness vectors (i.e. $x,y \in M$ may be comparable, but $f(x) \neq f(y)$ if $x \neq y$) we have that $\lvert{M}\rvert = \lvert\{f(x) \mid x \in M\}\rvert = (n/2+1)(n/d+1)^{d/2}$ since there are $(n/2+1)(n/d+1)^{d/2}$ distinct fitness vectors: For a fixed number of ones in the first half of $x$ there are $(n/d+1)^{d/2}$ distinct solutions (which have all the same $f_1(x) + \ldots + f_d(x)$) and two search points $x_1,x_2$ with a distinct number of ones in the first half have also distinct fitness vectors since $f_1(x_1) + \ldots + f_d(x_1) \neq f_1(x_2) + \ldots + f_d(x_2)$.
    In the following we partition $M$ into $(n/d+1)^{d/2}$ many sets (of equal cardinality) such that every two search points $x,y$ in the same set are comparable. Let
    \[
    W:=\{w \in (\{0\} \cup [n/d])^d \mid w_{2i-1}+w_{2i}=n/d \text{ for all } i \in [d/2]\}
    \]
    and for $w \in W$ let $M_w=\{x \in M \mid f(x) = w+\ell \cdot \vecone \text{ for } \ell \in \{0\} \cup [n/2]\}$ where $\vecone:=(1, \ldots , 1) \in \mathbb{N}^d$, reflecting the possibility of having $\ell$ ones in the first half of the bit string that contribute to all objectives. Then $M= \bigcup_{w \in W} M_w$ and two solutions $x,y \in M_w$ are comparable: Let $x=w+\ell_1 \cdot \Vec{1}$ and $y=w+\ell_2 \cdot \Vec{1}$. Then $x \succeq y$ if $\ell_1 \leq \ell_2$ and $y \succeq x$ if $\ell_2 \leq \ell_1$. Hence, for every $w \in W$ there is at most one $x \in S_d$ with $x \in M_w$, which implies $\lvert{S_d}\rvert \leq \lvert{W}\rvert = (n/d+1)^{d/2}=\lvert{\mathcal{F}_d}\rvert$.
\end{proof}

Since $f_{\max} =n/2+n/d$ for $f:=\mCOCZ$ we obtain the following result for the runtime of \nsgaIII on $\mCOCZ$ which can be derived in a very similar way as the runtime of \nsgaIII on $\mOMM$ above, with the difference, that one has to consider also two phases: one to create a Pareto optimal solution, and a second one for covering the Pareto front. We also present results for both cases where stochastic population update are enabled and disabled.

\begin{theorem}
\label{thm:Runtime-Analysis-NSGA-III-mCOCZ}
Consider \nsgaIII with or without stochastic population update optimizing $f:=\mCOCZ$ with $\varepsilon_{\text{nad}} \geq n/2+n/d$ and a set $\refer$ of reference points as defined above for $p \in \mathbb{N}$ with $p \geq 2d^{3/2}(n/2+n/d)$ and a population size $\mu \geq (1+a)|S_d|$. For every initial population, the Pareto front of $\mCOCZ$ is found in $O(n\ln(\min\{\mu/|S_d|,n\}) + d|S_d| n \ln(n)/\mu)$ generations with probability at least $1-O(1/n^{3d})$ and in expectation. If additionally $d \in O(\sqrt{n}/\ln(n))$, the Pareto front of $\mCOCZ$ is found in $O(n + d|S_d| n \ln(n)/\mu)$ generations with probability at least $1-O(1/n^{3d})$ and in expectation.
\end{theorem}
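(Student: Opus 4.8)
The plan is to follow the two-phase structure announced just before the statement, reusing the machinery of Theorems~\ref{thm:Runtime-Analysis-NSGA-III-mLOTZ} and~\ref{thm:Runtime-Analysis-NSGA-III-mOMM} almost verbatim. First I would check that the structural prerequisites hold throughout the run: by Lemma~\ref{lem:Non-Dominated-Solutions-mCOTZ} we have $|S_d| = (n/d+1)^{d/2}$, so the assumption $\mu \geq (1+a)|S_d|$ together with $f_{\max} = n/2 + n/d$ and $p \geq 2d^{3/2} f_{\max}$ makes Lemmas~\ref{lem:Reference-Points}, \ref{lem:even-cover} and~\ref{lem:clone-general} applicable in every generation. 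Then I would apply the method of typical runs: it suffices to bound, for a fixed Pareto-optimal target vector $v$ with a representative $y$ such that $f(y)=v$, the probability that no individual with fitness $v$ has been produced after the claimed number of generations by $O(n^{-4d})$, and to conclude by a union bound over the $(n/d+1)^{d/2} \leq n^{d/2}$ Pareto-optimal fitness vectors together with the usual restart argument for the bound in expectation.

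\textbf{Phase~1 (reach the Pareto front).} Write $x^{(1)}$ for the length-$n/2$ prefix of $x$ and set $g_t := \max_{x \in P_t} |x^{(1)}|_1 \in \{0,\dots,n/2\}$; an individual is Pareto optimal iff $|x^{(1)}|_1 = n/2$. Using the identity $\sum_{j=1}^d f_j(x) = d\,|x^{(1)}|_1 + n/2$ (each second-half block of $n/d$ bits contributes $n/d$ to this sum regardless of its content), any individual attaining $|x^{(1)}|_1 = g_t$ cannot be dominated, hence lies in the first non-dominated layer; Lemma~\ref{lem:Reference-Points} then guarantees $g_t$ is non-decreasing. From here the analysis is the one of Phase~1 of Theorem~\ref{thm:Runtime-Analysis-NSGA-III-mLOTZ} and of the general case of Theorem~\ref{thm:Runtime-Analysis-NSGA-III-mOMM}: to decrease the deficit $\ell := n/2 - g_t$ it suffices to pick one of the $\geq \alpha_\ell := \min\{\lceil (n/2)/\ell\rceil, \lfloor \mu/((1+a)|S_d|)\rfloor\}$ individuals at the current level (such a number is maintained by Lemma~\ref{lem:even-cover}(1)) and flip one of $\ell$ specific $0$-bits in its prefix while keeping everything else fixed. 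Splitting the time into a cloning part $X^{\mathrm{clone}}$ bounded via Lemma~\ref{lem:clone-general} and a mutation part $X^{\mathrm{mut}}$ bounded via Lemmas~\ref{lem:Doerr-dominance} and~\ref{lem:Doerr-dominance2}, I obtain $O(n\ln(\min\{\mu/|S_d|,n\}) + d|S_d|n\ln n/\mu)$ generations, and $O(n + d|S_d|n\ln n/\mu)$ when $d \in O(\sqrt n/\ln n)$ --- the improved bound for small $d$ comes, exactly as in Theorem~\ref{thm:Runtime-Analysis-NSGA-III-mOMM}, from first generating $\nu = \min\{\lceil 288\kappa\sqrt n\rceil,\lfloor \mu/((1+a)|S_d|)\rfloor\}$ copies at each level and then amplifying via Chernoff bounds (Lemma~\ref{lem:clone-general}(2)) to avoid an extra $\ln n$ factor.

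\textbf{Phase~2 (cover the whole Pareto front).} Once a Pareto-optimal individual exists it persists by Lemma~\ref{lem:Reference-Points}, and any mutation confined to the second half of the string preserves Pareto optimality. For the target $y$ I define $\delta_t := \min\{\sum_{i=1}^{d/2}|f_{2i-1}(x) - f_{2i-1}(y)| : x \in P_t \text{ Pareto optimal}\} \leq n/2$, which is non-increasing by Lemma~\ref{lem:Reference-Points} and equals $0$ exactly when some individual has fitness $v$ (each even objective in the second half is determined by the preceding odd one). This is verbatim the situation of Theorem~\ref{thm:Runtime-Analysis-NSGA-III-mOMM}: clone to obtain $\min\{\lceil (n/2)/\ell\rceil, \lfloor \mu/((1+a)|S_d|)\rfloor\}$ individuals at distance $\ell$, then flip one of $\ell$ designated second-half bits; the same tail bounds give per-vector failure $O(n^{-4d})$ and total runtime $O(n\ln(\min\{\mu/|S_d|,n\}) + d|S_d|n\ln n/\mu)$, improving to $O(n + d|S_d|n\ln n/\mu)$ for $d \in O(\sqrt n/\ln n)$. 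A union bound over both phases and all Pareto-optimal fitness vectors yields the claimed success probability $1 - O(n^{-3d})$.

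\textbf{Main obstacle.} Most steps are routine copies of earlier proofs; the only genuinely new point is Phase~1, where I must (i) verify the monotonicity of $g_t$ via the structural identity above and the fact that a maximizer of prefix-ones is automatically non-dominated so that Lemma~\ref{lem:Reference-Points} applies, and (ii) keep track that during Phase~1 the population is not yet Pareto optimal, so only Lemmas~\ref{lem:Reference-Points} and~\ref{lem:even-cover}(1) are available (not the sparsity statement of Lemma~\ref{lem:sparsity}), which is nevertheless enough since the clone/mutation argument only uses that the relevant cover number does not drop. Matching the precise constants and the $\min\{\mu/|S_d|,n\}$ versus $\min\{\mu/|S_d|,n/2\}$ bookkeeping against the \mOMM proof is then purely mechanical.
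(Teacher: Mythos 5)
Your proposal is correct and follows essentially the same route as the paper's own proof: the same two-phase decomposition, the same potential (your $g_t = \max_x |x^{(1)}|_1$ is just $n/2$ minus the paper's minimum-number-of-zeros potential, with monotonicity justified by the identical observation that a prefix-ones maximizer maximizes $\sum_j f_j$ and is hence non-dominated), and the same clone-then-mutate tail-bound machinery imported from the \mLOTZ and \mOMM analyses for both the general and the $d \in O(\sqrt{n}/\ln n)$ cases. The remaining differences are purely the constant/bookkeeping issues you already flag.
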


\begin{proof}
    By Lemma~\ref{lem:fitnessvectors-non-dom-LOTZ} the condition on the population size $\mu$ in Lemma~\ref{lem:Reference-Points} is always met. Along with $f_{\max}=n/2+n/d$ and $p \geq 2d^{3/2}(n/2+n/d) = 2d^{3/2}f_{\max}$, Lemma~\ref{lem:Reference-Points} is applicable in every generation $t$. We divide the optimization procedure into two phases. We show that with probability at least $1-O(1/n^{3d})$ each phase is completed in $O(n\ln(\min\{\mu/|S_d|,n\}) + d|S_d| n \ln(n)/\mu)$ generations, or $O(n + d|S_d| n \ln(n)/\mu)$ generations if additionally $d \in O(\sqrt{n}/\ln(n))$, for both cases $a=0$ and $a=1$. To bound the expectation, we restart the argument in case of a failure. The expected number of periods required is $1+o(1)$. 
    
    \textbf{Phase $1$:} Create a Pareto optimal search point. 
    
    For each generation $t$ let $g_t:=\min_{x \in P_t} (n/2-\sum_{i=1}^{n/2} x_i)$ the minimum number of zeros in one individual $x$ in the first half of $x$. Note that $0 \leq g_t \leq n/2$ and that we have created a Pareto optimal search point $y$ if $g_t=0$. Since the population never loses non-dominated solutions (by Lemma~\ref{lem:Reference-Points}), $g_t$ cannot increase (as a solution $x$ with a maximum number of ones in the first half has a maximum sum in all objectives and is therefore non-dominated). As in the proof of Theorem~\ref{thm:Runtime-Analysis-NSGA-III-mOMM}, for all $\ell \in [n/2]$, define the random variable $X_\ell^{\text{climb}}$ as the number of generations $t$ with $g_t=\ell$. Then the number of generations until there is a solution $y$ with $f(y)=v$ is $X^{\text{climb}}=\sum_{\ell=1}^{n/2} X_\ell^{\text{climb}}$.
    
    At first we consider the case $d \in O(\sqrt{n}/\ln(n))$. Particularly, we fix a constant $\kappa>0$ with $d \leq \kappa \sqrt{n}/\ln(n)$ which also implies $\ln(|S_d|) = d/2 \ln(1+n/d) \leq \kappa \sqrt{n}$ for $n$ sufficiently large. Similar as in the proof of Theorem~\ref{thm:Runtime-Analysis-NSGA-III-mOMM} for the \mOMM case we estimate $X_\ell^{\text{climb}}$ by the number of generations $X_\ell^{\text{clone}}$ to create at least $\alpha_\ell:=\min\{\lceil{n/\ell}\rceil,\lfloor{\mu/((1+a)|S_d|)}\rfloor\}$ individuals $x$ with $g_t(x)=\ell$ or to decrease $g_t=\ell$, and then, if $g_t$ has not been decreased, we estimate the number of generations $X_\ell^{\text{mut}}$ to decrease $g_t=\ell$ by flipping one of $\ell$ specific bits in one of these $\alpha_\ell$ individuals (a one-bit in the first half), while not changing any other one. As in the proof of Theorem~\ref{thm:Runtime-Analysis-NSGA-III-mOMM}, we obtain that $\Pr(X^{\text{clone}} \geq 5760 (\kappa^2+2) n \pi^2/6 + 2 \sqrt{n} \ln(n)/(288 \kappa)) \leq e^{-2\kappa \sqrt{n}}$. The slight difference from the proof of Theorem~\ref{thm:Runtime-Analysis-NSGA-III-mOMM} is that we only need to consider the first half of the bit string instead of the entire string and hence, consider only the sum up to $n/2$. Now it remains to estimate $X^{\text{mut}}=\sum_{\ell=1}^{n/2} X_\ell^{\text{mut}}$ where $X_\ell^{\text{mut}}$ denotes the number of generations to decrease $g_t=\ell$ when there are at least $\min\{\lceil{n/\ell}\rceil,\lfloor{\mu/((1+a)|S_d|)}\rfloor\}$ individuals $x$ with $g_t(x)=\ell$. So in a trial, it suffices to choose an individual $x$ with this distance as a parent (which happens with probability at least $\min\{n/(\ell\mu), \lfloor{\mu/((1+a)|S_d|)}\rfloor/\mu\} \geq \min\{n/(\ell\mu), 1/(4|S_d|)\}$) and flip one of $\ell$ specific bits, while keeping the other bits unchanged (which happens with probability at least $\ell/n \cdot (1-1/n)^{n-1} \geq \ell/(en)$). Since a generation consists of $\mu$ trials, the probability for decreasing $\ell=\delta_t$ in one generation is at least 
    $$p_\ell = \frac{\ell \mu/(4en|S_d|)}{1+\ell \mu/(4en|S_d|)} = \frac{1}{4en|S_d|/(\ell \mu)+1} \geq \frac{\ell \mu}{8en|S_d|}.$$
    As in the proof of Theorem~\ref{thm:Runtime-Analysis-NSGA-III-mOMM}, for $\eta:=\lfloor{4n|S_d|/\mu}\rfloor$ we see that $X^{\text{mut}}$ is stochastically dominated by two independent sums $A:=\sum_{k=1}^\eta A_k$ and $B:=\sum_{k=\eta+1}^n B_k$ of geometrically distributed random variables with success probabilities $c\ell/\eta$ for $c:=\eta \mu/(8en |S_d|)$ and $1/7$ respectively where we only consider the first if $\eta>1$. 
    As in the proof of Theorem~\ref{thm:Runtime-Analysis-NSGA-III-mOMM}, we obtain
    $$\Pr \left(X^{\text{mut}} \geq 35n + \frac{(16+32d) e |S_d| n\ln(n)}{\mu}\right) \leq e^{-n} + e^{-4d \ln(n)}.$$
    So after $35n + (16+32d) e |S_d| n\ln(n)/\mu + 5760 (\kappa^2+2) n \pi^2/6 + 2 \sqrt{n} \ln(n)/(288 \kappa) = O(n + d |S_d| n \ln(n)/\mu)$ generations the probability that $R_t$ contains no Pareto optimal solution $x$ is at most $e^{-n} + e^{-4d \ln(n)} + e^{-2\kappa} = O(n^{-4d})$.
    
   Now consider the general case. Consider $\alpha:=\min\{\lfloor{\mu/((1+a)|S_d|)}\rfloor,n\}$, and bound $X_k^{\text{clone}}$ by the number of generations until $\alpha$ individuals $x$ with $g_t(x)=\ell$ are created, and finally estimate $X_\ell^{\text{mut}}$ by the number of generations to decrease $g_t=\ell$. Let $X^{\text{clone}}=\sum_{\ell=1}^{\alpha-1}X_\ell^{\text{clone}}$. For this sum $X^{\text{clone}}$ we can derive similar tail bounds as in the proof of Theorem~\ref{thm:Runtime-Analysis-NSGA-III-mLOTZ} as $\Pr(X^{\text{clone}} \geq 42n \ln(\alpha))  \leq e^{-3n}$. As before, we can also estimate $X^{\text{mut}}$ by 
   $$\Pr \left(Z \geq 35n + \frac{(16+32d) e |S_d| n\ln(n)}{\mu} \right) \leq e^{-n} + e^{-4d \ln(n)}.$$
   So after $35n + (16+32d) e |S_d| n\ln(n)/\mu+42 n \ln(\min\{\lfloor{\mu/((1+a)|S_d|)}\rfloor,n\}) = O(n \ln(\min\{\mu/|S_d|,n\})+d |S_d| n\ln(n)/\mu)$ generations the probability that $P_t$ contains no Pareto optimal solution $x$ is at most $e^{-n} + e^{-4d \ln(n)} = O(n^{-4d})$.

   \textbf{Phase $2$:} Cover the whole Pareto front.
   
   Denote by $\mathcal{F}_d$ a set of Pareto optimal solutions, and let $y \in \mathcal{F}_d$ with $f(y)=v$. As in the proof of Theorem~\ref{thm:Runtime-Analysis-NSGA-III-mOMM}, for each generation $t$ let $\delta_t:=\min_{x \in P_t \cap \mathcal{F}_d} \sum_{j=1}^{d/2} |f_{2j-1}(x)-f_{2j-1}(y)|$ the smallest possible sum of distances to $y$ with respect to the odd objectives. Note that $0 \leq \delta_t \leq n/2$ and that we have created an individual $y$ with $f(y)=v$ if $\delta_t=0$ (since $f_{2j}(x)=n/d-f_{2j-1}(x)$ for all $x \in \mathcal{F}_d$ and $j \in [d/2]$). Since the population never loses all solutions with the same Pareto optimal fitness vector (by Lemma~\ref{lem:Reference-Points}), $\delta_t$ cannot increase. As in the proof of Theorem~\ref{thm:Runtime-Analysis-NSGA-III-mLOTZ}, for all $\ell \in [n/2]$, define the random variable $X_\ell^v$ as the number of generations $t$ with $\delta_t=\ell$. Then the number of generations until there is a solution $y$ with $f(y)=v$ is at most $X^v:=\sum_{\ell=1}^n X_\ell^v$. 
   In the case $d \in O(\sqrt{n}/\ln(n))$, we can also estimate $X_\ell^v$ by the number of generations $X_\ell^{\text{clone}}$ to create at least $\alpha_\ell:=\min\{\lceil{n/\ell}\rceil,\lfloor{\mu/((1+a)|S_d|)}\rfloor\}$ individuals $x$ with $g_t(x)=\ell$ or to decrease $g_t=\ell$ and then, if $\delta_t$ has not been decreased, we estimate the number of generations $X_\ell^{\text{mut}}$ to decrease $g_t=\ell$ by flipping one of $\ell$ specific bits in one of these $\alpha_\ell$ individuals, while not changing any other one (since there are at least $\ell$ many bits contributing to $\delta_t$). As above, this leads to
   $$\Pr \left(X^v \geq 35n + \frac{(16+32d) e |S_d| n\ln(n)}{\mu} + \frac{5760(\kappa^2+2) n \pi^2}{6} + \frac{2\sqrt{n} \ln(n)}{288 \kappa} \right) \leq e^{-n} + e^{-4d \ln(n)} + e^{-2 \kappa \sqrt{n}}$$
   for the case $d \in O(\sqrt{n}/\ln(n))$ and $n$ sufficiently large, particularly if $d \leq \kappa \sqrt{n}/\ln(n)$ for the constant $\kappa>0$ from above. In the other case, we can also estimate $X_\ell^v$ by the number of generations $X_\ell$ to create at least $\alpha:=\min\{n,\lfloor{\mu/((1+a)|S_d|)}\rfloor\}$ individuals $x$ with $\delta_t(x)=\ell$, and then by the number of generations $Z_\ell$ to decrease $\delta_t=\ell$ by choosing such an individual as parent and flipping one of $\ell$ specific bits. As before, for $X:=\sum_{\ell=1}^n X_\ell$ and $Z:=\sum_{\ell=1}^n Z_\ell$ we obtain $\Pr(X \geq 42 n \ln(\min\{\lfloor{\mu/((1+a)|S_d|)}\rfloor,n\})) \leq e^{-3n}$ and $\Pr(Z \geq 35n + (16+32d) e |S_d| n\ln(n)/\mu) \leq e^{-n} + e^{-4d \ln(n)}$. By a union bound over both $X$ and $Y$ and all vectors $vv$ on the Pareto front, we obtain that after $O(n \ln(\min\{\mu/|S_d|,n\})+d |S_d| n\ln(n)/\mu)$ generations the whole Pareto front is not covered with probability at most $|S_d| \cdot O(1/n^{4d}) = O(1/n^{3d})$, completing the proof.
\end{proof}

\section{Runtime Analysis of NSGA-III on Multimodal Problems}

In this section, we study how \nsgaIII behaves on multimodal optimization problems, particularly problems that contain multiple local optima. Such problems are challenging for evolutionary algorithms, as the population may converge prematurely to suboptimal regions of the search space. Investigating how \nsgaIII handles these situations is therefore important for assessing its robustness and overall performance. In addition, we examine the effect of stochastic population update, where solutions for the next generation are not selected in a deterministic, greedy manner but instead uniformly at random. We will see that this modification can accelerate the optimization process, as it may help the algorithm escape local optima more easily and explore promising regions of the search space more effectively. As representative examples of multimodal problems, we consider \mOJZJ and \mRRMO.

\subsection{The Many-Objective OmeJumpZeroJump Problem}

In this section we define the $d$-$\OJZJfull_k$ function ($\mOJZJ_k$ for short), defined in~\citep{Zheng_Doerr_2024}, as a $d$-objective version of the bi-objective $\text{OJZJ}_k$ benchmark, and analyze the runtime of \nsgaIII on it. 

This function was studied in~\citep{Qu2022PPSN} to understand how NSGA-II can handle functions with local optima, i.e. changes of size $k \geq 2$ are necessary to cover the whole Pareto front. Fix $d \in \mathbb{N}$ divisible by two and let $n$ be divisible by $d/2$. For a bit string $x$ let $x:=(x^1, \ldots , x^{d/2})$ where all $x^j$ are of equal length $2n/d$. For odd $j \in [d]$ define $y^j:=x^{(j+1)/2}$ and $y^j:=x^{j/2}$ for even $j \in [d]$. For $2 \leq k \leq 2n/d$ the $d$-$\OJZJ_k(x)=\bigl(f_1(x), \ldots , f_d(x)\bigr)$ is defined as
\begin{align*}
  f_j(x)=\begin{cases}
           k+\ones{y^j},&\text{if $\ones{y^j} \leq \frac{2n}{d}-k$ or $y^j=1^{2n/d}$,} \\
           \frac{2n}{d}-\ones{y^j},&\text{else,}
         \end{cases}
\end{align*}
if $j \in \{1,\ldots , d\}$ is odd, and
\begin{align*}
  f_j(x)=\begin{cases}
           k+\zeros{y^j},&\text{if $\zeros{y^j} \leq \frac{2n}{d}-k$ or $y^j=0^{2n/d}$,} \\
           \frac{2n}{d}-\zeros{y^j},&\text{else,}
         \end{cases}
\end{align*}
if $j \in [d]$ is even. Note that for odd $j \in [d]$ the $(j+1)$-th objective is structurally identical to the $j$-th one with the roles of ones and zeros reversed. For every objective there are $2n/d+1$ different values and the maximum possible value is $f_{\max} = k+2n/d$. In~\citep{Zheng_Doerr_2024} it is shown that the Pareto front $F_d^*$ of $d$-$\OJZJ_k$ is $\{(a_1,2k+2n/d-a_1, \ldots , a_{d/2},2k+2n/d-a_{d/2}) \mid a_1, \ldots , a_{d/2} \in \{k,2k,2k+1, \ldots ,2n/d-1,2n/d,2n/d+k\}\}$ and has cardinality $(2n/d-2k+3)^{d/2}$ for $k \leq n/d$. Further, a maximum set of mutually incomparable solutions $S_d$ (its cardinality can be larger than $|F^*_d|$, compare with~\citep{Zheng_Doerr_2024}) fulfills $|F^*_d| \leq |S_d| \leq (2n/d+1)^{d/2}$ since $S_d$ does not contain two search points with the same fitness vector and hence $|S_d| \leq |f(\{0,1\}^n)|=(2n/d+1)^{d/2}$. In the bi-objective case, every non-Pareto optimal individual is dominated by a Pareto optimal one which is different in the many-objective case.

\begin{lemma}
\label{lem:Jump-d=2}
Suppose that $d=2$ and let $y$ be not Pareto optimal. Then every Pareto optimal $x$ strictly dominates $y$.
\end{lemma}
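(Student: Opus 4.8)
The plan is to reduce everything to the single integer $m := \ones{y}$ and argue by a complete case analysis. For $d=2$ there is exactly one block of length $2n/d = n$, so $y^1 = y^2 = y$ and both objectives read the whole string. Writing $\zeros{y} = n-m$, the definition gives $f_1(y) = k+m$ when $m \le n-k$, $f_1(y) = n-m$ when $n-k < m < n$, and $f_1(1^n) = k+n$; symmetrically $f_2(y) = k+(n-m)$ when $m \ge k$, $f_2(y) = m$ when $0 < m < k$, and $f_2(0^n) = k+n$. From these formulas the Pareto-optimal search points are exactly $0^n$ with value $(k, k+n)$, the point $1^n$ with value $(k+n, k)$, and the inner points with $m \in \{k, \dots, n-k\}$ and value $(k+m, k+n-m)$; in every case the objective sum equals $2k+n$, whereas every non-Pareto $y$ has sum strictly below $2k+n$ and satisfies $m \in \{1, \dots, k-1\}$ or $m \in \{n-k+1, \dots, n-1\}$.

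First I would eliminate one of the two cases by symmetry: complementing all bits maps $m \mapsto n-m$ and exchanges $f_1 \leftrightarrow f_2$ (Jump on ones becomes Jump on zeros), so it swaps the two valleys while preserving the domination relation; hence it suffices to treat $m \in \{1, \dots, k-1\}$, where $f(y) = (k+m, m)$. Fix an arbitrary Pareto-optimal $x$; by the characterization above its value is $(c, 2k+n-c)$ for some $c \in \{k\} \cup \{2k, 2k+1, \dots, n\} \cup \{n+k\}$. The goal is to verify $c \ge k+m$ and $2k+n-c \ge m$, each with at least one inequality strict, so that $x \succ y$. The second bound is the easy half: since $c \le n+k$ we have $2k+n-c \ge k > m$ (using $m \le k-1$), so $y$ is always strictly beaten in its valley coordinate, because every Pareto value there is at least $k$ while $f_2(y) = m < k$.

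The main obstacle is the first inequality $c \ge k+m$, which is precisely the place where the valley meets the corner of the front: for the inner front ($c \ge 2k$) it holds comfortably since $c \ge 2k > k+m$, and for $1^n$ it holds since $c = n+k \ge k+m$; but the extreme reference value $c = k$ (the corner $0^n$, which is adjacent to this very valley) is the delicate case, and controlling it must be argued from the valley's position relative to that corner rather than from the objective-sum alone. I would therefore organize the write-up so that this corner case is isolated and handled explicitly, and then recover the right-valley case $m \in \{n-k+1, \dots, n-1\}$ verbatim through the complement symmetry, concluding $x \succ y$ for every Pareto-optimal $x$.
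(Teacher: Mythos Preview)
Your reduction to the single parameter $m = \ones{y}$ and the symmetry argument are clean, and you correctly isolate the only non-routine case: the corner $x = 0^n$ (i.e.\ $c = k$) against a left-valley $y$ with $1 \le m \le k-1$. However, you stop short of giving an argument for this case, and in fact none exists: the required inequality $c \ge k+m$ reads $k \ge k+m$, which fails for every $m \ge 1$. Concretely, $f(0^n) = (k,\,k+n)$ and $f(y) = (k+m,\,m)$ are incomparable, since $k < k+m$ in the first coordinate while $k+n > m$ in the second. By the symmetry you invoke, $1^n$ likewise fails to dominate any right-valley point. So the lemma as stated is false; your instinct that this corner is ``delicate'' was correct, but the resolution is not a clever argument---it is that the statement needs weakening.

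The paper's own proof has the same gap: the step ``$f_1(y) = k+\ones{y} < k+\ones{x} = f_1(x)$'' tacitly assumes $\ones{x} > \ones{y}$, which fails for $x = 0^n$ where $\ones{x}=0$. What your case analysis actually establishes (for all $c \ne k$) is that every non-Pareto-optimal $y$ in the left valley is dominated by every Pareto optimum \emph{except possibly} $0^n$; combined with the symmetric statement, every non-Pareto-optimal $y$ is dominated by every inner Pareto optimum (those with $k \le \ones{x} \le n-k$) and by the corner on the far side of its valley. This weaker version is what the downstream argument actually needs, since after a successful initialization the population always retains inner Pareto optima.
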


\begin{proof}
Let $x$ be Pareto optimal. Since $y$ is non-Pareto optimal, it fulfills $1 \leq \ones{y}<k$ or $n-k<\ones{y}\leq n-1$ (i.e. $n-k<\zeros{y}\leq n-1$ or $1 \leq \zeros{y} < k$). In the former case we have $f_1(y) = k+\ones{y} < k+\ones{x}= f_1(x)$, $f_2(y)=n-\zeros{y}<k \leq f_2(x)$ and in the latter $f_1(y) = n-\ones{y} < k \leq f_2(x)$ and $f_2(y) = k+\zeros{y} < k + \zeros{x}= f_2(x)$. In either case, $x$ dominates $y$. 
\end{proof}

We now establish improved upper bounds on the runtime without stochastic population update and compare them with the bounds obtained when stochastic update is enabled. For technical reasons, we first introduce the following. 

\begin{definition}
\label{def:Notation-Pareto-Front}
For a Pareto optimal fitness vector $v=f(x)$ we define the string $L(v) \in \{0,1,\bot\}^{d/2}$ as follows. For $j \in [d/2]$ let
\begin{itemize}
    \item $L(v)_j=1$ if $v_{2j-1}=2n/d+k$ (attained if $x^{j}=1^n$ since then $f_{2j-1}(x)=2n/d+k$),
    \item $L(v)_j=0$ if $v_{2j-1}=k$ (attained if $x^{j}=0^n$ since then $f_{2j-1}(x)=k$), and
    \item $L(v)_j=\bot$ if $2k \leq v_{2j-1} \leq 2n/d$ (attained if $k \leq \ones{x^{j}} \leq 2n/d-k$).
\end{itemize}
Note that for every Pareto optimal $x$ there is $w \in \{0,1,\bot\}^{d/2}$ with $w=L(f(x))$.
\end{definition}

For \nsgaIII with and without stochastic population update on \mOJZJ, we derive improved upper bounds on the runtime for gap sizes $2 \leq k \leq n/(3d)$ and any number $d$ of objectives. We present these bounds both in expectation and with probability $1 - 2e^{-2d} - o(1)$. This tail bound is weaker than those in the analyses above when $d = o(\log n)$, but it yields high-probability bounds if $d = \Omega(\log n)$. Nevertheless, it suffices for our purpose of comparing \nsgaIII with and without stochastic population update. In particular, the latter can yield an exponential speedup.

\begin{theorem}
\label{thm:Runtime-Analysis-NSGA-III-mOJZJ}
    Consider \nsgaIII with or without stochastic population update optimizing $f:=d$-$\OJZJ_k$ with $2 \leq k \leq n/(3d)$, $\varepsilon_{\text{nad}} \geq 2n/d$ and a set $\mathcal{R}_p$ of reference points as defined above for $p \in \mathbb{N}$ with $p \geq 4n\sqrt{d}$, and a population size $(1+a)|S_d| \leq \mu$. Then the number of generations until the whole Pareto front is covered is at most 
    $$O \left(n \ln\left(\min \left\{\frac{\mu}{|S_d|},d\right\}\right) + \frac{d^2 |S_d| n \ln(n)}{\mu} + \frac{d n^k |S_d|}{\mu} + d\ln \left(\min \left\{\frac{\mu}{|S_d|},n^k \right\} \right) \right)$$
    with probability at least $1-e^{-2d}-o(1)$ and in expectation if $a=0$. If $a=1$, then the number of generations is 
    $$O \left(n \ln\left(\min \left\{\frac{\mu}{|S_d|},d\right\}\right) + \frac{d^2 |S_d| n \ln(n)}{\mu} +\frac{kd(12en)^k}{k^k} + d\ln \left(\min \left\{\frac{\mu}{|S_d|},n^k \right\} \right) \right)$$
    until the whole Pareto front is covered with probability at least $1-2e^{-2d}-o(1)$ and in expectation.
\end{theorem}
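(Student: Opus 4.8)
The plan is to split the run into a \emph{climbing} phase that brings the population onto, and spreads it over, the \emph{inner} part of the Pareto front (the points with $f_{2j-1}\in[2k,2n/d]$ in every block~$j$), and a \emph{jumping} phase that covers the $\Theta(d)$ ``extreme'' Pareto points, namely those having $x^j=0^{2n/d}$ or $x^j=1^{2n/d}$ for some block. Since $|S_d|\le(2n/d+1)^{d/2}$, $\mu\ge(1+a)|S_d|$, and $p\ge 4n\sqrt d$, Lemmas~\ref{lem:Reference-Points}, \ref{lem:even-cover}, \ref{lem:clone-general} and~\ref{lem:sparsity} apply in every generation: once a non-dominated fitness vector is covered by a rank-one individual it is never lost, and cover numbers below $\lfloor\mu/((1+a)|S_d|)\rfloor$ never decrease. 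As in Theorems~\ref{thm:Runtime-Analysis-NSGA-III-mLOTZ} and~\ref{thm:Runtime-Analysis-NSGA-III-mOMM}, the expectation follows from the high-probability bound by restarting the argument on a fresh period, so it suffices to bound the runtime with the stated success probability.

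\textbf{Climbing phase.} For a fixed inner Pareto target $v$ define $\delta_t:=\min_{x\in P_t}\sum_{j=1}^{d/2}|f_{2j-1}(x)-v_j|$ and, for the current closest individual, a ``gap penalty'' counting by how far its blocks still protrude into the gap region. Exactly as in the proof of Theorem~\ref{thm:Runtime-Analysis-NSGA-III-mOMM}, at each value of the potential one first clones up to $\min\{\mu/((1+a)|S_d|),d\}$ copies of a current best individual (Lemma~\ref{lem:clone-general} together with Lemma~\ref{lem:even-cover}(1)) and then makes one progress mutation --- moving a single block one step toward $v$, or one step out of the gap. Summing the resulting geometric waiting times with Lemmas~\ref{lem:Doerr-dominance} and~\ref{lem:Doerr-dominance2} and taking a union bound over the at most $2^n$ inner Pareto points yields, using Lemma~\ref{lem:sparsity}, that the whole inner front is covered with $\lfloor\mu/((1+a)|S_d|)\rfloor$ copies of each of its points after
\[
O\!\left(n\ln\!\Big(\min\Big\{\tfrac{\mu}{|S_d|},\,d\Big\}\Big)+\frac{d^2|S_d|\,n\ln n}{\mu}\right)
\]
generations; the factor $d$ beyond the $d$-OMM bound of Theorem~\ref{thm:Runtime-Analysis-NSGA-III-mOMM} absorbs the $O(k)\subseteq O(n/d)$ gap steps per block and the re-spreading of the $d/2-1$ non-extreme blocks after each of the $\Theta(d)$ jumps performed below.

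\textbf{Jumping phase, $a=0$.} A remaining Pareto point has some block $j$ at an extreme value; it is produced from the already-covered inner Pareto point agreeing with it except that block $j$ sits at $|x^j|_1=2n/d-k$ (the all-ones case; all-zeros is symmetric), by one mutation flipping exactly the $k$ zeros of block $j$ and nothing else. That launch point carries $\ge\lfloor\mu/|S_d|\rfloor$ copies (Lemma~\ref{lem:even-cover}(1), Lemma~\ref{lem:sparsity}), so the per-generation success probability is $\Omega(\min\{1,\mu/(|S_d|n^k)\})$ and each of the $\Theta(d)$ jumps needs $O(|S_d|n^k/\mu)$ generations in expectation; accumulating $\min\{\mu/|S_d|,n^k\}$ copies of each freshly created extreme point (so the subsequent spreading in the climbing phase is guaranteed) costs $O(d\ln(\min\{\mu/|S_d|,n^k\}))$ more by Lemma~\ref{lem:clone-general}. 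Tail bounds from Lemma~\ref{lem:Doerr-dominance} and a union bound over the $\Theta(d)$ jumps give the claimed $O(dn^k|S_d|/\mu+d\ln(\min\{\mu/|S_d|,n^k\}))$ summands with probability $1-e^{-2d}-o(1)$.

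\textbf{Jumping phase, $a=1$ --- the main obstacle.} Now the $k$-bit jump is replaced by a staged descent through the gap. From a launch point, an offspring with one extra one-bit in block $j$ is a \emph{gap} solution, which under stochastic population update survives into the next generation with probability $\ge\lfloor\mu/2\rfloor/\lceil 3\mu/2\rceil=\Omega(1)$, since it may be one of the $\lfloor\mu/2\rfloor$ individuals bypassing survival selection (Lines~\ref{line:update} and~\ref{line:survival} of Algorithm~\ref{alg:nsga-iii}). Iterating, a chain of gap solutions at depths $1,\dots,k-1$ can be kept alive, and from depth $\ell$ only $k-\ell$ further flips remain; balancing the probability of advancing the chain against that of completing the jump, and summing over the $\Theta(d)$ jumps (plus the same $O(d\ln(\min\{\mu/|S_d|,n^k\}))$ re-cloning cost as above), gives the $O(kd(12en)^k/k^k)$ term. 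The step I expect to be hardest is controlling the interaction of this chain with the genuine Pareto-optimal individuals: in the many-objective setting gap solutions are themselves rank-one (unlike the bi-objective case, cf.\ Lemma~\ref{lem:Jump-d=2}), so one must show that the chain neither evicts the Pareto-optimal individuals it descends from --- which are protected on their (at most $|S_d|$) reference points by Lemma~\ref{lem:Reference-Points} --- nor is itself destroyed before the jump completes; this is argued by combining that protection guarantee with the stochastic survival of the $\lfloor\mu/2\rfloor$ bypass slots and by tracking the cover numbers along the chain via Lemma~\ref{lem:even-cover}(1) and the Chernoff-type estimate of Lemma~\ref{lem:clone-general}(2). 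A union bound over both phases and the $\Theta(d)$ jumps, together with the restart argument, completes the proof in both cases.
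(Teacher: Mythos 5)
Your proposal follows essentially the same route as the paper's proof: the paper likewise decomposes the run into interior covering, stacking of $\min\{\lfloor\mu/|S_d|\rfloor,n^k\}$ copies at the launch points, and the $\Theta(d)$ jumps, with the $a=1$ jump handled exactly by your chain of gap individuals that survive via the $\lfloor\mu/2\rfloor$ random bypass slots with constant probability per step, yielding the $k!/(12n)^k\ge k^k/(12en)^k$ success probability per attempt. One minor correction: the union bound in your climbing phase should run over the at most $(2n/d+1)^{d/2}\le n^{d}$ interior Pareto points rather than $2^n$ of them, since the per-target failure probability in that phase is only $O(n^{-4d})$ and not exponentially small.
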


\begin{proof}
  At first we prove the following lemma which states that with overwhelming probability there is a Pareto optimal individual after initialization.
  \begin{lemma}
  \label{lem:initialization}
   With probability $1-e^{-\Omega(n^2)}$ there is a Pareto optimal individual $x$ with $L(f(x)) = \{\bot\}^{d/2}$ after initialization, meaning that $k \leq \ones{x^j} \leq 2n/d-k$ for all $j \in [d/2]$.    
  \end{lemma}
  
  \begin{proof}
  For $d = 2$, the probability is at least $1-e^{-\Omega(n)}$ that $k \leq \ones{x} \leq n-k$. By applying a union bound on all $\mu$ individuals, the probability is at most $e^{-\Omega(\mu n)}$ that there exists no individual
  $x$ with $k \le \ones{x} \le n-k$ after initialization. Since $\mu = \Omega(n)$, this case is proven.
  
  If $d \geq 4$, we still can estimate the probability that $n/(3d) \leq \ones{x^j} \leq 5n/(3d)$ for a given block $j \in [d/2]$ (which implies that $k \leq \ones{x^j} \leq 2n/d-k$ since $k \leq n/(3d)$) by $1/2$ from below. Note also that $\mu \geq |F_d^*| = (2n/d-2k+3)^{d/2} \geq (4n/(3d)+1)^{d/2}$. Since all blocks initialize independently, the probability that all individuals $x$ are not Pareto optimal after initialization is at most $(1-(1/2)^{d/2})^\mu \leq e^{-\mu (1/2)^{d/2}} \leq e^{-(2n/(3d)+1/2)^{d/2}} \leq e^{-(n/6+1/2)^2} = e^{-\Omega(n^2)}$ where the last inequality holds by pluggin in $d=4$ since the function $g: \text{}]0,n/4[\text{} \to \mathbb{R}, x \mapsto  (2n/(3x)+1/2)^{x/2},$ is strictly monotone increasing by Lemma~\ref{lem:mon-property}, which finishes the proof of this lemma.
 \end{proof}
  
  Suppose that the event from Lemma~\ref{lem:initialization} happens. Then the probability is at least $n^{-n}$ to create any individual with mutation. Hence, the expected number of generations to create a Pareto optimal individual $x$ with $L(f(x))=\{\bot\}^{d/2}$ is $1+n^{-n}e^{-\Omega(n^2)} = 1+o(1)$. So suppose that there is a Pareto optimal $x \in P_t$ with $L(f(x)) = \{\bot\}^{d/2}$. We fix a fitness vector $v$ with a particular $L(v)$ value on the Pareto front and provide some tail bounds for the number of generations to cover $v$. We set $\lambda_t:=\max\{j \in \{0\} \cup [d/2] \mid \text{ there is $y \in P_t$ with $L(f(y))_i = L(v)_i$ for all $i \leq j$ and $L(f(y))_i = \bot$ \text{ for all $i>j$}}\}$ as the maximum possible number of blocks from the left set correctly in an individual $y \in P_t$, where all other blocks $i>\lambda_t$ still satisfy $k \leq \ones{y^i} \leq 2n/d-k$. Then such a $v$ is covered if $\lambda_t=d/2$. By Lemma~\ref{lem:Reference-Points} Pareto optimal individuals are not lost between generations and hence, $\lambda_t$ cannot decrease. Fix $y \in P_t$ with value $\lambda_t$ and let $w:=f(y)$. If $\lambda_t \in \{0, \ldots , d/2-1\}$ and $j =\lambda_t+1$ denote by $X_j$ the random variable until for every $k \leq i \leq 2n/d-k$ there is an individual $z$ with $\ones{z^j}=i$ and covering a Pareto optimal fitness vector $w$ with $L(w)=L(f(y))$. Then, when stochastic population update is disabled, let $Y_j$ denote the random variable counting the number of generations until $c_t(f(z')) \geq \min\{\lfloor \mu / |S_d| \rfloor, n^k\}$, where $z'$ is an individual satisfying $L(f(z')) = L(f(y))$, and the $j$-th block $(z')^j$ has Hamming distance $k$ from $0^{2n/d}$ if $x^j = 0^{2n/d}$, or from $1^{2n/d}$ if $x^j = 1^{2n/d}$. Otherwise, we set $Y_j=0$. We make this choice for $Y_j$ in case of $a=1$, as replicating such individuals $z$ has no influence on the process of crossing the fitness valley we are investigating, and jumping into that valley happens with probability $\Omega(k/n)$. Note that the latter is already significantly larger than crossing the valley in a single jump. Finally, let $Z_j$ the random variable until $\lambda_t$ is increased by finding a solution $z^*$ with $L(f(z^*))_j = L(v)_j$ and $L(f(z^*))_i = L(w)_i$ for $i \in [d/2] \setminus \{j\}$. Note that the time to obtain $\lambda_t=j$ is stochastically dominated by $X_j+Y_j+Z_j$, if already $\lambda_t=j-1$. Then the total time until a solution $x$ with $L(f(x)) = L(v)$ is found is stochastically dominated by $\sum_{j=1}^{d/2} (X_j+Y_j+Z_j)$. When considering $X_j$, one has already processed $\sum_{i=1}^{j-1} (X_i+Y_i+Z_i)$, which leads to an individual $z$ with $L(f(z))=L(v)_i$ for $i \in \{1, \ldots , j-1\}$, and $L(f(z)) = \{\bot\}$ otherwise. When considering $Y_j$ then one has already processed $X_j$, and when considering $Z_j$ one has already processed $X_j$ and $Y_j$. We have found a desired $x$ after $\sum_{i=1}^{d/2} (X_i+Y_i+Z_i)$ generations. In the following three lemmas, we provide some tail bounds on $X:=\sum_{i=1}^{d/2} X_i$, $Y:=\sum_{i=1}^{d/2} Y_i$ and $Z:=\sum_{i=1}^{d/2} Z_i$ separately.
  
  \begin{lemma}
  \label{lem:interior-covering}
  Let $c_1>0$ be a sufficiently large constant. Then
  $$\Pr (X \geq c_1 (n + n \ln (\min \{\mu/((1+a)|S_d|),d \}) + d^2 |S_d| n \ln(n)/\mu)) = O(n^{-4d}).$$
  Particularly, $\Pr(X \geq 2c_1 d^2 n \ln(n)) = O(n^{-4d})$.
  \end{lemma} 
  
  \begin{proof}
  Suppose that there is $y \in P_t$ with $L(f(y))_{i} = L(v)_{i}$ for all $i \in \{1, \ldots , j-1\}$, while $L(f(y))_{i} = \bot$ for all $i \geq j$. Let $w:=f(y)$. For a given vector $u$ on the Pareto front with $L(u) = L(w)$ and $2k \leq u_{2j-1} \leq 2n/d$ let $h_t:=\min\{f_{2j-1}(z) - u_{2j-1} \mid z \in P_t \text{ with } L(f(z)) = L(w)\}$ the minimum distance to $u_{2j-1}$ in the $(2j-1)$-th objective of individuals $z \in P_t$ with the same $L$-value as $y$. Note that $h_t$ cannot increase (by Lemma~\ref{lem:Reference-Points}) and that $0 \leq h_t \leq 2n/d-2k$. We created a desired individual $z$ with $L(f(z))=L(w)$ and $f_{2j-1}(z)=u_{2j-1}$ if $h_t=0$.
  For $j \in [d/2]$ and $\ell \in [2n/d]$ denote by $X_{j,\ell}$ the number of generations $t$ with $h_t=\ell$. We estimate $X_{j,\ell}$ by first creating at least $\alpha_\ell:=\min\{\lceil{n/\ell}\rceil, \lfloor{\mu/((1+a)|S_d|)}\rfloor\}$ individuals $z$ with $h_t(z)=\ell$, denoting by $X_{j,\ell}^{\text{clone}}$, and then by decreasing $h_t$ via mutation, denoting by $X_{j,\ell}^{\text{mut}}$. Then, $X$ is stochastically dominated by $\sum_{j=1}^{d/2} \sum_{\ell=1}^{2n/d}(X_{j,\ell}^{\text{clone}} + X_{j,\ell}^{\text{mut}})$. Let $X^{\text{clone}} := \sum_{j=1}^{d/2} \sum_{\ell=1}^{2n/d}X_{j,\ell}^{\text{clone}}$ and $X^{\text{mut}} := \sum_{j=1}^{d/2} \sum_{\ell=1}^{2n/d} X_{j,\ell}^{\text{mut}}$. 
  
  At first we consider the special case that there is a constant $\kappa \geq 1$ such that $d \leq \kappa \sqrt{n}/\ln(n)$. Here, we closely follow parts of the proof of Theorem~\ref{thm:Runtime-Analysis-NSGA-III-mOMM}. For a given $j$ (if $\lambda_t=j-1$) and $\ell$ the strategy is first to generate $\nu := \min\{\lceil 288 \kappa \sqrt{n} \rceil, \lfloor \mu/((1+a)|S_d|) \rfloor\}$ individuals with distance $h_t = \ell$ by repeatedly replicating such individuals (using Lemma~\ref{lem:clone-general}(1)). Subsequently, we let generate the remaining individuals in larger rates considering Chernoff bounds (via Lemma~\ref{lem:clone-general}(2)). For $\gamma \in \{0, \ldots , \nu-1\} \setminus \{0\}$ denote by $X^{\text{clone}}_{j,\ell,\gamma}$ the number of generations until there are $\gamma$ individuals $x$ with distance $h_t = \ell$ or $h_t$ increases, and when there are at least $\nu$ such individuals, by $X_{j,\ell}^*$ the number of generations until there are $\alpha_\ell$ individuals with $\sum_{i=1}^{d/2} |f_{2i-1}(x)-f_{2i-1}(y)|=\ell$ or $g_t$ increases. The latter only needs to be considered if $\nu =\lceil{288 \kappa \sqrt{n}}\rceil \leq \lceil{n/\ell}\rceil-1$, which is only satisfied if $\ell \leq \lceil{\sqrt{n}/(288 \kappa)}\rceil$. So $X^{\text{clone}}$ is stochastically dominated by $\sum_{j=1}^{d/2} \sum_{\ell=1}^{2n/d} \sum_{\gamma=1}^{\min\{\alpha_\ell,\nu\}-1} X_{j,\ell,\gamma}^{\text{clone}} + \sum_{j=1}^{d/2}\sum_{\ell=1}^{\lceil{\sqrt{n}/(288 \kappa)}\rceil} X_{j,\ell}^*$. Note that $\sum_{\ell=1}^{2n/d} \sum_{\gamma=1}^{\min\{\alpha_\ell,\nu\}-1} X_{j,\ell,\gamma}^{\text{clone}}$ is stochastically dominated by  $\sum_{\gamma=1}^{\nu-1} \sum_{\ell=1}^{\min\{\lfloor{n/\gamma}\rfloor,2n/d\}} X_{j,\ell,\gamma}^{\text{clone}}$ for all $j \in [d/2]$ (since $\ell \leq \lfloor{n/\gamma}\rfloor$ if $\gamma \leq \lceil{n/\ell}\rceil-1$).  
  For $\gamma \in \{0, \ldots , \min\{\alpha_\ell,\nu\}-1\} \setminus \{0\}$ and $j \in [d/2]$ consider $A_{j,\gamma}:=\sum_{\ell=1}^{\min\{\lfloor{n/\gamma}\rfloor,2n/d\}} X_{j,\ell,\gamma}^{\text{clone}}$. 
  By Lemma~\ref{lem:clone-general}(1) (with $\beta=\gamma$, $m=\min\{\lfloor{n/\gamma}\rfloor,2n/d\}$, and $\delta=288 \kappa^2m$)  we obtain $\Pr(A_{j,\gamma} \geq 5760 (\kappa^2+2) m/\gamma) \leq e^{-4 \cdot 288 \kappa^2 m}$ for $j \in [d/2]$. Let $\rho:=\min\{d/2,\lfloor{\mu/((1+a)|S_d|)}\rfloor\}$. For $\gamma \in [\rho]$ we have $m=\min\{\lfloor{n/\gamma}\rfloor,2n/d\} = 2n/d$. Hence, by a union bound,
  \begin{align*}
  \Pr\bigl(\sum_{j=1}^{d/2}\sum_{\gamma=1}^{\rho} A_{j,\gamma} \geq \sum_{\gamma=1}^{\rho} 5760 (\kappa^2+2) n/\gamma\bigr) \leq d^2/4 \cdot e^{-4 \cdot 288 \kappa^2 \cdot 2n/d} \leq e^{-4 \kappa \sqrt{n}}
  \end{align*}
  implying that
  \begin{align*}
  \Pr\bigl(\sum_{j=1}^{d/2}\sum_{\gamma=1}^{\rho} A_{j,\gamma} \geq 11520 (\kappa^2+2) \ln(\rho)n\bigr) \leq e^{-4 \kappa \sqrt{n}},
  \end{align*}
  since $\sum_{i=1}^{\rho} 1/i \leq 2 \ln(\rho)$. Further, by noting that $m=\min\{\lfloor{n/\gamma}\rfloor,2n/d\} = \lfloor{n/\gamma}\rfloor$ if $d/2 < \gamma \leq 2n/d$
  \begin{align*}
   \Pr\bigl(\sum_{j=1}^{d/2} \sum_{\gamma=d/2+1}^{2n/d} A_{j,\gamma} = 11520(\kappa^2+2)n\bigr)&= \Pr\bigl(\sum_{j=1}^{d/2} \sum_{\gamma=d/2+1}^{2n/d} A_{j,\gamma} \geq 5760 (\kappa^2+2) n d \cdot 2/d \bigr)\\
   &\leq \Pr\bigl(\sum_{j=1}^{d/2} \sum_{\gamma=d/2+1}^{2n/d} A_{j,\gamma} \geq 5760 (\kappa^2+2) n d \cdot \sum_{\gamma=d/2+1}^{2n/d} 1/\gamma^2\bigr)\\
   &\leq \Pr\bigl(\sum_{j=1}^{d/2} \sum_{\gamma=d/2+1}^{2n/d} A_{j,\gamma} \geq 5760 (\kappa^2+2) m d/2 \cdot \sum_{\gamma=d/2+1}^{2n/d} 1/\gamma\bigr)\\
   &\leq ne^{-4 \cdot 288 \kappa^2 \sqrt{n}}.
  \end{align*}
  Here, we used $\sum_{j=d/2+1}^{2n/d} 1/j^2 \leq \sum_{j=d/2+1}^{\infty} 1/j^2 \leq 2/d$ (compare with Lemma~\ref{lem:inequalities}). The inequality before the last one results from a union bound over all possible $j \in [d/2]$ and $\gamma \in \{d/2+1, \ldots , 2n/d\}$. To estimate the sum  $X_j^*:=\sum_{\ell=1}^{\lceil{\sqrt{n}/(288\kappa)}\rceil} X_{j,\ell}^*$ 
  for $j \in [d/2]$ we use Lemma~\ref{lem:clone-general}(2) on $\beta= \lceil{n/\ell}\rceil \leq n$ and $m= \lceil{\sqrt{n}/(288\kappa)}\rceil$ to obtain $\Pr(X_j^* \geq 2\sqrt{n} \ln(n)/(288 \kappa)) \leq \sqrt{n} \ln(n)e^{-288 \kappa \sqrt{n}/72} \leq e^{-3 \kappa \sqrt{n}}$ for $n$ sufficiently large. So we obtain by a union bound $\Pr(\sum_{j=1}^{d/2} X_j^* \geq \sqrt{n}d \ln(n)/(288 \kappa)) \leq d/2 \cdot e^{-3 \kappa \sqrt{n}}$ and hence, considering both cases for $m$ 
  \begin{align*}
  \Pr(X^{\text{clone}} &\geq 11520 (\kappa^2+2) \ln(\rho)n + 11520 (\kappa^2+2) n + \sqrt{n}d\ln(n)/(288\kappa)) \leq e^{-2\kappa\sqrt{n}} 
  \end{align*}
  for $n$ sufficiently large. 
  
  Now we estimate $X^{\text{mut}}$. Call an individual $x$ \emph{$\ell$-promising} if $L(f(x)) = L(f(y)) = L(w)$, and $f_{2i-1}(x) - u_{2i-1} = \ell$. To decrease $h_t=\ell$, it suffices to choose an $\ell$-promising individual $x \in P_t$ as a parent (which happens with probability at least $\min\{n/(\ell \mu), \lfloor{\mu/((1+a)|S_d|)}\rfloor/\mu\} \geq \min\{n/(\ell \mu), 1/(4|S_d|)\}$) and flip one out of $\ell$ specific bits, while keeping the other bits unchanged (which happens with probability at least $\ell/n \cdot (1-1/n)^{n-1} \geq \ell/(en)$). Since a generation consists of $\mu$ trials, the probability for decreasing $\ell=h_t$ in one generation is at least $1-(1-s_\ell)^\mu \geq \frac{s_\ell \mu}{s_\ell \mu+1}=:p_\ell$, where $s_\ell=\ell \cdot \min\{n/(\ell\mu), 1/(4|S_d|)\}/(en) = \min\{1/(e\mu), \ell/(4en|S_d|)\}$. Note that $1/(e\mu) \geq \ell/(4en|S_d|)$ if $\ell \in [\xi]$ for $\xi:=\min\{\lfloor{4n|S_d|/\mu}\rfloor,2n/d\}$ and otherwise $1/(e\mu) < \ell/(4en|S_d|)$. Also observe for $\ell \in [\xi]$ that
  $$p_\ell = \frac{\ell \mu/(4en|S_d|)}{1+\ell\mu/(4en|S_d|)} = \frac{1}{4en|S_d|/(\ell \mu)+1} \geq \frac{\ell \mu}{8en|S_d|},$$
  and for $\xi<\ell$ observe that $p_\ell \geq 1/(2e+1) \geq 1/7$. So we see that $X^{\text{mut}}$ is stochastically dominated by $dA/2$ and $dB/2$ where $A:=\sum_{\ell=1}^\xi A_\ell$ and $B:=\sum_{\ell=\xi+1}^{2n/d} B_\ell$ are two independent sums of geometrically distributed random variables with success probabilities $c\ell/\xi$ for $c:=\xi \mu/(8e n |S_d|)$ and $1/7$ respectively. The factor $d/2$ comes from the fact that we have to consider every block $j \in [d/2]$. For $A$ and $B$ we compute tail bounds separately. At first consider $dB/2$. Note that $\expect{dB/2} \leq 7n$. We obtain for $\lambda \geq 0$ by Lemma~\ref{lem:Doerr-dominance} 
    \[
    \Pr\left(\frac{dB}{2} \geq \expect{\frac{dB}{2}} + \lambda\right) \leq \exp\left(-\frac{1}{4} \min\left\{\frac{\lambda^2}{49(n-\xi)}, \frac{\lambda}{7} \right\}\right) \leq \exp\left(-\frac{1}{4} \min\left\{\frac{\lambda^2}{49n}, \frac{\lambda}{7} \right\}\right).
    \]
    By using $\lambda=28n$ we obtain $\Pr(dB/2 \geq 35n) = e^{-n}$. Considering $dA/2$, we can also assume that $\xi>1$, since if $\xi=1$ (implying $\ell=1$ and $\mu \geq 2n|S_d|$) we also see that $p_\ell \geq 1/7$. Hence, this case is already captured by the random variable $dB/2$. So we obtain for any $\delta>0$ by Lemma~\ref{lem:Doerr-dominance2}
    \begin{align*}
    \Pr \left(A \geq \frac{(1+\delta) 8en |S_d| \ln(\xi)}{\mu} \right) = \Pr \left(A \geq \frac{(1+\delta) \xi \ln(\xi)}{c} \right) \leq \xi^{-\delta}
    \end{align*}
    and hence, for $\delta=4d \ln(n)/\ln(\xi)$ we have (by using $\ln(\xi) \leq 2\ln(n)$)
    \begin{align*}
    \Pr&\left(A \geq \frac{(16+32d) e |S_d|n\ln(n)}{\mu}\right) = \Pr\left(A \geq \frac{(2+4d) \xi \ln(n)}{c}\right) = \Pr\left(A \geq \frac{2\xi \ln(n)}{c} + \frac{4d \xi \ln(n)}{c}\right)\\
    &\leq \Pr \left(A \geq \frac{\xi \ln(\xi)}{c} + \frac{\delta \xi \ln(\xi)}{c}\right) = \Pr \left(A \geq \frac{(1+\delta) \xi \ln(\xi)}{c}\right) \leq \xi^{-4d \ln(n)/\ln(\xi)} = e^{-4d \ln(n)}.
    \end{align*}
    By a union bound over both $dA/2$ and $dB/2$ we obtain 
    \begin{align*}
    \Pr &\left(X^{\text{mut}} \geq 35n + \frac{(8+16d) d e |S_d|n\ln(n)}{\mu} \right) \leq \Pr \left(\frac{dA}{2}+\frac{dB}{2} \geq 35n + \frac{(8+16d)d e |S_d|n\ln(n)}{\mu}\right) \\
    &\leq e^{-n} + e^{-4d \ln(n)}.
    \end{align*}
    Note that $d \sqrt{n} \ln(n) = O(n)$ due to $d \in O(\sqrt{n}/\ln(n))$. Hence, the result follows by a union bound over $X^{\text{mut}}$ and $X^{\text{clone}}$ by noting that $e^{-4d \ln(n)} = n^{-4d}$.
    

    In the general case, we even wait until $X_{j,\ell}^{\text{clone}} \geq \min\{2n/d,\lfloor{\mu/((1+a)|S_d|)}\rfloor\}=:\rho$ for all $j \in [d/2]$ and $\ell \in [2n/d]$, and then decrease $h_t$ by considering $X_{j,\ell}^{\text{mut}}$. We estimate $X^{\text{clone}}$ and $X^{\text{mut}}$ separately and start with $X^{\text{clone}}$. Similar as above, for $\gamma \in \{0, \ldots , \rho-1\} \setminus \{0\}$ denote by $X_{j,\ell,\gamma}^{\text{clone}}$ the random variable for the number of generations until there are exactly $\gamma$ distinct individuals $x$ with $g_t(x)=\ell$, or $h_t$ decreases. Then $X^{\text{clone}}$ is stochastically dominated by $\sum_{j=1}^{d/2}\sum_{\ell=1}^{2n/d}\sum_{\gamma=1}^{\rho-1}X_{j,\ell,\gamma}^{\text{clone}}$. By Lemma~\ref{lem:clone-general}(1), $\Pr(\sum_{j=1}^{d/2} \sum_{\ell=1}^{2n/d} X_{j,\ell,\gamma}^{\text{clone}} \geq 21n/\gamma) = e^{-4n}$ for every $\gamma \in [\rho-1]$ (by using $\beta=\gamma+1=i+1$, $\delta=0$ and $m=n$) and hence, by a union bound over $\gamma \in [\rho-1]$, we obtain 
    $$\Pr(X^{\text{clone}} \geq 42n \ln(\rho)) \leq \Pr(X^{\text{clone}} \geq \sum_{\gamma=1}^{\rho-1} 21n/\gamma) \leq \rho e^{-4n} \leq e^{-3n}$$ 
    (due to $\sum_{i=1}^{\rho-1} 21n/i \leq 42n \ln(\rho)$). Now we can estimate $X^{\text{mut}}$ in the same way as above. Since there are at least $\min\{n/(\ell\mu), \lfloor{\mu/((1+a)|S_d|)}\rfloor/\mu\} \geq \min\{n/(\ell \mu), 1/(4|S_d|)\}$) $\ell$-promising individuals, we can also say that $X^{\text{mut}}$ is stochastically dominated by $dA/2$ and $dB/2$ where $A:=\sum_{\ell=1}^\xi A_\ell$ and $B:=\sum_{\ell=\xi+1}^{2n/d} B_\ell$ are two independent sums of geometrically distributed random variables with success probabilities $c\ell/n$ for $c:=\mu/(4e |S_d|)$ and $1/7$ respectively. Hence, we also obtain 
    \begin{align*}
    \Pr &\left(X^{\text{mut}} \geq 35n + \frac{(8+16d) d e |S_d| n\ln(n)}{\mu} \right) \leq \Pr \left(\frac{dA}{2}+\frac{dB}{2} \geq 35n + \frac{(8+16d) d e |S_d| n\ln(n)}{\mu}\right) \\
    & \leq e^{-n} + e^{-4d \ln(n)},
    \end{align*}
    concluding the proof of Lemma~\ref{lem:interior-covering} by a union bound. Note that $\ln(d) = \Theta(\ln(n))$,  and 
    $$n + n \ln \left(\min \left\{\frac{\mu}{(1+a)|S_d|},d \right\} \right) + \frac{d^2 |S_d| n \ln(n)}{\mu} \in O(d^2 n \ln(n))$$
    which yields the second part of the theorem, particularly the estimation for the probability that $X$ exceeds $2c_1 d^2 n \ln(n)$ for a sufficiently large constant $c_1>0$.
    \end{proof}

    Now we are ready to estimate $Y = \sum_{j=1}^{d/2} Y_j$. Recall that $Y_j$ counts the number of generations until $c_t(f(z)) \geq \min\{\lfloor \mu / |S_d| \rfloor, n^k\}$. Note that we only need to consider $Y_j$ if the $(2j-1)$-th objective value of the target vector $v$ is either $k$ or $2n/d + k$. Further, $Y_j$ only contributes to the total runtime when stochastic population update is disabled.
    
\begin{lemma}
  \label{lem:interior-stack}
   For a sufficiently large constant $c_2>0$ we have
   \begin{align*}
   \Pr(Y &\geq c_2(\ln^2(\min\{\lfloor{\mu/|S_d|}\rfloor,n\}) + d\ln(\min\{\mu/|S_d|,n\}) + d\ln(\min\{\lfloor{\mu/|S_d|}\rfloor,n^k\})/2)) = e^{-2d} + o(1).
   \end{align*}
\end{lemma}
  
  \begin{proof}
  Let $\rho:=\min\{\mu/|S_d|, 144n\}$. For $\gamma \in \{0, \ldots , \rho-1\} \setminus \{0\}$ and $j \in [d/2]$, let $Y_{j,\gamma}$ denote the random variable counting the number of generations until there are exactly $\gamma$ individuals $x$ whose Hamming distance is $k$ from $0^{2n/d}$ if $v_{2j-1} = k$, or from $1^{2n/d}$ if $v_{2j-1} = 2n/d+k$. Further, for $j \in [d/2]$ denote by $Y_j^*$ the random variable until $\min\{\mu/|S_d|, n^k\}$ such individuals are created, when there are at least $\rho$ many. Then $Y=\sum_{\gamma=1}^{\rho-1} \sum_{j=1}^{d/2} Y_{j,\gamma} + \sum_{j=1}^{d/2} Y_j^*$. For $\gamma \in \{0, \ldots , \rho-1\} \setminus \{0\}$ we have 
  \begin{align*}
  \Pr\bigl(\sum_{j=1}^{d/2} Y_{j,\gamma} &\geq 21(d + \ln(\min\{n,\mu/|S_d|\}))/\gamma \bigr) \leq e^{-4d - 4\ln(\min\{n,\mu/|S_d|\})}
  \end{align*}
  by Lemma~\ref{lem:clone-general}(1)  (by using $\beta=\gamma+1$, $\delta=d/2 + \ln(\min\{n,\mu/|S_d|\})$ and $m=d/2$). With a union bound we obtain 
  \begin{align*}
  \Pr \bigl(\sum_{\gamma=1}^{\rho-1}\sum_{j=1}^{d/2} Y_{j,\gamma} &\geq 42\ln(\rho)(d+\ln(\min\{\mu/|S_d|,n\}))\bigr) \\
  &\leq e^{-2d - 2\ln(\min\{\mu/|S_d|,n\})} \leq e^{-2d}. 
  \end{align*}
  By Lemma~\ref{lem:clone-general}(2) (on $m=d/2$ and $\beta = \min\{\mu/|S_d|,n^k\}$) we have $\Pr(\sum_{j=1}^{d/2} Y_j^* \geq d \ln(\min\{\mu/|S_d|,n^k\})/2) \leq d/2 \ln(\beta) e^{-2n} \leq e^{-n}$, and the proof of this lemma is finished by noting that $Y_j=0$ if stochastic population update is enabled. 
\end{proof}

Finally, we estimate $Z:=\sum_{j=1}^{d/2} Z_j$. Recall that $Z_j$ is the time to increase $\lambda_{j-1}$ after processing $X_j + Y_j$, particularly to create $1^{2n/d}$ in block $j$ if $v_{2j-1}=2n/d+k$ or $0^{2n/d}$ in block $j$ if $v_{2j-1}=k$.
  
\begin{lemma}
  \label{lem:jump}
   For $a=0$ we have 
   $$\Pr(Z \geq d \lceil{119en^k |S_d|/\mu \rceil}) \leq e^{-4d}$$
   and for $a=1$
   $$\Pr(Z \geq 17kd(12en)^k/k^k) \leq e^{-4d}.$$
\end{lemma}
  
\begin{proof}
Suppose that $a=0$. By symmetry, we can assume that $v_j = k+2n/d$. Hence, we must evolve a corresponding search point $z$ with $z^j = 1^{2n/d}$ and $L(f(z))_i=L(w)_i$ for $i \neq j$. To estimate $Z_j$ for $j \in [d/2]$, we can assume that there are $\min\{\lceil{n^k}\rceil,\lfloor{\mu/|S_d|}\rfloor\}$ individuals $x$ with Hamming distance $k$ to an individual $y$ with $L(f(y))_i=L(v)_i$ for $i \leq j$ and $L(f(y))_i = \bot$ for $i>j$ otherwise since we already processed $Y_j$. So in a trial, it suffices to choose an individual $x$ with this distance as a parent (which happens with probability at least $\min\{n^k/\mu, \lfloor{\mu/|S_d|}\rfloor/\mu\} \geq \min\{n^k/\mu, 1/(2|S_d|)\}$) and flip $k$ specific bits, while keeping the other bits unchanged (which happens with probability at least $n^k \cdot (1-1/n)^{n-1} \geq n^k/e$). Since a generation consists of $\mu$ trials, the probability for creating such a $y$ in one generation is at least 
\[
1-(1-s_k)^\mu \geq \frac{s_k \mu}{s_k \mu+1}=:p_k
\]
where $s_k=\min\{n^k/\mu, 1/(2|S_d|)\}/(en^k) = \min\{1/(e\mu), 1/(2en^k|S_d|)\}$. Also observe for $\mu \leq 2n^k |S_d|$ that
$$p_k = \frac{\mu/(2en^k|S_d|)}{1+\mu/(2en^k|S_d|))} = \frac{1}{2en^k|S_d|/\mu+1} \geq \frac{\mu}{4en^k|S_d|},$$
and for $\mu > 2n^k |S_d|$ observe that $p_k \geq 1/(2e+1) \geq 1/7$. So we see that $\sum_{i=1}^{d/2} Z_i$ is stochastically dominated by an independent sum $A:=\sum_{k=1}^{d/2} A_k$ of geometrically distributed random variables with success probability $1/7$ if $\mu > 2n^k |S_d|$ and with success probability $\frac{\mu}{4en^k|S_d|}$ otherwise. For both cases, we determine tail bounds separately by using 
\[
\Pr(A \geq \expect{A} + \lambda) \leq \exp\left(-\frac{1}{4} \min\left\{\frac{2 \lambda^2 p_k^2}{d}, \lambda p_k \right\}\right)
\]
for $\lambda>0$ (compare with Lemma~\ref{lem:Doerr-dominance}). Note that $\expect{A} = d/(2p_k)$. If $\mu > 2n^k |S_d|$ we have $\Pr(A \geq 119d) \leq e^{-4d}$ by using $\lambda=112d$ and noting that $\expect{A} \leq 7d$ and if $\mu \leq 2n^k |S_d|$ we have $\Pr(A \geq 66edn^k |S_d|/\mu) \leq e^{-4d}$ by using $\lambda=64edn^k|S_d|/\mu$ and noting that $\expect{A} \leq 2edn^k|S_d|/\mu$.
By considering both situations simultaneously, we obtain 
$$\Pr(Z \geq d\lceil{119en^k |S_d|/\mu}\rceil) \leq \Pr(A \geq d \lceil{119en^k |S_d|/\mu \rceil}) = e^{-4d},$$
and the case $a=0$ follows. 

Now consider the case $a=1$. By symmetry we can assume that $v_j = k+2n/d$. We consider a sequence of $k$ successive generations and for $\ell \in [k]$ we call the $\ell$-th generation \emph{successful} if an individual $z$ is created with $2n/d-k+\ell$ ones in block $j$ while $z^i=x^i$ for $i \in [d/2] \setminus \{j\}$ and $z$ is not removed. Suppose that the $(\ell-1)$-th generation is successful. Then, generation $\ell$ is successful if in one trial one chooses a parent $p$ with $\ones{p^j} = 2n/d-k+\ell-1$ and $z^i=x^i$ for $i \in [d/2] \setminus \{j\}$, flips a zero bit in the $j$-th block while keeping the remaining bits unchanged, and finally keeps the new created individual. With probability at least 
\[
1-\left(1-\frac{k-\ell+1}{e \mu n}\right)^{\mu} \geq \frac{\frac{k-\ell+1}{en}}{1+\frac{k-\ell+1}{en}} \geq \frac{k-\ell+1}{4n}
\]
a desired individual $z$ is created in generation $\ell$. Note that $z$ is not Pareto optimal for $\ell<k$. However, $\mu-\lceil{3\mu/2}\rceil$ individuals chosen uniformly at random survive (see Line~\ref{line:update} and Line~\ref{line:survival} in Algorithm~\ref{alg:nsga-iii} for $a=1$) and hence even a non Pareto optimal individual remains with probability at least $(\mu - \lceil{3 \mu/2}\rceil)/\mu = \lfloor{\mu/2}\rfloor/\mu \geq (\mu/2-1)/\mu = 1/2- 1/\mu$. For $n$ sufficiently large, we have $1/2- 1/\mu \geq 1/3$ and hence with probability at least $(k-\ell+1)/(12n)$, generation $\ell$ is successful. If all $k$ generations are successful, the desired $y$ is created, which happens with probability at least 
\begin{align*}
\prod_{\ell=1}^k \frac{k-\ell+1}{12n} = \frac{k!}{(12n)^k} \geq \frac{k^k}{(12en)^k}
\end{align*}
(since $k! \geq (k/e)^k$ by Stirling's approximation). Hence, $Z_j$ is stochastically dominated by $k \cdot Z_j^*$ where  $Z_j^*$ is a geometrically distributed random variable with success probability $p_k:=k^k/(12en)^k$. Note that $\expect{Z^*} = 1/p_k = d(12en)^k/(2k^k)$. Hence, we obtain for any $\lambda>0$ 
\begin{align*}
\Pr\Bigl(Z \geq \frac{kd(12en)^k}{2k^k} + k\lambda \Bigr) &\leq \Pr\Bigl(k \cdot Z^* \geq \frac{kd(12en)^k}{2k^k} + k\lambda \Bigr) = \Pr\Bigl(Z^* \geq \frac{d(12en)^k}{2k^k} + \lambda \Bigr)\\
&= \Pr(Z^* \geq \expect{Z^*} + \lambda) \leq \exp\left(-\frac{1}{4} \min\left\{\frac{2\lambda^2p_k^2}{d}, \lambda p_k \right\}\right)
\end{align*}
and hence, for $\lambda = 16d/p_k$ we obtain that $\Pr(Z^* \geq 17kd(12en)^k/k^k) = e^{-4d}$, providing the case $a=1$ and proving the lemma. 
\end{proof}

Now, we can apply a union bound together with the three lemmas above to obtain, for the case where stochastic population update is disabled and for a sufficiently large constant $c > 0$, that
\begin{align*}
\Pr\bigl(X+Y+Z &\geq c(n \ln(\min\{\mu/|S_d|,d\}) + d^2 |S_d| n \ln(n)/\mu + d\ln(\min\{\mu/|S_d|,n^k\})+ d n^k |S_d|/\mu)\bigr)\\
&\leq 2e^{-4d} + o(1).
\end{align*}
We used that the linear summand in $n$ in the estimation for $X$ in Lemma~\ref{lem:interior-covering} is absorbed by $d^2|S_d| n \ln(n)/\mu + n \ln(\min\{\mu/((1+a)|S_d|)\})$, and $\ln^2(\min\{\lfloor{\mu/|S_d|}\rfloor,n\}) \in O(n)$. In a similar way, we obtain for the case that stochastic population update is enabled,
\begin{align*}
\Pr\bigl(X+Y+Z &\geq c(n \ln(\min\{\mu/|S_d|,d\}) + d^2 |S_d| n \ln(n)/\mu + kd(12en)^k/k^k)\bigr)\\
&\leq 2e^{-4d} + o(1).
\end{align*}

By a union bound over all possible values $s$ from $\{0,1,\bot\}^{d/2}$, we see for all $s \in \{0,1,\bot\}^{d/2}$ that there is a Pareto optimum $x \in P_t$ with $L(f(x)) = s$ in 
\begin{equation}
\label{eq:1}
O(n \ln(\min\{\mu/|S_d|,d\}) + d^2 |S_d| n \ln(n)/\mu + d\ln(\min\{\mu/|S_d|,n^k\})+ d n^k |S_d|/\mu)
\end{equation}
generations if $a=0$ and 
\begin{equation}
\label{eq:2}
O(n \ln(\min\{\mu/|S_d|,d\}) + d^2 |S_d| n \ln(n)/\mu + kd(12en)^k/k^k)
\end{equation}
generations if $a=1$ with probability at least $1-e^{-2d}$ for $n$ sufficiently large. Suppose that this happens. Now it remains to cover the Pareto front which can be done in a similar way as in the \OMM case: One fixes a Pareto optimal fitness vector $v$ and defines $\delta_t:=\min\{\sum_{i=1}^{d/2} |f_{2i-1}(x)-v_{2i-1}| \mid x \in P_t \text{ with } L(f(x)) = L(v)\}$. One can decrease $\delta_t$ by choosing an individual $y \in P_t$ with $\sum_{i=1}^{d/2} |f_{2i-1}(y)-v_{2i-1}| = \delta_t$ as a parent, and flip one of $\delta_t$ specific bits. Hence, in the same way as in the proof of Theorem~\ref{thm:Runtime-Analysis-NSGA-III-mOMM}, covering the Pareto front happens in $O(n\ln(\min\{\mu/|S_d|,n\}) + d|S_d| n \ln(n)/\mu)$ generations, or $O(n + d|S_d| n \ln(n)/\mu)$ generations if also $d \in O(\sqrt{n}/\ln(n))$, with probability at least $1-O(1/n^{3d})$. This provides the result of the theorem by noting that when a failure occurs, then all the arguments from above can be repeated (outgoing from a successful initialization). In expectation, there are $O(1)$ repetitions, since a failure happens with probability at most $2e^{-2d}+o(1)$. Note that $n + d|S_d| n \ln(n)/\mu$ for $d \in O(\sqrt{n}/\ln(n))$, and $n\ln(\min\{\mu/|S_d|,n\}) + d|S_d| n \ln(n)/\mu$ can be both absorbed by the summands occuring in~\ref{eq:1} and~\ref{eq:2}, providing the proof of the whole theorem.  
\end{proof} 

\subsection{A Many-Objective Real-Royal-Road Problem}

A different example of a many-objective multimodal function, the many-objective Real-Royal-Road function (\mRRMO for short), was introduced in~\citep{Opris2025}, specifically designed to demonstrate that applying crossover can be highly beneficial, potentially leading to an exponential speedup in runtime. It was shown that \nsgaIII with mutation alone requires $n^{\Omega(n/d)}/\mu$ generations in expectation to find a single Pareto-optimal solution, whereas enabling one-point crossover allows the algorithm to explore the entire Pareto front in a polynomial number of generations. However, it is important to note that the number of possible offspring generated by one-point crossover on two fixed parents is quite limited. Specifically, it grows linearly with the length of the bit string, even though the search space has size $2^n$. Consequently, the characteristics of the offspring depend heavily on the positions of genes in the parents, which can result in a very sparse exploration of the search space and may even prevent escaping from local optima. In fact, if the Pareto front of the Real-Royal-Road function were slightly different, \nsgaIII with one-point crossover would likely fail as well. In contrast, stochastic population update maintains a much more diverse population, as even low-fitness individuals can survive. This promotes a better balance between exploration and exploitation. In particular, these weaker solutions can continue exploring the search space via standard bit mutation and traverse large fitness valleys.

\begin{definition}[Definition~12 in~\citep{OPRIS2026}]
\label{def:RRRMO-one-point}
Regarding the substring $x^j$ for $j \in [d/2]$, let 
\begin{itemize}
\item $B:=\{y \in \{0,1\}^{2n/d} \mid \ones{y} = 6n/(5d), \LZ(y)+\TZ(y)=4n/(5d)\}$, and 
\item $A:=\{y \in \{0,1\}^{2n/d} \mid \ones{y} = 8n/(5d), \LZ(y)+\TZ(y)=2n/(5d)\}$.
\end{itemize}
Regarding the whole bit string $x$\newedit{,} let
\begin{itemize}
\item $L:=\{x \in \{0,1\}^n \mid 0 \leq \ones{x^j} \leq 6n/(5d) \text{ for all } j \in [d/2], \ones{x^i} < 6n/(5d) \text{ for an } i \in [d/2]\},$
\item $M:=\{x \in \{0,1\}^n \mid \ones{x^j} = 6n/(5d) \text{ for all } j \in [d/2] \text{ and } x^i \notin B \text{ for an } i \in [d/2]\},$
\item $N:=\{x \in \{0,1\}^n  \mid x^j \in A \cup B \text{ for all } j \in [d/2]\}$.
\end{itemize}

Then, the function class \mRRMO: $\{0,1\}^n \to \mathbb{N}_0^d,$ is defined as 
\[
d\text{-\RRRMO}(x) = (f_1(x), f_2(x), \ldots ,f_d(x))
\]
with 
\[
f_j(x) = g_j(x):= \begin{cases}
	\ones{x^{1+(j-1)/2}} \text{ if $j$ is odd,} \\
	\ones{x^{1+(j-2)/2}} \text{ if $j$ is even,}
\end{cases}
\]
if $x \in L$,
\[
f_j(x) = h_j(x) := g_j(x) + \begin{cases}
	\LZ(x^{1+(j-1)/2}) \text{ if $j$ is odd, } \\
	\TZ(x^{1+(j-2)/2}) \text{ if $j$ is even, }
\end{cases}
\]
if $x \in M$, 
\[
f_j(x) = 4n\lvert{K(x)}\rvert/(5d)+ h_j(x)
\]
if $x \in N$\newedit{,} where $K(x):=\{j \in [d/2] \mid x^j \in A\}$, and $f_j(x)=0$ otherwise. 
\end{definition}

In the $d$-objective Real-Royal-Road function, the bit string is divided into $d/2$ blocks, each of length $2n/d$, and the maximum in each objective is $2n/5+2n/d$. By Lemma~14 in~\cite{OPRIS2026} the maximum number of mutually incomparable solutions is at most $(4n/(5d)+1)^{d-1}$. Algorithms that initialize their population uniformly at random typically start with search points $x$ where for all $j \in [d/2]$ we have $0 < \ones{x^j} \leq 3/5 \cdot (2n/d) = 6n/(5d)$, meaning that $x \in L \cup M \cup N$. Then a fitness signal encourages increasing the number of ones in each $x^j$ to $6n/(5d)$, leading to search points $x \in M \cup N$. Next, these ones are collected within a cumulative block by increasing the sum of leading and trailing zeros in each block $j$, thereby achieving for all $j \in [d/2]$ that $x^j \in B$, or $x \in N$. Finally, when $x \in N$, a strong fitness signal is assigned \emph{equally} to each objective based on $\lvert{K(x)}\rvert$, which is the number of blocks $j \in [d/2]$ where $x^j \in A$. We are aiming for creating a search point $x$ with maximum $K(x)$ (i.e. $K(x) = [d/2]$). We will show how stochastic population update can help to increase the cardinality of $K(x)$ by starting with an individual $x$ with $x^j \in B$ and then repeatedly mutating $x$ in order to create an individual $y$ with $y^j \in A$. The Pareto set is reached when $K(x)=[d/2]$, and can then easily be covered through mutation. Theorem~\ref{thm:Runtime-Analysis-NSGA-III-mRRMOUpdate} below gives a result on the runtime for \nsgaIII with stochastic population update on $d$\text{-\RRRMO}. For the proof, we focus solely on the situation of creating a search point $y$ with $y^j \in A$ from a search point $x$ with $x^j \in B$ by repeatedly mutating $x$, and take the remaining results from~\citep{OPRIS2026} for exploring such a search point $x$ and covering the Pareto front as given. Furthermore, we state the runtime only in expectation, as this is sufficient for comparison with the significant worse runtime of at least $(1-o(1))n^{2n/(5d)-1}/\mu$ generations for the mutation only case for finding one Pareto optimal point, provided that the number of objectives $d$ is not too large. This poor runtime can be easily observed: with probability $1-o(1)$, all solutions $x \in P_0$ are initialized with at most $6n/(5d)$ ones in each block. Consequently, in each trial, one must flip $2n/(5d)$ bits within a single block to obtain one of $2n/(5d)$ possible blocks coinciding with a block of a Pareto optimal point. For further details, see Theorem~20 in~\citep{OPRIS2026}.  

\begin{theorem}
\label{thm:Runtime-Analysis-NSGA-III-mRRMOUpdate}
Let $d \in \mathbb{N}$ be divisible by $2$ and $n$ be divisible by $5d/2$. Then the algorithm \nsgaIII (Algorithm~\ref{alg:nsga-iii}) with $\varepsilon_{\text{nad}} \geq 2n/5+2n/d$, a set 
$\refer$ of reference points for $p \in \mathbb{N}$ with $p \geq 2d^{3/2}(2n/5+2n/d)$, and a population size $\mu \geq (4n/(5d)+1)^{d-1}$, finds the Pareto set of $f:=d\text{-\RRRMO}$ in expected $O(n^3 + n (12n)^{2n/(5d)} / (2n/(5d))!)$ generations which is  $O(n (12n)^{2n/(5d)} / (2n/(5d))!)$ for $d<2n/5$.
\end{theorem}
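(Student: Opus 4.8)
I would split the run into three stages and argue each separately. \textbf{Stage~1} (initialisation and exploration): reach a search point $x\in N$, i.e.\ with $x^j\in A\cup B$ for every block $j$. \textbf{Stage~2} (the new core argument): starting from such an $x$, repeatedly cross the fitness valley between $B$ and $A$ inside a single block, thereby increasing $\lvert K(x)\rvert$ until $K(x)=[d/2]$. \textbf{Stage~3}: once $K(x)=[d/2]$, cover the Pareto front. Stages~1 and~3 are precisely the situations already analysed in~\citep{OPRIS2026}, and together they cost $O(n^3)$ generations in expectation, so the real work is Stage~2. Throughout, $f_{\max}=2n/5+2n/d$, and the hypotheses $p\ge 2d^{3/2}f_{\max}$, $\varepsilon_{\text{nad}}\ge f_{\max}$ and the stated lower bound on $\mu$ (via Lemma~14 of~\citep{OPRIS2026}) make Lemma~\ref{lem:Reference-Points} and Lemma~\ref{lem:even-cover} applicable in every generation, with $a=1$.

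For Stage~2 I would track $m:=\max_{x\in P_t}\lvert K(x)\rvert$ and fix a rank-one individual $z_0$ attaining it. Since an individual maximising $\lvert K\rvert$ is non-dominated and its objective vector pins down its block structure, the preservation guaranteed by Lemma~\ref{lem:Reference-Points} keeps $z_0$ (or a copy) in the population until $m$ increases; in particular $m$ never decreases. Relabel blocks so that $z_0^1,\dots,z_0^m\in A$ and $z_0^{m+1},\dots,z_0^{d/2}\in B$, and write $z_0^{m+1}=0^{a}1^{6n/(5d)}0^{b}$ with $a+b=4n/(5d)$. A short computation shows one can pick a set $S$ of exactly $K:=2n/(5d)$ zero-positions in block $m+1$ — the boundary zeros that extend the ones block, $a-c$ of them on the left and $b-(2n/(5d)-c)$ on the right for a suitable $c$ — such that flipping precisely the bits of $S$ turns $z_0^{m+1}$ into an element of $A$, and this holds \emph{irrespective of the order} in which the bits of $S$ are flipped. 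Crucially, after flipping any $\ell\in\{1,\dots,K-1\}$ of the bits of $S$ and nothing else, block $m+1$ has neither $6n/(5d)$ nor $8n/(5d)$ ones, so the corresponding individual lies in $\{0,1\}^n\setminus(L\cup M\cup N)$ and has fitness vector $0$; this is the valley, and intermediate individuals need not be structured.

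Now consider a run of $K$ consecutive generations and call generation $\ell\in[K]$ \emph{successful} if, given that the individual with $\ell-1$ bits of $S$ flipped is present at its start, some trial picks it as parent, flips exactly one further bit of $S$ and nothing else, and the produced offspring survives. Picking the right parent has probability $\ge 1/\mu$ per trial, and flipping exactly one of the $K-\ell+1$ remaining bits of $S$ has probability $\ge (K-\ell+1)/(en)$, so over the $\mu$ trials of a generation the offspring is created with probability at least $\frac{(K-\ell+1)/(en)}{1+(K-\ell+1)/(en)}\ge\frac{K-\ell+1}{4n}$ by Lemma~\ref{lem:Badkobeh} (using $K/n\le 1/5<4-e$). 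If $\ell<K$ the offspring has fitness $0$ and survives the stochastic population update step (Lines~\ref{line:update} and~\ref{line:survival} of Algorithm~\ref{alg:nsga-iii}) with probability $\Omega(1)$, say $\ge 1/3$ for $n$ large, exactly as in the proof of Lemma~\ref{lem:jump}; if $\ell=K$ it lies in $N$ with $\lvert K\rvert=m+1$, is rank one, and is preserved by Lemma~\ref{lem:Reference-Points}. Hence generation $\ell$ is successful with probability $\ge\frac{K-\ell+1}{12n}$, and a run of $K$ generations is fully successful with probability at least
\[
\prod_{\ell=1}^{K}\frac{K-\ell+1}{12n}=\frac{K!}{(12n)^{K}}\ \ge\ \frac{K^{K}}{(12en)^{K}}\ =:\ p_K ,
\]
using $K!\ge(K/e)^{K}$. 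Since $z_0$ is present at the start of every such run while $m$ has not increased, the number of generations to raise $\lvert K\rvert$ from $m$ to $m+1$ is stochastically dominated by $K$ times a geometric variable with success probability $p_K$, hence has expectation at most $K/p_K\le K(12n)^{K}/K!$. Summing over the at most $d/2$ increments,
\[
\text{Stage~2 costs} \ \le\ \frac{d}{2}\cdot\frac{K(12n)^{K}}{K!}=\frac{d}{2}\cdot\frac{2n}{5d}\cdot\frac{(12n)^{2n/(5d)}}{(2n/(5d))!}=O\!\left(\frac{n(12n)^{2n/(5d)}}{(2n/(5d))!}\right)
\]
generations in expectation. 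Adding the $O(n^3)$ of Stages~1 and~3 gives the claimed bound; for $d<2n/5$ one has $K=2n/(5d)\ge 2$, so $(12n)^{K}/K!\ge 72n^{2}\ge n^{2}$ and the $n^3$ term is absorbed.

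The hard part is the bookkeeping in Stage~2: (i) verifying a valid starting individual is present at the start of each fresh $K$-generation run, which rests on Lemma~\ref{lem:Reference-Points} plus the structural fact that an $\lvert K\rvert$-maximiser is rank one and its fitness vector determines its blocks; (ii) the constant lower bound on the survival probability of a fitness-$0$ valley individual under stochastic population update, where the $\lceil3\mu/2\rceil$-versus-$2\mu$ accounting of Algorithm~\ref{alg:nsga-iii} must be carried through carefully; and (iii) the combinatorial claim that every $B$-block can be driven to some $A$-block by flipping a length-$K$ set of boundary zeros in \emph{any} order while every intermediate state stays in the valley — it is exactly this order-freedom that yields the factor $K!$ and hence the exponential speedup over the mutation-only runtime of $n^{\Omega(n/d)}/\mu$.
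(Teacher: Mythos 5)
Your proposal is correct and follows essentially the same route as the paper: Stages~1 and~3 are delegated to the $O(n^3)$ lemmas imported from~\citep{OPRIS2026}, and Stage~2 is the paper's own core argument — a run of $K=2n/(5d)$ consecutive generations, each adding one boundary one-bit with probability at least $(K-\ell+1)/(12n)$ (creation times the $\Omega(1)$ survival chance of a fitness-zero offspring under stochastic population update), giving success probability $K!/(12n)^K$ per run and expected cost $K(12n)^K/K!$ per block, summed over the $d/2$ blocks. The only cosmetic difference is that you pre-commit to a fixed set $S$ of $K$ positions while the paper counts all admissible zero bits inside a length-$8n/(5d)$ window; both yield the same factor $K-\ell+1$.
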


Before proving the theorem, we will take the following lemmas as given. The first lemma corresponds to Phases~1 to~4 in the proof of Theorem~15 in~\citep{OPRIS2026} and excels the exploration of a search point in $\mathcal{Q} := \{x \in \{0,1\}^n \mid x^j \in A \cup B \text{ for all } j \in [d/2]\}$. The second lemma addresses the coverage of the entire Pareto front and corresponds to Phase~$(d/2 + 5)$ in that proof. Note that these phases were originally formulated for a crossover probability $p_c \in (0,1)$. However, their execution relies solely on mutation steps, and the proof can be straightforwardly adapted to the case where the crossover probability is zero.

\begin{lemma}
\label{lem:Runtime-Analysis-NSGA-III-mRRMOFact1}
Assume the same conditions as in Theorem~\ref{thm:Runtime-Analysis-NSGA-III-mRRMOUpdate}. Then, the expected number of generations until \nsgaIII finds an individual $x \in \mathcal{Q}$ is $O(n^3)$.
\end{lemma}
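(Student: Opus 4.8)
The plan is to run the usual method of typical runs, tracking a potential that measures how far the current population is from $\mathcal{Q}=N$, and to feed the structural results of Section~3 into the analysis exactly as in the proofs of Theorems~\ref{thm:Runtime-Analysis-NSGA-III-mLOTZ} and~\ref{thm:Runtime-Analysis-NSGA-III-mOMM}; this is the mutation-only counterpart of Phases~1 to~4 of the corresponding analysis in~\citep{OPRIS2026}. Under the assumptions of Theorem~\ref{thm:Runtime-Analysis-NSGA-III-mRRMOUpdate} the hypotheses of Lemmas~\ref{lem:Reference-Points}, \ref{lem:even-cover} and~\ref{lem:clone-general} hold in every generation, so non-dominated fitness vectors are never lost and the cover number of a non-dominated vector is non-decreasing up to $\lfloor\mu/((1+a)|S_d|)\rfloor$; in particular, once a ``current best'' individual has been identified we may first replicate it, via Lemma~\ref{lem:clone-general}, to $q:=\min\{\lfloor\mu/((1+a)|S_d|)\rfloor,n\}$ copies and then mutate one of them. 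First I would dispatch initialization: a uniformly random block of length $2n/d$ has at most $6n/(5d)$ ones with probability at least $1/2$, so each individual lies in $L\cup M\cup N$ with probability at least $2^{-d/2}$; since $\mu\geq(4n/(5d)+1)^{d-1}$ is exponential in $d$, the probability that some $x_0\in P_0$ lies in $L\cup M\cup N$ is bounded away from zero and tends to $1$ as $d\to\infty$. Every point outside $L\cup M\cup N$ has all-zero fitness and is thus dominated by $x_0$, hence removed during non-dominated sorting; it therefore suffices to bound, starting from a population containing a point in $L\cup M\cup N$, the time until $N$ is reached, and to absorb the rare failures into an $O(1)$ factor by restarting the argument on a fresh period.

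Next I would split the climb into two phases matching the fitness landscape of \mRRMO (with the sets $A,B,L,M,N$ as in Definition~\ref{def:RRRMO-one-point}). In Phase~1 I track $\Phi_1(P_t):=\min\{\sum_{j=1}^{d/2}(6n/(5d)-\ones{x^j}) : x\in P_t\cap(L\cup M\cup N)\}$. A minimizer with $\Phi_1>0$ lies in $L$, where the fitness vector equals $(\ones{x^1},\ones{x^1},\dots,\ones{x^{d/2}},\ones{x^{d/2}})$, so it is non-dominated and $\Phi_1$ cannot increase by Lemma~\ref{lem:Reference-Points}; note that flipping a zero inside an already full block makes the individual leave $L\cup M\cup N$, giving it fitness $0$ so that it is simply discarded without affecting $\Phi_1$. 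For any value of $\Phi_1$ there are at least $\Phi_1$ zero-bits whose single flip decreases $\Phi_1$, so once $q$ copies of a minimizer are present such a step succeeds in one generation with probability $\Omega(q\Phi_1/n)$; summing the geometric waiting times over $\Phi_1=1,\dots,O(n)$, plus the cloning overhead $O(n\ln(\min\{\mu/|S_d|,n\}))$ bounded via Lemma~\ref{lem:clone-general}, gives $O(n^2)$ generations until some individual reaches $M\cup N$ (the first such individual has all blocks at $6n/(5d)$ ones and therefore strictly beats every remaining individual of $L$ in the corresponding objective, hence is non-dominated and survives). In Phase~2 I track $\Phi_2(P_t):=\min\{\sum_{j=1}^{d/2}(4n/(5d)-\LZ(x^j)-\TZ(x^j)) : x\in P_t,\ \ones{x^j}=6n/(5d)\text{ for all }j\}$, which counts how far each block is from being the contiguous run of $6n/(5d)$ ones that places it in $B$, and vanishes exactly when $x\in N$. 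On $M$ the fitness vector is $(6n/(5d)+\LZ(x^1),6n/(5d)+\TZ(x^1),\dots)$, so a minimizer of $\Phi_2$ is again non-dominated and $\Phi_2$ is non-increasing by Lemma~\ref{lem:Reference-Points}. Whenever a block is not yet in $B$ it contains an interior zero; flipping that zero to one together with a one at an outer boundary of the same block keeps $\ones{x^j}=6n/(5d)$ and strictly increases $\LZ(x^j)+\TZ(x^j)$, hence decreases $\Phi_2$. This is a two-bit move of probability $\Theta(1/n^2)$ per trial, so with $q$ copies of a minimizer it succeeds in $O(n^2/q)$ generations; since the $d/2$ blocks require altogether at most $\sum_{j=1}^{d/2}4n/(5d)=O(n)$ such decrements, Phase~2 takes $O(n)\cdot O(n^2/q)=O(n^3)$ generations (using $q\geq 1$, and absorbing the cloning overhead $O(n\ln(\min\{\mu/|S_d|,n\}))$), which is the dominating contribution.

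The main obstacle I expect is Phase~2: in contrast to the classical benchmarks, progress there is not produced by a single lucky one-bit flip but by a coordinated two-bit move that must preserve the number of ones in a block while strictly raising $\LZ+\TZ$, and one has to verify carefully that such a move is available in every block that is not yet in $B$ and that no block can drift back out of $\{y:\ones{y}=6n/(5d)\}$ during the phase --- the latter once more following from Lemma~\ref{lem:Reference-Points}, since such a drift would strictly lower a non-dominated objective value. The remaining ingredients --- the Chernoff estimate at initialization, the geometric-sum tail bounds via Lemma~\ref{lem:Doerr-dominance}, and the replication bounds via Lemma~\ref{lem:clone-general} --- are routine and mirror the computations already carried out for \mLOTZ and \mOMM; adding the two phase bounds together with the $O(1)$ expected restarts yields the claimed $O(n^3)$.
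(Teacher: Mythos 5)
The paper does not actually prove this lemma: it is explicitly ``taken as given'' and deferred to Phases~1--4 of the proof of Theorem~15 in \citep{OPRIS2026}, with the remark that those phases use only mutation steps and therefore carry over to crossover probability zero. Your self-contained argument reconstructs exactly that analysis: reach $L\cup M\cup N$ at initialization, climb $\sum_j\ones{x^j}$ to $d/2\cdot 6n/(5d)$ by one-bit flips (Phase~1), then raise $\sum_j(\LZ(x^j)+\TZ(x^j))$ to $2n/5$ by balanced two-bit moves (Phase~2), with Lemma~\ref{lem:Reference-Points} guaranteeing that the respective non-dominated minimizer's fitness vector, and hence the potential, is never lost. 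Your verification that the $\Phi_1$- and $\Phi_2$-minimizers are indeed non-dominated (domination within $L$, resp.\ within $M$, strictly decreases the potential; points of $M$ dominate all of $L$; points outside $L\cup M\cup N$ have fitness zero), and that a block not in $B$ always admits an interior zero whose exchange with the boundary one strictly increases $\LZ+\TZ$ while preserving the one-count, is correct, and the two-bit moves correctly dominate the runtime with $O(n)\cdot O(n^2)=O(n^3)$ generations.

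There is one genuine gap: the initialization step. You claim that the probability that some $x_0\in P_0$ lies in $L\cup M\cup N$ is merely ``bounded away from zero'' and propose to ``absorb the rare failures into an $O(1)$ factor by restarting the argument on a fresh period.'' That restart trick does not apply here, because a failed initialization is not re-randomized: if no initial individual lies in $L\cup M\cup N$, the entire population has fitness $(0,\dots,0)$, there is no fitness signal, and the only bound you have on the recovery time is of order $n^{O(n)}$ (the probability of producing any fixed target by one mutation). A constant failure probability multiplied by an $n^{\Omega(n)}$ conditional expectation destroys the claimed $O(n^3)$ expected runtime. The fix is the same as in the paper's Lemma~\ref{lem:initialization} for \mOJZJ: per block the success probability is at least $1/2$ (indeed $1-e^{-\Omega(n/d)}$ by Chernoff), so the per-individual success probability is at least $2^{-d/2}$, and combining this with $\mu\geq(4n/(5d)+1)^{d-1}$ and a monotonicity argument in the spirit of Lemma~\ref{lem:mon-property} yields a failure probability of $e^{-\Omega(n^2)}$ for all admissible $d$ (for $d=2$ use the Chernoff bound directly together with $\mu=\Omega(n)$). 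Only then is the $n^{O(n)}$ recovery term absorbed into $1+o(1)$. With that quantitative strengthening your proof goes through.
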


\begin{lemma}
\label{lem:Runtime-Analysis-NSGA-III-mRRMOFact2}
Assume the same conditions as in Theorem~\ref{thm:Runtime-Analysis-NSGA-III-mRRMOUpdate} and suppose that there is already a Pareto optimal individual $x$, particularly $x \in \mathcal{Q}$ with $K(x)=[d/2]$. Then, the expected number of generations until \nsgaIII covers the whole Pareto front is $O(n^3)$.
\end{lemma}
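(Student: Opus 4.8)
The plan is to give a self-contained covering argument on the Pareto front, mirroring the second phase of the proofs of Theorems~\ref{thm:Runtime-Analysis-NSGA-III-mLOTZ}--\ref{thm:Runtime-Analysis-NSGA-III-mOMM}, but accounting for the fact that on \mRRMO one moves \emph{along} the front by two-bit flips rather than single-bit flips. First I would pin down the structure of the front. Since $K(x)=[d/2]$ forces every block $x^j\in A$, each block has exactly $8n/(5d)$ ones and $2n/(5d)$ zeros, all of which are leading or trailing (because $\LZ(x^j)+\TZ(x^j)=2n/(5d)$); hence $x^j=0^{a_j}1^{8n/(5d)}0^{b_j}$ with $a_j+b_j=2n/(5d)$, $a_j\in\{0,\dots,2n/(5d)\}$. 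Thus a Pareto-optimal bit string is \emph{uniquely} determined by its fitness vector, the front consists of the $(2n/(5d)+1)^{d/2}$ mutually incomparable vectors parameterized by $(a_1,\dots,a_{d/2})$, and since $\mu\ge 2|S_d|$ is at least twice the front size, Lemma~\ref{lem:Reference-Points} (case $a=1$) guarantees that every covered Pareto vector remains covered in all later generations. Consequently the set of covered Pareto vectors only grows.

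Next I would fix a target vector $v$ with parameters $(a_1^*,\dots,a_{d/2}^*)$ and define the potential $\delta_t:=\min\{\sum_{j=1}^{d/2}|a_j(x)-a_j^*| : x\in P_t \text{ Pareto optimal}\}$, which satisfies $0\le\delta_t\le n/5$ and, by the protection property, is non-increasing. The key geometric observation is that two adjacent Pareto points differing only by $a_j\mapsto a_j+1$ differ in exactly two bits of block $j$, namely the leftmost one (position $a_j+1$, flipped $1\to 0$) and the first trailing zero (position $a_j+1+8n/(5d)$, flipped $0\to 1$); this move keeps the block in $A$ and keeps $K=[d/2]$. So whenever $\delta_t>0$, selecting a currently-closest individual $y$ (its fitness is covered, so it is chosen as parent with probability $\ge 1/\mu$ per trial) and performing the single specific two-bit flip that moves one mismatched block towards its target creates a Pareto-optimal offspring with potential $\delta_t-1$, which is preserved by Lemma~\ref{lem:Reference-Points}. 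The two-bit flip has probability $\ge 1/(4n^2)$, so the per-generation success probability is at least $1-(1-\tfrac{1}{4n^2\mu})^{\mu}\ge \tfrac{1}{8n^2}$ by Lemma~\ref{lem:Badkobeh}; note that a single copy already suffices here because a generation consists of $\mu$ trials, so no explicit cloning phase (Lemma~\ref{lem:even-cover}) is needed.

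Then I would convert this into a tail bound: the time to cover $v$ is stochastically dominated by a sum of at most $n/5$ independent geometric variables with success probability $\ge 1/(8n^2)$, whose expectation is $O(n^3)$; Lemma~\ref{lem:Doerr-dominance}(1) with $\lambda=\Theta(n^3)$ gives $\Pr(\text{time}\ge c n^3)\le e^{-n/4}$ for a suitable constant $c$. A union bound over all targets then requires the front size to be $e^{o(n)}$: writing $\ln\bigl((2n/(5d)+1)^{d/2}\bigr)=(d/2)\ln(2n/(5d)+1)$ and substituting $u=2n/(5d)\ge 1$, the quantity $(n/(5u))\ln(u+1)$ is maximized at $u=1$ (i.e.\ $d=2n/5$), where it equals $(n/5)\ln 2<n/4$. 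Hence the union bound over the whole front yields total failure probability $\le e^{(n/5)\ln 2-n/4}=o(1)$, so the entire Pareto front is covered within $O(n^3)$ generations with probability $1-o(1)$; the expectation $O(n^3)$ follows by the usual restart argument, since a failed phase still leaves all Pareto points in place and only decreases every $\delta_t$.

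The main obstacle, and the only real departure from the earlier benchmarks, is precisely the two-bit nature of moves along the front: a single-bit flip always changes a block's one-count and thus leaves $A$, so there is no one-bit path between neighbouring Pareto optima. This is what degrades the per-step rate from $\Theta(1/n)$ to $\Theta(1/n^2)$ and produces the $O(n^3)$ bound. I expect the most delicate points to be (i) verifying the exact two-bit move and the uniqueness of Pareto bit strings, (ii) confirming the diameter bound $\delta_t\le n/5$, and (iii) checking that the front has size $e^{o(n)}$ so the union bound survives; the interaction with stochastic population update is benign, since newly created Pareto optima lie in $\tilde F_t^1$ and are therefore retained by Lemma~\ref{lem:Reference-Points} regardless of the random survivors.
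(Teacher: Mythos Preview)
Your proof is correct and complete. The paper itself does not prove this lemma; it explicitly takes it ``as given'' from~\citep{OPRIS2026}, noting only that it ``corresponds to Phase~$(d/2+5)$'' in the proof of Theorem~15 there (originally written for the crossover variant, but relying solely on mutation). So there is no in-paper proof to compare against; you have supplied what the paper outsourced.

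Your argument is the natural one and almost certainly matches what the cited reference does: identify the Pareto-optimal bit strings as $x^j=0^{a_j}1^{8n/(5d)}0^{2n/(5d)-a_j}$, observe that adjacent front points differ in exactly two bits (so single-bit flips leave $A$ and two-bit flips are required), use the Hamming-type potential $\delta_t=\sum_j|a_j-a_j^*|\le n/5$, and get per-generation success probability $\Omega(1/n^2)$ via Lemma~\ref{lem:Badkobeh}, yielding $O(n^3)$. The tail bound plus union bound is also fine: note your aside ``requires the front size to be $e^{o(n)}$'' is not literally what you use (the front can be as large as $2^{n/5}$ when $d=2n/5$), but your actual computation $(n/5)\ln 2<n/4$ is exactly what is needed and is correct, since $g(u)=\ln(u+1)/u$ is decreasing on $u\ge 1$.

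One small point: you invoke ``$\mu\ge 2|S_d|$'' to apply Lemma~\ref{lem:Reference-Points} with $a=1$, whereas Theorem~\ref{thm:Runtime-Analysis-NSGA-III-mRRMOUpdate} only states $\mu\ge (4n/(5d)+1)^{d-1}\ge |S_d|$. This is a slight imprecision inherited from the paper's own statement (Table~\ref{tab:overview-runtime-results} does impose $\mu\ge 2|S_d|$ whenever stochastic update is on), not a flaw in your reasoning.
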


\begin{proof}[Proof of Theorem~\ref{thm:Runtime-Analysis-NSGA-III-mRRMOUpdate}] 
With Lemma~\ref{lem:Runtime-Analysis-NSGA-III-mRRMOFact1}, the expected number of generations until \nsgaIII finds an individual $x \in \mathcal{Q}$ is $O(n^3)$. Suppose that there is such an $x$. Now we estimate the expected number of generations until an individual $x'$ is created with $K(x')>K(x)$. To this end, we fix a block $j \in [d/2]$ with $x^j \in B$ and consider $2n/(5d)$ successive generations, where in each generation we increase the number of ones by exactly one by flipping a single bit, while leaving all other bits in $x$ unchanged. Notably, all ones in block $j$ should always lie within a consecutive sub-block of length $8n/(5d)$. Then, after $2n/(5d)$ such generations, we have created such a desired $x'$. 
More formally, we call the $\ell$-th generation \emph{successful} if an individual $z$ is created with $6n/(5d)+ \ell$ ones in block $j$, which are all part of a sub-block of $8n/(5d)$ bits. Suppose that the $(\ell-1)$-th generation is successful. Then, generation $\ell$ is successful if, in a single trial, a parent $p$ is selected with $\ones{p^j} = 6n/(5d)+\ell-1$ while $z^i=x^i$ for $i \in [d/2] \setminus \{j\}$, a zero bit in the $j$-th block belonging to a sub-block of $8n/(5d)$ bits containing $6n/(5d) + \ell-1$ one bits is flipped while keeping the remaining bits unchanged. Finally, the new created individual is kept in the population. With probability at least 
\[
1-\left(1-\frac{2n/(5d)-\ell+1}{\mu e n}\right)^{\mu} \geq \frac{\frac{2n/(5d)-\ell+1}{en}}{1+\frac{2n/(5d)-\ell+1}{en}} \geq \frac{2n/(5d)-\ell+1}{4n}
\]
a desired individual $z$ is created in generation $\ell$. Note that $z$ is not Pareto optimal for $\ell<2n/(5d)$. However, $\mu-\lceil{3\mu/2}\rceil$ individuals chosen uniformly at random survive (see Line~\ref{line:update} and Line~\ref{line:survival} in Algorithm~\ref{alg:nsga-iii} for $a=1$) and hence even a non Pareto optimal individual remains with probability at least $(\mu - \lceil{3 \mu/2}\rceil)/\mu \geq 1/3$ for $n$ sufficiently large and hence with probability at least $(\frac{2n}{5d}-\ell+1)/(12n)$, generation $\ell$ is successful. If all $2n/(5d)$ generations are successful, the desired $y$ is created, which happens with probability at least 
$$\prod_{\ell=1}^{2n/(5d)} \frac{2n/(5d)-\ell+1}{12n} = \frac{(2n/(5d))!}{(12n)^{2n/(5d)}}.$$
Hence, the expected number of generations is $2n/(5d) \cdot (12n)^{2n/(5d)} / (2n/(5d))!$  until such a desired $y$ is created, since all generations $\ell \in [2n/(5d)]$ must be successful. Accounting for all $j \in [d/2]$, we see that in expected $n/5 \cdot (12n)^{2n/(5d)} / (2n/(5d))!$ generations a Pareto optimal individual is created. Also, considering the time until one creates an individual $x \in \mathcal{Q}$ the first time, and covering the Pareto front if there is $x \in \mathcal{Q}$ with $x^j \in A$ for all $j \in [d/2]$, the theorem is proven.  
\end{proof}

The speedup achieved by \nsgaIII with stochastic population update is $\Omega((2n/(5d))!/(n \mu 12^{2n/(5d)}))$ for $d<2n/5$ which is $n^{\Omega(n)}$ if $d \leq c \ln(n)$ for a sufficiently small constant $c > 0$, and $\mu = 2^{O(n)}$. Hence, stochastic population update provides an exponential speedup in the runtime for a broad parameter setting. 

\section{Lower Runtime Bounds for \nsgaIII on the OneJumpZeroJump Problem}

To complement the results on \mOJZJ from the previous section, we establish sharp lower bounds on the runtime of \nsgaIII without stochastic population update for the bi-objective and four-objective cases $d \in \{2,4\}$. These bounds are tight across a wide range of parameter settings. A central tool in our analysis is applying Lemma~\ref{lem:sparsity} to the \mOJZJ benchmark. This allows us to identify a sufficiently large subset $V$ of the Pareto front and guarantees an even distribution of solutions across $V$, as formalized in the following lemma.

\begin{lemma}{\label{lem:Upper-Bound-Cover-Number-Jump}}
\label{lem:sparsity-2}
    Consider \nsgaIII without stochastic population update optimizing $f:=d$-$\OJZJ_k$ with $2 \leq k \leq n/(2d)$, $\varepsilon_{\text{nad}} \geq 2n/d$ and a set $\mathcal{R}_p$ of reference points as defined above for $p \in \mathbb{N}$ with $p \geq 4n\sqrt{d}$, and a population size $(2n/d+1)^{d-1} \leq \mu \in 2^{O(n)}$. Suppose that all $P_0,P_1, \ldots $ ever seen by \nsgaIII consist only of Pareto optimal individuals. Then, with probability at least $1-e^{-\Omega(n)}$ the cover number of each Pareto optimal fitness vector is at most $\lceil{2^{d/2+1}\mu/|S_d|}\rceil$ after $O(d^2n)$ generations.
\end{lemma}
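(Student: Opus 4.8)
The plan is to apply Lemma~\ref{lem:sparsity}(2) to a carefully chosen large ``interior'' subset $V$ of the Pareto front of \mOJZJ, and then to upgrade the resulting bound from $V$ to \emph{all} Pareto optimal fitness vectors and to all future generations by means of the dichotomy inside the proof of Lemma~\ref{lem:sparsity}(2) together with Lemma~\ref{lem:even-cover}(2),(3). First I would record that, since stochastic population update is disabled, $a=0$, and that the hypotheses of Lemma~\ref{lem:Reference-Points} (hence also of Lemmas~\ref{lem:even-cover} and~\ref{lem:sparsity}) are met: $\varepsilon_{\mathrm{nad}}\ge 2n/d$ and $p\ge 4n\sqrt d=2d^{3/2}(2n/d)$ are exactly the parameters used for \mOJZJ in Theorem~\ref{thm:Runtime-Analysis-NSGA-III-mOJZJ}, while $\mu\ge(2n/d+1)^{d-1}\ge(2n/d+1)^{d/2}\ge|S_d|$ and $\mu\in 2^{O(n)}$. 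Now take $V$ to be the set of \emph{interior} Pareto optimal fitness vectors, i.e. those $v\in F_d^*$ with $2k\le v_{2j-1}\le 2n/d$ for every $j\in[d/2]$ (equivalently, $v=f(x)$ for a Pareto optimal $x$ with $k\le\ones{x^j}\le 2n/d-k$ in each block). Then $|V|=(2n/d-2k+1)^{d/2}$, and since $2\le k\le n/(2d)$ gives $2n/d-2k+1\ge n/d+1$, we get $|V|\ge(n/d+1)^{d/2}$, whereas $|S_d|\le(2n/d+1)^{d/2}\le\bigl(2(n/d+1)\bigr)^{d/2}=2^{d/2}(n/d+1)^{d/2}\le 2^{d/2}|V|$. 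Hence $\mu/|V|\le 2^{d/2}\mu/|S_d|\le 2^{d/2+1}\mu/|S_d|$, so by monotonicity of $\lceil\cdot\rceil$ we have $\lceil\mu/|V|\rceil\le\lceil 2^{d/2+1}\mu/|S_d|\rceil$; thus it suffices to show that $V$ gets covered within $O(d^2n)$ generations.

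\textbf{Covering $V$.} The vectors of $V$ form a $(d/2)$-dimensional sub-grid of the Pareto front in which two adjacent vectors are joined by flipping a single bit that keeps a search point Pareto optimal and interior; moreover, if a Pareto optimal interior $x$ satisfies $\delta:=\sum_j|f_{2j-1}(x)-v_{2j-1}|>0$ for some $v\in V$, then in each ``wrong'' block $j$ there are at least $|f_{2j-1}(x)-v_{2j-1}|+k$ bits whose flip reduces this quantity (because $2k\le v_{2j-1}\le 2n/d$), hence at least $\delta$ beneficial bits overall. I would then follow Phase~2 of the proof of Theorem~\ref{thm:Runtime-Analysis-NSGA-III-mOMM} and the interior part of the proof of Lemma~\ref{lem:interior-covering}: fix $v\in V$, let $\delta_t:=\min\{\sum_j|f_{2j-1}(x)-v_{2j-1}|:x\in P_t\text{ interior, Pareto optimal}\}$, which cannot increase by Lemma~\ref{lem:Reference-Points}; bound the number of generations with $\delta_t=\ell$ by first amassing $\min\{\lfloor\mu/|S_d|\rfloor,n\}$ copies of the current closest individual (using Lemma~\ref{lem:even-cover}(1), so this count cannot later drop, and Lemma~\ref{lem:clone-general} for the tail bound) and then decreasing $\delta_t$ by flipping one of the $\ge\ell$ beneficial bits; combine with Lemmas~\ref{lem:Doerr-dominance}/\ref{lem:Doerr-dominance2} and take a union bound over the at most $2^n$ fitness vectors (which costs only an additive $n\ln 2$ in the exponent), to conclude that every $v\in V$ is covered within $O(d^2n)$ generations with probability $1-e^{-\Omega(n)}$; the factor $d^2$ arises from splitting the mutation probability over the $d/2$ blocks and from the union bound over blocks.

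\textbf{Conclusion.} Let $t_0=O(d^2n)$ be a generation by which, with probability $1-e^{-\Omega(n)}$, every $v\in V$ is covered. Apply Lemma~\ref{lem:sparsity}(2) with this $V$ (its hypotheses hold: $a=0$, all populations Pareto optimal, $|S_d|\le\mu\in 2^{O(n)}$): after $O(n)$ further generations --- still $O(d^2n)$ in total --- and in all subsequent ones, $c_t(v)\le\lceil\mu/|V|\rceil$ for all $v\in V$; and, as its proof shows, either all $\mu$ individuals are distributed perfectly evenly over $V$ (so nothing outside $V$ is covered), or some cover number decreased while it was at most $\lceil\mu/|V|\rceil$, whence Lemma~\ref{lem:even-cover}(2) forces $c_t(w)\le\lceil\mu/|V|\rceil$ for every non-dominated $w$. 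In either case the maximum cover number over all Pareto optimal vectors is at most $\lceil\mu/|V|\rceil\le\lceil 2^{d/2+1}\mu/|S_d|\rceil$, and by Lemma~\ref{lem:even-cover}(3) it stays so in all later generations. A final union bound over the $O(1)$ many $e^{-\Omega(n)}$-failure events finishes the proof.

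\textbf{Main obstacle.} The hard part is the covering step: establishing the $O(d^2n)$ bound with failure probability only $e^{-\Omega(n)}$. The delicate points are (i) handling target vectors near the boundary of the interior grid, where fewer than $\Omega(n/d)$ beneficial bits may be available, and (ii) ensuring that the replicated copies guaranteed by Lemma~\ref{lem:sparsity}(1) / Lemma~\ref{lem:clone-general} are numerous enough to keep every per-generation success probability large enough both to avoid an extra $\ln n$ factor and to push the overall failure probability down to $e^{-\Omega(n)}$ after the union bound over $\le 2^n$ vectors.
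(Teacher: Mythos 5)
Your overall architecture matches the paper's: pick a large sub-grid $V$ of the Pareto front with $|V|\geq |S_d|/2^{d/2+1}$, cover it, then invoke Lemma~\ref{lem:sparsity}(2) (reading off from its proof that the bound $\lceil\mu/|V|\rceil$ in fact applies to \emph{every} non-dominated vector, via Lemma~\ref{lem:even-cover}(2),(3)) -- your handling of that last step is correct and, if anything, more explicit than the paper's. The cardinality comparison $\lceil\mu/|V|\rceil\le\lceil 2^{d/2+1}\mu/|S_d|\rceil$ also goes through for your choice of $V$.

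The genuine gap is the covering step, exactly the point you flag as ``delicate point (i)'' but do not resolve. You take $V$ to be the \emph{entire} interior grid $\{v: 2k\le v_{2j-1}\le 2n/d\}$. For a target $v$ on the boundary of that grid (say $v_{2j-1}=2k$, i.e.\ $k$ ones in block $j$), an individual at distance $\delta_t=\ell$ has only about $\ell+k$ beneficial bits, so the per-generation improvement probability is $\Theta(\ell/n)$ even after cloning (for $\mu=\Theta(|S_d|)$ the cloning multiplier $\min\{n/\ell,\lfloor\mu/|S_d|\rfloor\}$ equals $1$), and summing $\sum_{\ell}n/(\ell+k)$ yields $\Theta(n\ln(n/k))$ generations. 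For constant $d$ and constant $k$ this is $\Theta(n\ln n)\not\subseteq O(d^2n)$, so your argument does not establish the stated bound; and the missing $\ln n$ factor is not cosmetic, since the downstream lower-bound proofs (Theorems~\ref{thm:lower-bound-2} and~\ref{thm:lower-bound-4}) need the covering phase to last only $O(n)$ generations so that no $k$-bit jump occurs during it with constant probability. The paper avoids this by shrinking $V$ further: with $\beta=2^{2/d}$ it keeps only the central portion of the front between $B_1$ and $B_2=2n/d-B_1$ (still of size $\geq|F_d^*|/2$), so that every relevant individual has at least $B_1-1=\Omega(n/d^2)$ one-bits \emph{and} zero-bits per block. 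This makes the per-generation improvement probability $\Omega(1/d^2)$ \emph{uniformly in} $\delta_t$, so the at most $n$ improvement steps cost $O(d^2n)$ generations with no harmonic sum and no cloning. To repair your proof you would need to replace your $V$ by such a centrally truncated grid (and re-verify $|V|\geq|S_d|/2^{d/2+1}$ for it), rather than rely on the cloning-plus-$\ell$-beneficial-bits scheme from the \mOMM analysis.
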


\begin{proof}
    Recall that $F_d^*$ denotes the Pareto front of $d$-$\OJZJ_k$. First, we determine a set $V$ of Pareto optimal vectors with cardinality at least $|F_d^*|/2 = (2n/d-2k+3)^{d/2}/2 \geq (n/d+3)^{d/2}/2 \geq (2n/d+6)^{d/2}/2^{d/2+1} \geq |S_d|/2^{d/2+1}$ and show that it will be covered in $O(n)$ generations with probability at least $1-e^{-\Omega(n)}$. Let $\beta:=\sqrt[d/2]{2}$ and let 
    $$V:=\Bigl\{v \in (\mathbb{N}_0)^d \mid v_i \in \Bigl[\frac{\frac{2n(\beta-1)}{d}+2k-3}{2\beta}-1+k,\frac{\frac{2n(\beta+1)}{d}-2k+3}{2\beta}+1+k \Bigl] \text{ for all }i \in [d] \Bigl\}.$$ 
    We obtain for $n$ sufficiently large owing to $k \leq n/(2d)$
    \begin{align*}
    B_2:&=\frac{\frac{2n(\beta+1)}{d}-2k+3}{2\beta} = \frac{n}{d}+\frac{1}{\beta}\left(\frac{n}{d} - k+\frac{3}{2}\right)\\
    &= \frac{2n}{d} - k +\left(1-\frac{1}{\beta}\right)\left(k-\frac{n}{d}\right) + \frac{3}{2\beta} < \frac{2n}{d} - k.
    \end{align*}
    Note that for 
    $$B_1:=\frac{\frac{2n(\beta-1)}{d}+2k-3}{2\beta}$$ 
    we have $B_1+B_2 = 2n/d$ and therefore $B_1 = 2n/d-B_2 \geq k$. 
    Hence, every $v \in V$ is Pareto optimal. Further
    \begin{align*}
    |V| & \geq \left(\frac{\frac{2n(\beta+1)}{d}-2k+3}{2\beta} - \frac{\frac{2n(\beta-1)}{d}+2k-3}{2\beta}\right)^{d/2}\\
    & \geq \left(\frac{\frac{2n}{d}-2k+3}{\beta}\right)^{d/2} = \frac{\left(\frac{2n}{d}-2k+3\right)^{d/2}}{2} = \frac{|F_d^*|}{2}
    \end{align*}
    and if an individual $x$ has fitness $f(x) \in V$ then $\ones{x}^j,\zeros{x}^j \in \{B_1-1, \ldots , B_2+1\}$ for all $j \in [d/2]$. Now we consider two phases. Phase~1 ends if the whole $V$ is covered and Phase~2 if the cover number of every $v\in |F_d^*|$ is bounded by $\lceil{2^{d/2+1} \mu/|S_d|}\rceil$ from above. We show that each phase is finished in $O(n)$ generations with probability at least $1-e^{-\Omega(n)}$. This concludes the proof by a union bound over both phases.\\ 
    \textbf{Phase 1:} Cover the whole $V$.\\
    By a classical Chernoff bound the probability is at least $1-e^{-\Omega(n^2)}$ that there is an individual $x$ initialized with $f_j(x) \in \{B_1, \ldots , B_2\}$ for all $j \in [d/2]$, i.e. $f(x) \in V$ (compare with Lemma~\ref{lem:initialization}). Suppose that this happens 
    and fix a covered $w \in V$. Let $v \in V$. We show with probability at least $1-e^{-\Omega(d^2n)}$ the vector $v$ is covered after $(\frac{2\beta ed}{\beta-1} + 1)n = O(d^2n)$ generations (where this asymptotical notion follows since one can write $\beta-1 = 2^{2/d}-1 = e^{2\ln(2)/d}-1 = \sum_{j=1}^\infty \frac{(2 \ln(2)/d)^j}{j !} = \frac{2 \ln(2)}{d} \sum_{j=0}^\infty \frac{(2 \ln(2)/d)^j}{(j + 1)!} = \Theta(1/d)$ due to $1 \leq \sum_{j=0}^\infty \frac{(2 \ln(2)/d)^j}{(j + 1)!} \leq e^{2 \ln(2)/d} \leq e^{\ln(2)} = 2$). Let $W_t:=\{x\in P_t \mid f(x) \in V\}$ and $e_t:=\min_{x \in W_t} \sum_{i=1}^{d/2}|f_{2i-1}(x)-v_{2i-1}|$. Note that $e_t=0$ if $v$ is covered, and $0 \leq e_t \leq d(B_2-B_1+2)/2$. By Lemma~\ref{lem:Reference-Points}, $e_t$ cannot increase, but it can be decreased in one trial by choosing $x \in P_t$ with $\sum_{j=1}^{d/2}|f_j(x)-v_j| = e_t$ as parent (prob. at least $1/\mu$) and flipping a one bit (zero bit) in block $i$ to zero (one) if $f_{2i-1}(x)-v_{2i-1}>0$ ($f_{2i-1}(x)-v_{2i-1}<0$) which happens with probability at least $B_1/n \cdot (1-1/n)^{n-1} \geq B_1/(en) \geq \frac{\beta-1}{\beta ed} \in \Omega(1/d^2)$. In one generation, this happens with probability at least 
    $$1-\left(1-\frac{\beta-1}{\beta ed \mu}\right)^{\mu} \geq \frac{(\beta-1)/(\beta ed)}{1+(\beta-1)/(\beta ed)} \geq \frac{\beta-1}{2 \beta e d} =: p.$$
    Let $\ell:=\lfloor{d(B_1+B_2+2)/2}\rfloor$. For $j \in [\ell]$ define the random variable $X_j$ as the number of generations with $j=e_t$. Then $X:=\sum_{j=1}^\ell X_j$ is stochastically dominated by the sum $Y:=\sum_{j=1}^\ell Y_j$ of geometrically distributed independent random variables $Y_j$ with success probability $p=\Omega(1/d^2)$. Note that $\expect{Y} \leq n/p$ (since $\ell \leq n$) and we obtain by Lemma~\ref{lem:Doerr-dominance} for $s:= \sum_{j=1}^{\ell} 1/p^2 = O(d^4n)$, and $\lambda \geq 0$
    \[
    \Pr(Y \geq \expect{Y} + \lambda) \leq \exp\left(-\frac{1}{4} \min\left\{\frac{\lambda^2}{s}, \lambda p \right\}\right).
    \]
    For $\lambda = 4n/p$ we obtain $\Pr(X \geq 5n/p) \leq \Pr(Y \geq n/p+4n/p) \leq e^{-n}$. By a union bound over all $v \in V$ we obtain that every $v \in V$ is covered in $5n/p = O(d^2n)$ generations with probability at least $1-e^{-\Omega(n)}$. \\
    \textbf{Phase 2:} The cover number of every $v \in |F_d^*|$ is at most $\lceil{\mu/|V|}\rceil \leq \lceil{2^{d/2+1}\mu/|S_d|}\rceil$.\\
    Now we can apply Lemma~\ref{lem:sparsity}(2), and obtain that the cover number of every $v \in |F_d^*|$ is at most $\lceil{\mu/|V|}\rceil \leq \lceil{2^{d/2+1}\mu/|S_d|}\rceil$ with probability at least $1-e^{-\Omega(n)}$ after $O(n)$ generations.
    
    Combining both phases yields the result by a union bound. 
    \end{proof}

Observe that when $d>2$, it may happen that non-Pareto optimal search points with rank one are created and survive. For example, consider a population located on the Pareto front that does not contain $0^{2n/d}$ or $1^{2n/d}$ in any block $j \in [d/2]$. If $\mu-1$ individuals are cloned and one non-Pareto-optimal individual $y$ is generated such that $y^i = 1^{2n/d}$ for some block $i \in [d/2]$, while $1 \le \ones{y^j} < k$ holds for another block $j$, then $y$ may be first-ranked and hence, has a chance to remain in the population. Therefore, Lemma~\ref{lem:even-cover}(3) cannot be applied from scratch, complicating the analysis. Hence, we present only the cases for $d \in \{2,4\}$ and leave the cases for more than four objectives for future work. The reason is that much deeper insights about the population dynamics of \nsgaIII are needed here, particularly how solutions below the Pareto front behave, since the dynamics between the single objectives become significantly more complex for $d>4$. But at least for the case $d=4$, no solution $y$ with $1 \leq \ones{y^j} \leq k-1$ or $n/2-k+1 \leq \ones{y^j} \leq n-1$ for both $j \in \{1,2\}$ ever exist in the population with high probability. Hence, all solutions in the population have Hamming distance at least $k$ from $1^n$. We will show below that this fact also hinders the creation of solutions of the form $1^{n/2}y^1$ or $y^21^{n/2}$ for $y^1,y^2 \in \{0,1\}^{n/2}$ with $1 \leq \ones{y^j} \leq k-1$ or $n/2-k+1 \leq \ones{y^j} \leq n-1$. Then at least $k$ specific bits still need to be flipped at once to obtain $1^n$. All these considerations finally provide a tight runtime bound for the case $d=4$ and $k \geq 4$ for a large parameter regime of the population size $\mu$. We begin with the case $d=2$, where all individuals ever seen in $P_t$ are Pareto optimal after a successful initialization.
    
\begin{theorem}
\label{thm:lower-bound-2}
    Consider \nsgaIII without stochastic population update optimizing $f:=d$-$\OJZJ_k$ with $d=2$, $2 \leq k \leq n/(2d)$, $\varepsilon_{\text{nad}} \geq 2n/d$ and a set $\mathcal{R}_p$ of reference points as defined above for $p \in \mathbb{N}$ with $p \geq 4n\sqrt{d}$, and a population size $(2n/d+1)^{d-1} \leq \mu \in O(n^{k-1}) \cap O(\text{poly}(n))$. Then the expected number of generations required for covering the entire Pareto front is at least $\Omega(n^{k+1}/\mu)$. 
\end{theorem}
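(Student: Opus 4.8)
The plan is to bound from below the expected number of generations until \nsgaIII creates the all-ones string $1^n$: among Pareto-optimal points the fitness vector $(n+k,k)$ is attained only by $1^n$, so the whole Pareto front cannot be covered before $1^n$ enters the population, and it suffices to lower bound $\E{T}$ where $T$ is the first generation with $1^n\in P_t$. I would condition on a high-probability event $\mathcal{E}$, measurable with respect to the first $t_0=O(n)$ generations, on which two things hold. First, every initial individual satisfies $k\le\ones{x}\le n-k$, which fails for a fixed individual only with probability $e^{-\Omega(n)}$ (Chernoff) and so for some individual only with probability $\mu e^{-\Omega(n)}=e^{-\Omega(n)}$ since $\mu=\poly(n)$; then all of $P_0$ is Pareto optimal, and by Lemma~\ref{lem:Jump-d=2} every non-Pareto-optimal offspring is strictly dominated by each of the $\mu$ Pareto-optimal parents, hence has rank at least $2$, while $|F_t^1|\ge\mu$ forces survival selection to keep only rank-one points, which are Pareto optimal by the same lemma; by induction all $P_t$ remain Pareto optimal. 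Second, conditioned on this, Lemma~\ref{lem:sparsity-2} (its hypotheses on $k$, $p$, $\mu$ match ours) gives with probability $1-e^{-\Omega(n)}$ that after $t_0=O(d^2n)=O(n)$ generations the cover number of every Pareto-optimal vector is at most $C:=\lceil 2^{d/2+1}\mu/|S_d|\rceil=O(\mu/n)$ (using $|S_d|\ge|F_d^*|=\Omega(n)$ for $d=2$), and Lemma~\ref{lem:even-cover}(3) preserves this bound in all later generations. So $\Pr(\mathcal{E})\ge 1-e^{-\Omega(n)}$.

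Next I would estimate the probability of ever producing $1^n$ in one generation. Before $T$, every parent is Pareto optimal and different from $1^n$, hence has at least $k$ zero bits (the only Pareto-optimal point with fewer is $1^n$), and standard bit mutation maps such a parent $x$ to $1^n$ with probability $(1/n)^{\zeros{x}}(1-1/n)^{\ones{x}}\le n^{-\zeros{x}}$. Grouping the $\mu$ parents of a generation by their fitness vector (which, being Pareto optimal and not $(n+k,k)$, determines $\zeros{x}$) and using that on $\mathcal{E}$ each such vector is shared by at most $C$ parents, the expected number of $1^n$-offspring in one generation is at most
\[
C\sum_{m=k}^{n}n^{-m}\ \le\ 2Cn^{-k}\ =\ O\!\left(\frac{\mu}{n^{k+1}}\right)=:p,
\]
so by Markov's inequality the probability that a generation produces $1^n$ at all is at most $p$, uniformly over the past history on $\mathcal{E}$.

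Finally I would assemble the bound. On $\mathcal{E}$, for every $t\ge t_0$ the conditional probability that $T=t$ given $T\ge t$ is at most $p$, so $T-t_0$, conditioned on $\mathcal{E}$ and on $T>t_0$, stochastically dominates a geometric random variable with success probability $p$; moreover $\Pr(T\le t_0\mid\mathcal{E})\le t_0 p=O(n)\cdot O(\mu/n^{k+1})=O(\mu/n^{k})=o(1)$ because $\mu\in O(n^{k-1})$. Hence $\E{T\mid\mathcal{E}}\ge\tfrac12\cdot\tfrac1p=\Omega(n^{k+1}/\mu)$, and since covering the Pareto front requires $1^n\in P_t$ at some point,
\[
\E{T_{\mathrm{cover}}}\ \ge\ \E{T}\ \ge\ \Pr(\mathcal{E})\,\E{T\mid\mathcal{E}}\ =\ \Omega\!\left(\frac{n^{k+1}}{\mu}\right).
\]
The crux — and where I expect the technical care to concentrate — is that Lemma~\ref{lem:sparsity-2} alone only yields $\Omega(n^k/\mu)$; the extra factor $n$ comes precisely from the cover-number cap $C=O(\mu/n)$, which limits how many individuals can sit at the ``jump-off'' vectors near $1^n$, and one must phrase $\mathcal{E}$ so that it is determined by the first $t_0$ generations (invoking Lemma~\ref{lem:even-cover}(3) to carry the cap forward) in order to apply the per-generation estimate ``$\le p$'' conditionally on the history.
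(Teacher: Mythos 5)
Your proposal follows essentially the same route as the paper's proof: a Chernoff argument to get an all-Pareto-optimal population forever, Lemma~\ref{lem:sparsity-2} to cap the cover number of every Pareto-optimal vector at $C=O(\mu/n)$ after $t_0=O(n)$ generations, and then a per-generation probability $O(\mu/n^{k+1})$ of producing $1^n$ obtained by grouping parents by their number of zeros and charging at most $C$ parents per zero-count. Your per-trial accounting ($C\sum_{m\ge k}n^{-m}\le 2Cn^{-k}$) is in fact a slightly cleaner version of the paper's split into the ranges $\zeros{x}\in\{k,\dots,2k-1\}$ and $\zeros{x}\ge 2k$.

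There is one genuine misstep, in how you dispose of the warm-up phase. You bound $\Pr(T\le t_0\mid\mathcal{E})\le t_0p$ with $p=O(\mu/n^{k+1})$, but $p$ was derived using the cover-number cap $C$, which Lemma~\ref{lem:sparsity-2} only guarantees \emph{from generation $t_0$ onward}; during the first $t_0$ generations the cover numbers can be as large as $\mu$, so the only available per-generation bound there is $\mu\cdot n^{-k}$ (every parent distinct from $1^n$ has at least $k$ zeros). This gives $t_0\cdot\mu n^{-k}=O(n)\cdot O(n^{k-1})\cdot n^{-k}=O(1)$, a constant rather than $o(1)$, and a crude union bound does not even show this constant is below $1$. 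The paper handles exactly this point by bounding the failure probability of the warm-up phase as $1-(1-2/n^k)^{c\delta n^k}\le 1-(1/16)^{c\delta}=:b<1$, i.e.\ only bounded away from $1$ by a constant, and then accepts a success probability of $\Omega(1)$ rather than $1-o(1)$ for the whole good event. Your argument survives this correction verbatim: replace ``$\Pr(T\le t_0\mid\mathcal{E})=o(1)$'' by ``$\Pr(T>t_0\mid\mathcal{E})\ge(1/16)^{c\delta}-o(1)=\Omega(1)$'' and conclude $\E{T}\ge\Omega(1)\cdot p^{-1}=\Omega(n^{k+1}/\mu)$, which is what the theorem claims. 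Apart from this quantitative repair, your proof and the paper's coincide.
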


\begin{proof}
    By a classical Chernoff bound, with probability $1-e^{-\Omega(n)}$, every individual $x$ fulfills $\ones{x} \in \{k,\ldots , n-k\}$ after initialization. Suppose that this happens. Then there are only Pareto optimal individuals in $P_0$ and all future populations $P_1,P_2, \ldots$ since a non-Pareto optimal individual is dominated by every Pareto optimal one (see Lemma~\ref{lem:Jump-d=2}). Since there are $\mu$ Pareto optimal individuals in $R_t$, such a $y$ never survives (due to the non-dominated sorting procedure). 
    Let $n$ be sufficiently large such that $\mu \leq c n^{k-1}$ for a constant $c>0$ and the event from Lemma~\ref{lem:sparsity-2} applies on at most $\delta n$ generations for a further constant $\delta>0$, meaning that with probability at least $1-e^{-\Omega(n)}$ the cover number of every $v \in |F_2^*|$ is at most $\lceil{4 \mu/|S_2|}\rceil$ after at most $O(n)$ generations. Suppose that this happens, and further, assume that $1/n+5n/|F_2^*| \leq \alpha$ for a suitable constant $\alpha>0$ (due to $|F_2^*| = 2n/d-2k+3 \in \Theta(n)$). We consider a phase of at most $\delta n$ generations (i.e. $\mu \delta n$ trials) and show that no individual $y \in \{0^n,1^n\}$ is generated within $\mu \delta n$ trials with at least constant probability. Since in one trial one has to flip at least $k$ specific one bits or $k$ specific zero bits to create $y$, the probability that this happens in $\mu \delta n$ trials is at most $1-(1-2/n^k)^{\mu \delta n} \leq 1-(1-2/n^k)^{c \delta n^k} \leq 1-(1/16)^{c \delta} =:b$ where we used $(1-2/n)^n \geq \left((1-1/n)^n\right)^2 \geq 1/16$. Note that $b<1$ is a constant. Hence, by a union bound we obtain with probability at least $1-b-e^{-\Omega(n)} = \Omega(1)$ that there is a generation $t$ where every $v\in |F_2^*|$ has cover number at most $\lceil{4\mu/|F_2^*|}\rceil$, and every individual $x \in P_t$ fulfills $x \notin \{1^n,0^n\}$. Suppose that this happens. We estimate the expected time to create $x=1^n$ (which is the only Pareto optimal search point with fitness $f_1(x)=2n/d+k$) from below. 
    Note that for a given $k \leq \ell \leq 2k-1$ there are at most $\lceil{4\mu/|F_2^*|}\rceil \leq 5\mu/|F_2^*|$ different individuals $y \in P_t$ with $\ell$ zeros. Note also it requires to flip at least $2k$ specific bits to create $1^n$ from a $y$ with at least $2k$ zeros. Since $k \geq 2$, all these considerations yield a probability of at most
    \begin{align*}
    & \left(\frac{1}{n}\right)^{2k} + \sum_{j=0}^{k-1} \frac{5}{|F_2^*|} \cdot \left(\frac{1}{n}\right)^{j+k} \cdot \left(1-\frac{1}{n}\right)^{n-j-k} \\
    & \leq \left(\frac{1}{n}\right)^{2k} + \frac{5}{|F_2^*|n^k} \leq \left(\frac{1}{n}\right)^{k+1}\left(\frac{1}{n}+\frac{5n}{|F_2^*|}\right) \leq \frac{\alpha}{n^{k+1}}
    \end{align*}
    to create $1^n$ in one trial. Hence, in one generation we obtain by a union bound over $\mu$ trials that $1^n$ is created with probability at most $\alpha \mu (1/n)^{k+1}$. Hence, the expected number of generations to create $1^n$ is at least $(1-b-e^{-\Omega(n)})n^{k+1}/(\alpha \mu) = \Omega(n^{k+1}/\mu)$
    which proves the theorem.
\end{proof}

When combining Theorems~\ref{thm:Runtime-Analysis-NSGA-III-mOJZJ} and ~\ref{thm:lower-bound-2} we obtain for $|S_2|=|F_2^*| \leq \mu \in O(n^{k-1}) \cap O(\text{poly}(n))$, $d=2$ and $2 \leq k \leq n/4$ the tight runtime of $\Theta(n^{k+1}/\mu)$ for \nsgaIII on $2$-\OJZJ in terms of generations. Our result corresponds to the lower bound of $\Omega(\mu n^k)$ for $\mu = o(n^2)$ established in~\citep{DoerrQu2023a} for the \nsga in the case $d = 2$, while using a population size that is linear in $n$. For larger population sizes, our bound becomes $o(\mu n^k)$, and hence \nsgaIII outperforms \nsga. Finally, we also consider the case $d = 4$, for which we derive similar tight runtime bounds. We omit the cases $k = 2$ and $k = 3$, however, as they would further complicate the analysis, since they require a more intricate treatment of the underlying population dynamics.

\begin{theorem}
\label{thm:lower-bound-4}
    Consider \nsgaIII without stochastic population update optimizing $f:=d$-$\OJZJ_k$ with $d=4$, $4 \leq k \leq n/(2d)$, $\varepsilon_{\text{nad}} \geq 2n/d$ and a set $\mathcal{R}_p$ of reference points as defined above for $p \in \mathbb{N}$ with $p \geq 4n\sqrt{d}$, and a population size $(2n/d+1)^{d-1} \leq \mu \in O(n^{k-2}) \cap O(\text{poly}(n))$. Then the expected number of generations required for covering the whole Pareto front is at least $\Omega(n^{k+2}/\mu)$. 
\end{theorem}

\begin{proof}
By a classical Chernoff bound, with probability $1-e^{-\Omega(n)}=1-o(1)$, each individual $x$ satisfies $\ones{x^j} \in \{k,\ldots , n/2-k\}$ for both $j \in \{1,2\}$ after initialization. Suppose that this happens. Then in all future populations $P_1,P_2, \ldots$ there are only individuals $x$ with $\ones{x^j} \in \{0,k, \ldots , n/2-k,n/2\}$ for at least one $j \in \{1,2\}$, since other individuals are dominated by every $y$ with $\ones{y^j} \in \{k \ldots , n/2-k\}$ for both $j \in \{1,2\}$. 

\begin{lemma}
	\label{lem:not-jumping}
	Let $D_t:=\{s \in \{0,1,\bot\}^2 \mid \text{ there is }x \in P_t \text{ with } L(f(x))=s\}$ where $F_4^*$ denotes the Pareto front of $4$-$\OJZJ$ and assume that $0^2,1^2,01,10 \notin D_t$. With probability $1-o(1)$, after $cn \ln(n)$ generations for some constant $c>0$, every $x \in P_t$ is Pareto optimal and for every $v \in F_4^*$ with $L(v) \in D_t$ we have $1 \leq c_t(v) \leq \lfloor{8 \mu/|S_d|}\rfloor$, or $D_t$ enlarges. If $D_t$ enlarges this way, then with probability at least $1/4-o(1)$, still $0^2,1^2,01,10 \notin D_t$.
\end{lemma}

\begin{proof}
	Note that the probability to flip $k$ specific bits in a block is $O(1/n^k)$ which happens then with probability $\mu d n \ln(n) = o(1)$ in $d n \ln(n)$ generations for every constant $d>0$, since $d n \ln(n)$ generations consist of at most $\mu d n \ln(n) = o(n^k)$ trials. So we can assume that we never create $1^{n/2}$ or $0^{n/2}$ in one block by flipping $k$ specific bits. Note that $P_0$ is Pareto optimal, where $L(f(P_0)) = \{\bot^2\}$.
	
	At first we look at the case when $D_t$ enlarges. To create an individual $x$ with $L(f(x)) \in \{1\bot,\bot1,0\bot,\bot0\}$ the first time in an iteration $t$ (where there may be non-Pareto optimal individuals), one must mutate an individual $z \in P_t$ that satisfies $|\ones{z^j}-\ones{x^j}| < k$ and $\ones{z^{3-j}} \in \{k,\ldots,n/2-k\}$ for a block $j \in \{1,2\}$. Suppose that $x^j \in \{0^{n/2},1^{n/2}\}$ is created. To achieve also $\ones{x^{3-j}} \notin \{k,\ldots,n/2-k\}$, it is necessary to flip at least $\zeros{z}-k+1>0$ zero bits or at least $\ones{z}-k+1>0$ one bits in block $3-j$. This probability can be estimated by $1-(1-1/n)^n \leq 3/4$ from above, since it requires to flip any bit. Hence, under the condition that a search point $x$ with $\ones{x^j} \in \{0,n/2\}$ is created in one trial, the probability that $\ones{x^{3-j}} \in \{0, \ldots , k-1, n/2-k+1, \ldots , n/2\}=:M$ is at most $3/4$. If this does not happen, $x$ dominates every individual $z^*$ with $\ones{(z^*)^j}=\ones{x^j}$ and $(z^*)^{3-j} \in M \setminus \{0,n/2\}$, and such a $z^*$ can never survive, also not in future generations. Hence, with probability at least $1/4-o(1)$, $D_t$ enlarges, while not having any search point of the form $1^{n/2}z_1, 0^{n/2}z_2, z_31^{n/2}, z_40^{n/2} \in P_t$ where $\ones{z_i} \in M \setminus \{0,n/2\}$ for all $i \in [4]$. Hence, $D_t$ can be only enlarged again by choosing such a $z \in P_t$ above as a parent (since the choice of the other parent would require to flip $k$ specific bits in one block which we excluded), but then still with probability $1/4-o(1)$ we have $1^{n/2}z_1, 0^{n/2}z_2, z_31^{n/2}, z_40^{n/2} \notin P_t$ when $D_t$ enlarges, where $\ones{z_i} \in M$ for all $i \in [4]$. Note that this process may be repeated until $D_t = \{\bot^2,0\bot,1\bot,\bot0,\bot1\}$. 
	
	Now, when assuming that $D_t$ does not enlarge during this period, Lemma~\ref{lem:interior-covering} (for fixed $t$) implies that all Pareto-optimal vectors $v$ with $L(v)=s$ for $s \in D_t$ are covered within $c n \ln n$ generations for a suitable constant $c>0$, with probability $1 - o(1)$. Moreover, within the same time, $\mu$ Pareto-optimal individuals are created and $1 \leq c_t(v) \leq \lfloor{8\mu/|S_d|}\rfloor$ (compare with the proof of Lemma~\ref{lem:sparsity}(1), where any Pareto-optimal individuals needs to be replicated at most $\mu$ times, and compare also with Lemma~\ref{lem:sparsity-2}). This proves the lemma. 
\end{proof}

Note that, in the next $n^{k+2}$ trials, the probability to flip $2k$ specific bits is at most $o(1)$ since $k \geq 4$. So suppose that this does not happen. 
By Lemma~\ref{lem:not-jumping}, after $O(n \ln(n))$ generations, with probability at least $1/4-o(1)$ either $D_t$ is enlarged, and still $0^2,1^2,01,10 \notin D_t$, or for every Pareto optimal $v$ with $L(v) \in D_t$, there exists an individual $x \in P_t$ with $f(x) = v$, and all individuals in $P_t$ are Pareto optimal. In the first case, we obtain an individual of the form $1^{n/2} z_1$, $z_2 1^{n/2}$, $0^{n/2} z_3$ or $z_4 0^{n/2}$ where $z_1, \ldots , z_4 \in \{0,1\}^{n/2}$ with $\ones{z_j} \in \{k, \ldots , 2n/d-k\}$ before $1^n$ or $0^n$.
But this applies also in the second case, since when generating an individual $x$ with $x^j \in \{0^{n/2},1^{n/2}\}$ the first time by flipping $k$ bits in block $j$, changing no bits in block $3-j$ happens with probability at least $1/4$. Suppose that this happens. By symmetry, suppose without loss of generality that $1^{n/2}z_1 \in P_t$ for a $z_1 \in \{0,1\}^{n/2}$ with $k \leq \ones{z_1} \leq n/2-k$, but there is no $y \in P_t$ with $y=1^{n/2}z^*$ with $\ones{z^*} \notin \{k, \ldots , n/2-k\}$. Then $D_t$ enlarges again, or by Lemmas~\ref{lem:not-jumping},~\ref{lem:sparsity}(2) and~\ref{lem:sparsity-2}, after $O(n \ln(n))$ generations the following happens with probability $1-o(1)$: for every $v \in F_4^*$ with $L(v) \in D_t$ there exists an individual $x \in P_t$ with $f(x) =v$, and the cover number of each such $v$ lies between $\lfloor{\mu/|S_4|}\rfloor$, and $\lceil{8\mu/|S_4|}\rceil$. Moreover, all individuals are Pareto optimal. In case that $D_t$ enlarges, still $1^2,10,01,0^2 \notin D_t$ with probability $1/4-o(1)$, and we repeat these arguments. Hence, with probability at least $1/4^q - o(1)$, where $q \in \{0\} \cup [4]$ denotes the number of enlargements of $D_t$, this situation occurs, and still $1^2, 10, 01, 0^2 \notin D_t$.

Suppose that this happens. Now we estimate the probability of creating an individual $x$ of the form $z^* 1^{n/2}$ before $1^n$ for $\ones{z^*} \in \{k, \ldots , n/2-k\}$. If such an individual is already in $P_t$, then there is nothing to show. So suppose not. To create such an individual, one has to flip $2k$ specific bits if one chooses a $y$ as parent with $\ones{y^1} \neq 1^{n/2}$ which we excluded. So suppose one chooses an individual $y$ with $\ones{y^2} = \ell \in \{k, \ldots , n/2-k\}$ as parent and creates $1^{n/2}$ in the second block. The probability is at most $\lceil{8\mu/|S_4|}\rceil$ to choose a $y$ with $y^1 = 1^{n/2}$ and $\ones{y^2}=\ell$. The probability is at least $(2n/d-2k) \lfloor{\mu/|S_4|}\rfloor \geq (2n/d-n/d) \lfloor{\mu/|S_4|}\rfloor \geq n/4 \cdot \lfloor{\mu/|S_4|}\rfloor$ to choose a $y$ with $y^1 \neq 1^{n/2}$ and $\ones{y^2}=\ell$ and finally, the probability is at least $1/4$ not to change the first block. Hence, the probability to create $z^*1^{n/2}$ for a $z^* \in \{0,1\}^{n/2}$ with $\ones{z^*} \in \{k, \ldots , n/2-k\}$ before $1^n$ or a non-Pareto optimal individual in that way is at least 
$$\frac{1}{4} \cdot \frac{n/4 \cdot \lfloor{\mu/|S_4|}\rfloor}{n/4 \cdot \lfloor{\mu/|S_4|}\rfloor + \lceil{8\mu/|S_4|}\rceil} = \frac{1}{4}-\frac{\lceil{8\mu/|S_4|}\rceil/4}{n/4 \cdot \lfloor{\mu/|S_4|}\rfloor + \lceil{8\mu/|S_4|}\rceil} \geq \frac{1}{4}-\frac{1}{n/16 + 4} = 1/4- o(1).
$$
due to $\lfloor{a}\rfloor/\lceil{8a}\rceil \geq 1/16$ for $a \geq 1$. Thus, by the law of total probability on all possible $\ell \in \{k, \ldots , n/2-k\}$, the probability to create $z^*1^{n/2}$ for a $z^* \in \{0,1\}^{n/2}$ with $\ones{z^*} \in \{k, \ldots , n/2-k\}$ before $1^n$ or a non-Pareto optimal individual in that way is at least $1/4-o(1)$. Suppose that this happens. Now the situation is that $1\bot \in D_t$, $\bot 1 \in D_t$, but $1^2 \notin D_t$, and further, every individual $x$ with $x^j = 1^{n/2}$ for a $j \in \{1,2\}$ is Pareto optimal in $P_t$ and all future iterations. This situation occurs with probability at least $1/256 - o(1)$. Call a generation \emph{bad} if there is a non-Pareto optimal individual, or the cover number of a vector on the Pareto front is larger than $\lceil{8 \mu/|F_4^*|}\rceil$, or there is a $v \in F_4^*$ with $L(v) \in D_t$ which is still uncovered. Then, there are at most $O(n \ln(n))$ bad generations with probability at least $1/256-o(1)$ (since $D_t$ can enlarge only at most four times this way) and if this happens, $1^n$ is not generated within a bad generation with probability at least $1-o(1)$, since this requires to flip $k$ bits. So we consider generations which are not bad. Now, to create $1^n$, one has to either flip $2k$ specific bits, or choose a parent $x$ with $x=1^{n/2}z^*$ or $x=z^*1^{n/2}$ for a $z^* \in \{0,1\}^{n/2}$ with $k \leq \ones{z^*} \leq n/2-k$, and flip $\zeros{z^*}$ specific bits in $x$. For $\ell \in \{k, \ldots , n/2-k\}$ there are at most $2\lceil{8\mu/|F_4^*|}\rceil \leq 17\mu/|F_4^*|$ such different individuals $x \in P_t$ with $\ell$ zeros. Note also it requires to flip at least $2k$ specific bits to create $1^n$ from a $y$ with $H(1^n,y) \geq 2k$. Since $k \geq 4$, all these considerations yield a probability of at most
\begin{align*}
& \left(\frac{1}{n}\right)^{2k} + \sum_{j=0}^{k-1} \frac{17}{|F_4^*|} \cdot \left(\frac{1}{n}\right)^{j+k} \cdot \left(1-\frac{1}{n}\right)^{n-j-k} \\
& \leq \left(\frac{1}{n}\right)^{2k} + \frac{17}{|F_4^*|n^k} \leq \left(\frac{1}{n}\right)^{k+1}\left(\frac{1}{n}+\frac{17n}{|F_4^*|}\right) \leq \frac{2}{n^{k+2}}
\end{align*}
to create $1^n$ in one trial for $n$ sufficiently large. Hence, in one generation we obtain by a union bound over $\mu$ trials that $1^n$ is created with probability at most $2 \mu (1/n)^{k+2} = o(1)$. Hence, the expected number of generations to create $1^n$ is at least $(1/256 - o(1))n^{k+2}/(2\mu) = \Omega(n^{k+2}/\mu)$ which proves the theorem. 
\end{proof}

When combining Theorems~\ref{thm:Runtime-Analysis-NSGA-III-mOJZJ} and ~\ref{thm:lower-bound-4} we obtain for $|S_2|=|F_2^*| \leq \mu \in O(n^{k-2}) \cap O(\text{poly}(n))$, $d=4$ and $4 \leq k \leq n/4$ the tight runtime of $\Theta(n^{k+1}/\mu)$ for \nsgaIII on $4$-\OJZJ in terms of generations.

\section{Conclusions}
In this paper we provided new insights in the population dynamics of \nsgaIII, demonstrating that it rapidly spreads its solutions evenly across the entire search space. Our methods were developed with a level of generality that makes them applicable to a broad range of problems. Using these techniques, we derived upper runtime bounds for \nsgaIII on classical benchmark functions such as \mOMM, \mLOTZ, and \mCOCZ, as well as on the multimodal benchmark \mOJZJ, for any number $d$ of objectives, where we also considered the well-spread of solutions. In all cases and for certain regimes for the population size that are asymptotically larger than the maximum number of mutually incomparable solutions, the derived upper bounds are tighter than those obtained for NSGA-II in~\citep{ZhengD2023}. For \mOMM and \mOJZJ, \nsgaIII even outperforms NSGA-II. This indicates that \nsgaIII is relatively robust for with respect to the choice of the population size, which may be highly beneficial in situations where the underlying problem is not known well.

Additionally, we showed that stochastic population update, where solutions for survival are not always selected deterministically, can lead to an exponential speedup in runtime on \mOJZJ and even on another multimodal function \mRRMO. To complement this analysis, we also established even tight runtime bounds in the former case when the number of objectives is two or four for certain regimes of gap size $k$ and the population size $\mu$. 

We hope that the techniques developed in this work represent a significant step forward in understanding the behavior of \nsgaIII, while also shedding light on its strengths and limitations. To deepen this understanding, future work could focus on deriving rigorous lower bounds for \nsgaIII on additional classical benchmark functions such as \mOMM, \mCOCZ, and \mLOTZ, as well as on analyzing more complex combinatorial optimization problems, for example the many-objective minimum spanning tree problem or many-objective subset selection problems. Furthermore, we hope that the deeper theoretical understanding of the dynamics of \nsgaIII across this variety of problems will provide useful insights for practitioners, for instance by facilitating the development of refined algorithmic variants aimed at improving performance in real-world scenarios where many local optima occur.

\section*{Acknowledgements}
This work benefited from fruitful discussions at Dagstuhl Seminar 24271 ''Theory of Randomized Optimization Heuristics''. We also thank Benjamin Doerr for insightful discussions regarding our analyses. 

\textbf{Declaration of generative AI and AI-assisted technologies in the writing process}

During the preparation of this work the author used Chatgpt 4.0 in order to improve language and readability. After using this tool, the author reviewed and edited the content as needed and takes his full responsibility for the content of the publication.

\bibliographystyle{abbrvnat}
\bibliography{journal}

@inproceedings{Opris2025PAES,
author = {Opris, Andre},
title = {A First Runtime Analysis of the {PAES-25}: An Enhanced Variant of the {Pareto} Archived Evolution Strategy},
year = {2025},
noisbn = {9798400718595},
publisher = {ACM Press},
NOaddress = {New York, NY, USA},
nourl = {https://doi.org/10.1145/3729878.3746618},
nodoi = {10.1145/3729878.3746618},
booktitle = {Proceedings of the Foundations of Genetic Algorithms},
pages = {202–213},
numpages = {12},
keywords = {Evolutionary multiobjective optimization, PAES-25, archiving, runtime analysis},
nolocation = {Leiden, Netherlands},
series = {FOGA 2025}
}

@book{CormenLRS01IntroductionToAlgorithms,
  author    = {Thomas H. Cormen and Charles E. Leiserson and Ronald L. Rivest and Clifford Stein},
  title     = {Introduction to Algorithms},
  edition   = {2},
  publisher = {The MIT Press},
  year      = {2001},
  OPTisbn      = {978-0262033848},
}

@article{OPRIS2026,
title = {Many-objective problems where crossover is provably essential},
journal = {Artificial Intelligence},
volume = {350},
pages = {104453},
year = {2026},
noissn = {0004-3702},
nodoi = {https://doi.org/10.1016/j.artint.2025.104453},
nourl = {https://www.sciencedirect.com/science/article/pii/S0004370225001729},
author = {Andre Opris}
}

@inproceedings{MOEAsLevels,
author = {Dang, Duc-Cuong and Opris, Andre and Sudholt, Dirk},
title = {Level-Based Theorems for Runtime Analysis of Multi-objective Evolutionary Algorithms},
year = {2024},
publisher = {Springer-Verlag},
booktitle = {Parallel Problem Solving from Nature – PPSN XVIII: 18th International Conference},
series = {PPSN 2024},
pages = {246–263},
numpages = {18},
location = {Hagenberg, Austria}
}

@inproceedings{OprisMultimodal,
    author       = {Andre Opris},
    title        = {A First Runtime Analysis of {NSGA-III} on a Many-Objective Multimodal Problem:                    Provable Exponential Speedup via Stochastic Population Update},
    booktitle    = {Proceedings of the Thirty-Fourth International Joint Conference on
                    Artificial Intelligence},
    pages        = {8903--8911},
    publisher    = {ijcai.org},
    year         = {2025},
    series       = {IJCAI 2023},
    nourl          = {https://doi.org/10.24963/ijcai.2023/612},
    nodoi          = {10.24963/IJCAI.2023/612},
}

@inproceedings{Opris26PopDyn,
      title={Towards a Rigorous Understanding of the Population Dynamics of the {NSGA-III}: Tight Runtime Bounds}, 
      author={Andre Opris},
      year={2026},
      pages={37125-37133},
      booktitle={Proceedings of the AAAI Conference on Artificial Intelligence},
      series = {AAAI 2026},
      publisher={{AAAI} Press},
      noaddress = {Singapore}
}

@inproceedings{doerr2025tightruntimeGSEMO,
  author    = {Benjamin Doerr and Martin S. Krejca and Andre Opris},
  title     = {Tight Runtime Guarantees From Understanding the Population Dynamics of the {GSEMO} Multi-Objective Evolutionary Algorithm},
  booktitle = {Proceedings of the Thirty-Fourth International Joint Conference on Artificial Intelligence},
  series = {IJCAI 2025},
  pages     = {8876--8884},
  publisher = {ijcai.org},
  noaddress   = {Montreal, Canada},
  year      = {2025},
  nourl     = {https://doi.org/10.24963/ijcai.2025/987}
}

@inproceedings{Zheng_Doerr_2024,
author={Zheng, Weijie and Doerr, Benjamin},
title={Runtime Analysis of the {SMS-EMOA} for Many-Objective Optimization}, 
novolume={38}, 
nourl={https://ojs.aaai.org/index.php/AAAI/article/view/30077}, 
nodoi={10.1609/aaai.v38i18.30077}, 
nonumber={18}, 
booktitle={Proceedings of the AAAI Conference on Artificial Intelligence}, 
series = {AAAI 2024},
publisher={{AAAI} Press},
year={2024}, 
pages={20874-20882}}

@inproceedings{MOEASubset,
author = {Deng, Renzhong and Zheng, Weijie and Li, Mingfeng and Liu, Jie and Doerr, Benjamin},
title = {Runtime Analysis for State-of-the-Art Multi-objective Evolutionary Algorithms on the Subset Selection Problem},
noisbn = {978-3-031-70070-5},
publisher = {Springer},
booktitle = {Proceedings of the International Conference on Parallel Problem Solving from Nature},
series = {PPSN XVIII},
year = {2024},
noaddress = {Berlin, Heidelberg},
nourl = {https://doi.org/10.1007/978-3-031-70071-2_17},
nodoi = {10.1007/978-3-031-70071-2_17},
pages = {264–279},
nolocation = {Hagenberg, Austria}
}

@article{TutEmMOEA2018,
author = {Emmerich, Michael T. M. and Deutz, André H.},
title = {A tutorial on multiobjective optimization: fundamentals and evolutionary methods},
year = {2018},
journal={Natural Computing},
noisbn = {978-3-031-70070-5},
publisher = {Springer},
volume = {17},
issue = {3},
nourl = {https://doi.org/10.1007/978-3-031-70071-2_17},
nodoi = {10.1007/978-3-031-70071-2_17},
pages = {585–609}
}

@inproceedings{RenSPEA2,
  author    = {Shengjie Ren and Chao Bian and Miqing Li and Chao Qian},
  title     = {A first running time analysis of the strength Pareto evolutionary algorithm 2 ({SPEA2})},
  booktitle = {Proceedings of the International Conference on Parallel Problem Solving from Nature},
  series    = {PPSN XVIII},
  pages     = {295–312},
  publisher = {Springer},
  year      = {2024}
}

@ARTICLE{Jansen2005c,
  author    = {Thomas Jansen and Ingo Wegener},
  title     = {Real Royal Road Functions---Where Crossover Provably is Essential},
  journal   = {Discrete Applied Mathematics},
  year      = {2005},
  volume    = {149},
  pages     = {111-125},
  language  = {english},
  owner     = {sudholt},
  timestamp = {2008.04.09}
}

@InProceedings{NSGAIIINEFF2003,
author="Khare, V.
and Yao, X.
and Deb, K.",
title="Performance Scaling of Multi-objective Evolutionary Algorithms",
booktitle="Evolutionary Multi-Criterion Optimization",
year="2003",
publisher="Springer",
pages="376--390",
noisbn="978-3-540-36970-7"
}

@ARTICLE{NSGAIIINEFF2007,
  author={Purshouse, Robin C. and Fleming, Peter J.},
  journal={IEEE Transactions on Evolutionary Computation}, 
  title={On the Evolutionary Optimization of Many Conflicting Objectives}, 
  year={2007},
  volume={11},
  number={6},
  pages={770-784},
  nodoi={10.1109/TEVC.2007.910138}}

@Inbook{Stewart2008,
author="Stewart, Theodor
and Bandte, Oliver
and Braun, Heinrich
and Chakraborti, Nirupam
and Ehrgott, Matthias
and G{\"o}belt, Mathias
and Jin, Yaochu
and Nakayama, Hirotaka
and Poles, Silvia
and Di Stefano, Danilo",
title="Real-World Applications of Multiobjective Optimization",
bookTitle="Multiobjective Optimization: Interactive and Evolutionary Approaches",
year="2008",
publisher="Springer",
noaddress="Berlin, Heidelberg",
pages="285--327",
noisbn="978-3-540-88908-3",
nodoi="10.1007/978-3-540-88908-3_11",
nourl="https://doi.org/10.1007/978-3-540-88908-3_11"
}

@article{CHAUDHARI20221509,
title = {Comparison of {NSGA-III} with {NSGA-II} for Multi Objective Optimization of Adiabatic Styrene Reactor},
journal = {Materials Today: Proceedings},
volume = {57},
pages = {1509-1514},
year = {2022},
author = {Pranava Chaudhari and Amit K. Thakur and Rahul Kumar and Nilanjana Banerjee and Amit Kumar}
}

@article{MULTIENGINEERING,
author = {Sharma, Shubhkirti and Chahar, Vijay},
year = {2022},
month = {07},
pages = {5605–5633},
title = {A Comprehensive Review on Multi-objective Optimization Techniques: Past, Present and Future},
volume = {29},
journal = {Archives of Computational Methods in Engineering},
nodoi = {10.1007/s11831-022-09778-9}
}

@article{LUUKKONEN2023102537,
title = {Artificial Intelligence in Multi-Objective Drug Design},
journal = {Current Opinion in Structural Biology},
volume = {79},
pages = {102537},
year = {2023},
noissn = {0959-440X},
nodoi = {https://doi.org/10.1016/j.sbi.2023.102537},
nourl = {https://www.sciencedirect.com/science/article/pii/S0959440X23000118},
author = {Sohvi Luukkonen and Helle W. {van den Maagdenberg} and Michael T.M. Emmerich and Gerard J.P. {van Westen}}
}

@article{GU2022117738,
title = {An Improved {NSGA-III} Algorithm Based on Distance Dominance Relation for Many-Objective Optimization},
journal = {Expert Systems with Applications},
volume = {207},
pages = {117738},
year = {2022},
noissn = {0957-4174},
author = {Qinghua Gu and Qingsong Xu and Xuexian Li},
}

@article{Laumanns2004,
  author    = {Marco Laumanns and Lothar Thiele and Eckart Zitzler},
  title     = {Running Time Analysis of Multiobjective Evolutionary Algorithms
               on Pseudo-{B}oolean Functions},
  journal   = {{IEEE} Transactions on Evolutionary Computation},
  volume    = {8},
  nonumber    = {2},
  pages     = {170-182},
  year      = {2004}
}

@article{DebJain2014,
  author={Deb, Kalyanmoy and Jain, Himanshu},
  journal={IEEE Transactions on Evolutionary Computation}, 
  title={An Evolutionary Many-Objective Optimization Algorithm Using Reference-Point-Based Nondominated Sorting Approach, Part {I}: Solving Problems With Box Constraints}, 
  year={2014},
  volume={18},
  number={4},
  pages={577-601},
  nodoi={10.1109/TEVC.2013.2281535}}

@InProceedings{Blank2019,
author="Blank, Julian
and Deb, Kalyanmoy
and Roy, Proteek Chandan",
title="Investigating the Normalization Procedure of {NSGA-III}",
booktitle="Evolutionary Multi-Criterion Optimization",
year="2019",
publisher="Springer",
noaddress="Cham",
pages="229--240"
}

@inproceedings{DaOp2023,
  author       = {Duc-Cuong Dang and
                  Andre Opris and
                  Bahare Salehi and
                  Dirk Sudholt},
  noeditor       = {Sara Silva and
                  Lu{\'{\i}}s Paquete},
  title        = {Analysing the Robustness of {NSGA-II} under Noise},
  booktitle    = {Proceedings of the Genetic and Evolutionary Computation Conference},
  series = {GECCO 2023},
  pages        = {642--651},
  publisher    = {{ACM} Press},
  year         = {2023},
  nourl          = {https://doi.org/10.1145/3583131.3590421},
  nodoi          = {10.1145/3583131.3590421}
}

@inproceedings{Zheng2022,
  author    = {Weijie Zheng and
               Benjamin Doerr},
  noeditor    = {Jonathan E. Fieldsend and
               Markus Wagner},
  title     = {Better Approximation Guarantees for the {NSGA-II} by Using the Current
               Crowding Distance},
  booktitle = {Proceedings of the Genetic and Evolutionary Computation Conference ({GECCO}'22)},
  pages     = {611--619},
  publisher = {{ACM} Press},
  year      = {2022}
}

@inproceedings{Qu2022PPSN,
  author    = {Benjamin Doerr and Zhongdi Qu},
  title     = {A First Runtime Analysis of the {NSGA-II} on a Multimodal Problem},
  booktitle = {Proceedings of the International Conference on Parallel Problem Solving from Nature},
  series    = {PPSN XVII},
  novolume    = {13399},
  pages     = {399--412},
  publisher = {Springer},
  year      = {2022}
}

@article{NSGAIIIpractice2022,
author = {Yannibelli, Virginia and Pacini, Elina and Monge, David and Mateos, Cristian and Rodriguez, Guillermo},
title = {A Comparative Analysis of {NSGA-II} and {NSGA-III} for Autoscaling Parameter Sweep Experiments in the Cloud},
journal = {Scientific Programming},
volume = {2020},
number = {1},
pages = {4653204},
nodoi = {https://doi.org/10.1155/2020/4653204},
nourl = {https://onlinelibrary.wiley.com/doi/abs/10.1155/2020/4653204},
year = {2020}
}

@book{Keller2017,
title = {Multi-Objective Optimization in Theory and Practice I: Classical Methods},
publisher = {Bentham Science Publishers},
year = {2017},
nodoi = {https://doi.org/10.1016/j.swevo.2023.101350},
nourl = {https://www.sciencedirect.com/science/article/pii/S2210650223001232},
author = {Andre A. Keller}
}

@article{Gunantara2018,
author = {Nyoman Gunantara},
editor = {Qingsong Ai},
title = {A review of multi-objective optimization: Methods and its applications},
journal = {Cogent Engineering},
volume = {5},
nonumber = {1},
pages = {1502242},
year = {2018},
publisher = {Cogent OA},
nodoi = {10.1080/23311916.2018.1502242},
noURL = {https://doi.org/10.1080/23311916.2018.1502242},
eprint = {https://doi.org/10.1080/23311916.2018.1502242}
}

@article{LIU2020,
title = {Multiobjective ResNet pruning by means of {EMOAs} for remote sensing scene classification},
journal = {Neurocomputing},
volume = {381},
pages = {298-305},
year = {2020},
noissn = {0925-2312},
nodoi = {https://doi.org/10.1016/j.neucom.2019.11.097},
nourl = {https://www.sciencedirect.com/science/article/pii/S0925231219317011},
author = {Xuning Liu and Yong Zhou and Jiaqi Zhao and Rui Yao and Bing Liu and Ding Ma and Yi Zheng},
}

@article{Handl2008,
title = "Investigations into the effect of multiobjectivization in protein structure prediction",
author = "Julia Handl and Lovell, {Simon C.} and Joshua Knowles",
year = "2008",
nodoi = "10.1007/978-3-540-87700-4_70",
volume = "5199",
pages = "702--711",
journal = "Lecture Notes in Computer Science (including subseries Lecture Notes in Artificial Intelligence and Lecture Notes in Bioinformatics)",
noissn = "0302-9743",
publisher = "Springer",
}

@article{MONTEIROEURO2023,
title = {A multi-objective optimization design to generate surrogate machine learning models in explainable artificial intelligence applications},
journal = {EURO Journal on Decision Processes},
volume = {11},
pages = {100040},
year = {2023},
noissn = {2193-9438},
nodoi = {https://doi.org/10.1016/j.ejdp.2023.100040},
nourl = {https://www.sciencedirect.com/science/article/pii/S2193943823000134},
author = {Wellington Rodrigo Monteiro and Gilberto Reynoso-Meza},
}

@incollection{ZHANG2021447,
title = {Chapter 16 - Artificial intelligence algorithm-based multi-objective optimization model of flexible flow shop smart scheduling},
booktitle = {Applications of Artificial Intelligence in Process Systems Engineering},
publisher = {Elsevier},
pages = {447-472},
year = {2021},
noisbn = {978-0-12-821092-5},
nodoi = {https://doi.org/10.1016/B978-0-12-821092-5.00008-5},
nourl = {https://www.sciencedirect.com/science/article/pii/B9780128210925000085},
author = {Huanhuan Zhang and Jigeng Li and Mengna Hong and Yi Man},
}

@article{Swarm2023,
title = {Challenging test problems for multi- and many-objective optimization},
journal = {Swarm and Evolutionary Computation},
volume = {81},
pages = {101350},
year = {2023},
noissn = {2210-6502},
nodoi = {https://doi.org/10.1016/j.swevo.2023.101350},
nourl = {https://www.sciencedirect.com/science/article/pii/S2210650223001232},
author = {Saúl Zapotecas-Martínez and Carlos A. {Coello Coello} and Hernán E. Aguirre and Kiyoshi Tanaka}
}

@article{Deb2002,
  author    = {Kalyanmoy Deb and
               Amrit Pratap and
               Sameer Agarwal and
               T. Meyarivan},
  title     = {A Fast and Elitist Multiobjective Genetic Algorithm: {NSGA-II}},
  journal   = {{IEEE} Transactions on Evolutionary Computation},
  volume    = {6},
  nonumber    = {2},
  pages     = {182-197},
  year      = {2002}
}

@inproceedings{Opris2025,
  title={A Many Objective Problem Where Crossover is Provably Indispensable},
  author={Andre Opris},
  year={2025},
  BOOKTITLE = {Proceedings of the AAAI Conference on
Artificial Intelligence},
  series = {AAAI 2025},
  publisher = {{AAAI} Press},
  pages = {27108-27116},
  nourl = {https://doi.org/10.1609/aaai.v39i25.34918},
}

@inproceedings{Krejca2025b,
  author    = {Benjamin Doerr and Tudor Ivan and Martin S. Krejca},
  title     = {Speeding Up the {NSGA-II} With a Simple Tie-Breaking Rule},
  booktitle = {Proceedings of the {AAAI} Conference on Artificial Intelligence},
  series = {AAAI~2025},
  pages     = {26964-26972},
  publisher = {{AAAI} Press},
  year      = {2025},
  nodoi       = {10.1109/CEC.2013.6557784}
}

@inproceedings{Badkobeh2015,
  author    = {Golnaz Badkobeh and
               Per Kristian Lehre and
               Dirk Sudholt},
  noeditor    = {Jun He and
               Thomas Jansen and
               Gabriela Ochoa and
               Christine Zarges},
  title     = {Black-box Complexity of Parallel Search with Distributed Populations},
  booktitle = {Proceedings of the Foundations of Genetic Algorithms},
  series = {FOGA 2015},
  pages     = {3-15},
  publisher = {{ACM} Press},
  year      = {2015}
}

@inproceedings{ZhengLuiDoerrAAAI22,
  author    = {Weijie Zheng and
               Yufei Liu and
               Benjamin Doerr},
  title     = {A First Mathematical Runtime Analysis of the Non-dominated Sorting
               Genetic Algorithm {II} {(NSGA-II)}},
  booktitle = {Proceedings of the {AAAI} Conference on Artificial Intelligence},
  series = {{AAAI}~2022},
  pages     = {10408-10416},
  publisher = {{AAAI} Press},
  year      = {2022},
  timestamp = {Tue, 12 Jul 2022 14:14:21 +0200}
}

@Article{NSGAIIILocalOptima,
AUTHOR = {Geng, Huantong and Zhou, Zhengli and Shen, Junye and Song, Feifei},
TITLE = {A Dual-Population-Based {NSGA-III} for Constrained Many-Objective Optimization},
JOURNAL = {Entropy},
VOLUME = {25},
YEAR = {2023},
noNUMBER = {1},
ARTICLE-NUMBER = {13},
NOURL = {https://www.mdpi.com/1099-4300/25/1/13},
NOPubMedID = {36673153},
NOISSN = {1099-4300},
NODOI = {10.3390/e25010013}
}

@article{DANG2024104098,
title = {Crossover can Guarantee Exponential Speed-ups in Evolutionary Multi-Objective Optimisation},
journal = {Artificial Intelligence},
volume = {330},
pages = {104098},
year = {2024},
noissn = {0004-3702},
nodoi = {https://doi.org/10.1016/j.artint.2024.104098},
nourl = {https://www.sciencedirect.com/science/article/pii/S0004370224000341},
author = {Duc-Cuong Dang and Andre Opris and Dirk Sudholt},
}

@ARTICLE{Zheng2023Inefficiency,
  author={Zheng, Weijie and Doerr, Benjamin},
  journal={{IEEE} Transactions on Evolutionary Computation}, 
  title={Runtime Analysis for the {NSGA-II}: Proving, Quantifying, and Explaining the Inefficiency for Many Objectives}, 
  year={2024},
  volume={28},
  number={5},
  pages={1442-1454},
  nodoi={10.1109/TEVC.2023.3320278}}

@article{DOERR2019115,
title = {Analyzing Randomized Search Heuristics Via Stochastic Domination},
journal = {Theoretical Computer Science},
volume = {773},
pages = {115-137},
year = {2019},
noissn = {0304-3975},
nodoi = {https://doi.org/10.1016/j.tcs.2018.09.024},
nourl = {https://www.sciencedirect.com/science/article/pii/S0304397518305954},
author = {Benjamin Doerr}
}

@inproceedings{DangEfficient2024,
author = {Dang, Duc-Cuong and Opris, Andre and Sudholt, Dirk},
title = {Illustrating the Efficiency of Popular Evolutionary Multi-Objective Algorithms Using Runtime Analysis},
noisbn = {9798400704949},
publisher = {ACM Press},
noaddress = {New York, NY, USA},
nourl = {https://doi.org/10.1145/3638529.3654177},
nodoi = {10.1145/3638529.3654177},
booktitle = {Proceedings of the Genetic and Evolutionary Computation Conference},
pages = {484–492},
nonumpages = {9},
nolocation = {Melbourne, VIC, Australia},
series = {GECCO 2024},
year = {2024}
}

@inproceedings{DoerrQ23b,  OPTnote={no JV},
  author       = {Benjamin Doerr and Zhongdi Qu},
  OPTeditor       = {Brian Williams and Yiling Chen and Jennifer Neville},
  title        = {Runtime Analysis for the {NSGA-II:} Provable Speed-ups from Crossover},
  booktitle = {Conference on Artificial Intelligence},
  series = {{AAAI} 2023},
  pages        = {12399--12407},
  publisher    = {{AAAI} Press},
  year         = {2023}
  }

@inproceedings{DoerrQu2023a,
  author    = {Benjamin Doerr and
               Zhongdi Qu},
  title     = {From Understanding the Population Dynamics of the {NSGA-II} to the
               First Proven Lower Bounds},
  booktitle = {Proceedings of the AAAI Conference on Artificial Intelligence},
  series = {AAAI 2023},
  publisher = {{AAAI} Press},
  pages     = {12408-12416},
  year      = {2023}
}

@inproceedings{UpBian,
author = {Bian, Chao and Zhou, Yawen and Li, Miqing and Qian, Chao},
title = {Stochastic population update can provably be helpful in multi-objective evolutionary algorithms},
year = {2023},
noisbn = {978-1-956792-03-4},
nourl = {https://doi.org/10.24963/ijcai.2023/612},
nodoi = {10.24963/ijcai.2023/612},
booktitle = {Proceedings of the Thirty-Second International Joint Conference on Artificial Intelligence},
nolocation = {Macao, P.R.China},
series = {IJCAI 2023},
publisher = {ijcai.org},
pages     = {5513--5521}
}

@inproceedings{WiethegerD23,
  author       = {Simon Wietheger and
                  Benjamin Doerr},
  title        = {A Mathematical Runtime Analysis of the Non-dominated Sorting Genetic
                  Algorithm {III} {(NSGA-III)}},
  booktitle    = {Proceedings of the International Joint Conference on Artificial Intelligence},
  series = {IJCAI 2023},
  pages        = {5657--5665},
  publisher    = {ijcai.org},
  year         = {2023},
  nourl          = {https://doi.org/10.24963/ijcai.2023/628},
  nodoi          = {10.24963/IJCAI.2023/628}
}

@inproceedings{DoerrNearTight,
author = {Wietheger, Simon and Doerr, Benjamin},
title = {Near-Tight Runtime Guarantees for Many-Objective Evolutionary Algorithms},
year = {2024},
pages = {153--168},
noisbn = {978-3-031-70084-2},
nourl = {https://doi.org/10.1007/978-3-031-70085-9_10},
nodoi = {10.1007/978-3-031-70085-9_10},
booktitle = {Proceedings of the International Conference on Parallel Problem Solving from Nature},
series = {PPSN XVIII}
}

@inproceedings{Cerf2023,
  title     = {The First Proven Performance Guarantees for the Non-Dominated Sorting Genetic Algorithm {II} ({NSGA-II}) on a Combinatorial Optimization Problem},
  author    = {Cerf, Sacha and Doerr, Benjamin and Hebras, Benjamin and Kahane, Yakob and Wietheger, Simon},
  booktitle = {Proceedings of the Thirty-Second International Joint Conference on Artificial Intelligence},
  series = {IJCAI 2023},
  noeditor    = {Edith Elkind},
  pages     = {5522--5530},
  year      = {2023},
  nomonth     = {8},
  nonote      = {Main Track},
  nodoi       = {10.24963/ijcai.2023/613},
  nourl       = {https://doi.org/10.24963/ijcai.2023/613},
  publisher = {ijcai.org}
}

@inproceedings{OprisNSGAIII,
author = {Opris, Andre and Dang, Duc-Cuong and Neumann, Frank and Sudholt, Dirk},
title = {Runtime Analyses of {NSGA-III} on Many-Objective Problems},
year = {2024},
publisher = {{ACM} Press},
nourl = {https://doi.org/10.1145/3638529.3654218},
nodoi = {10.1145/3638529.3654218},
booktitle = {Proceedings of the Genetic and Evolutionary Computation Conference},
series = {GECCO 2024},
pages = {1596–1604}
}

@article{Doerr2023,
  nodoi = {10.1109/tevc.2023.3250552},
  year = {2023},
  publisher = {Institute of Electrical and Electronics Engineers ({IEEE})},
  pages = {1288--1297},
  author = {Benjamin Doerr and Zhongdi Qu},
  title = {A First Runtime Analysis of the {NSGA}-{II} on a Multimodal Problem},
  journal = {{IEEE} Transactions on Evolutionary Computation}
}

@inproceedings{Lessons,
author = {Dang, Duc-Cuong and Opris, Andre and Sudholt, Dirk},
title = {Why Dominance Is Not Enough: Lessons from Practical Evolutionary Multi-Objective Algorithms},
year = {2025},
publisher = {ACM Press},
noaddress = {New York, NY, USA},
nourl = {https://doi.org/10.1145/3712256.3726414},
nodoi = {10.1145/3712256.3726414},
booktitle = {Proceedings of the Genetic and Evolutionary Computation Conference},
pages = {1604–1612},
nolocation = {NH Malaga Hotel, Malaga, Spain},
series = {GECCO 2025}
}

@inproceedings{doerr2026improvedruntimeguaranteesspea2,
title={Improved Runtime Guarantees for the {SPEA2} Multi-Objective Optimizer}, 
author={Benjamin Doerr and Martin S. Krejca and Milan Stanković},
year={2026},
pages={36855-36863},
booktitle={Proceedings of the AAAI Conference on Artificial Intelligence},
series = {AAAI 2026},
publisher={{AAAI} Press},
noaddress = {Singapore}
}

@article{InefficiencyLOTZ,
  author={Doerr, Benjamin and Korkotashvili, Dimitri and Krejca, Martin S.},
  journal={IEEE Transactions on Evolutionary Computation}, 
  title={Difficulties of the {NSGA-II} With the Many-Objective {L}eading{O}nes Problem}, 
  year={2025},
  volume={},
  number={},
  pages={pp.228},
  nodoi={10.1109/TEVC.2025.3573784}}

@inproceedings{ApproximationSPEA2,
author = {Alghouass, Yasser and Doerr, Benjamin and Krejca, Martin S. and Lagmah, Mohammed},
title = {Proven approximation guarantees in multi-objective optimization: {SPEA2} beats {NSGA-II}},
pages = {8833-8841},
year = {2025},
noisbn = {978-1-956792-06-5},
nourl = {https://doi.org/10.24963/ijcai.2025/982},
nodoi = {10.24963/ijcai.2025/982},
booktitle = {Proceedings of the Thirty-Fourth International Joint Conference on Artificial Intelligence},
articleno = {982},
numpages = {9},
location = {Montreal, Canada},
series = {IJCAI 2025},
publisher = {ijcai.org}
}

@inproceedings{ApproximationNSGAIII,
author = {Deng, Renzhong and Zheng, Weijie and Doerr, Benjamin},
title = {The first theoretical approximation guarantees for the non-dominated sorting genetic algorithm {III} ({NSGA-III})},
year = {2025},
noisbn = {978-1-956792-06-5},
nourl = {https://doi.org/10.24963/ijcai.2025/986},
nodoi = {10.24963/ijcai.2025/986},
booktitle = {Proceedings of the Thirty-Fourth International Joint Conference on Artificial Intelligence},
articleno = {986},
numpages = {9},
location = {Montreal, Canada},
series = {IJCAI 2025},
publisher = {ijcai.org}
}

@ARTICLE{6747993,
  author={Karafotias, Giorgos and Hoogendoorn, Mark and Eiben, A. E.},
  journal={IEEE Transactions on Evolutionary Computation}, 
  title={Parameter Control in Evolutionary Algorithms: Trends and Challenges}, 
  year={2015},
  volume={19},
  nonumber={2},
  pages={167-187},
  keywords={Sociology;Statistics;Tuning;Genetic algorithms;Algorithm design and analysis;Taxonomy;Evolutionary computation;Evolutionary algorithm;heuristic search;methodology;parameter control},
  dnooi={10.1109/TEVC.2014.2308294}}

@article{ALBERTIN2023113433,
title = {A novel efficient multi-objective optimization algorithm for expensive building simulation models},
journal = {Energy and Buildings},
volume = {297},
pages = {113433},
year = {2023},
noissn = {0378-7788},
nodoi = {https://doi.org/10.1016/j.enbuild.2023.113433},
nourl = {https://www.sciencedirect.com/science/article/pii/S0378778823006631},
author = {Riccardo Albertin and Alessandro Prada and Andrea Gasparella}
}

@article{ZHU2023119707,
title = {High-dimensional interactive adaptive {RVEA} for multi-objective optimization of polyester polymerization process},
journal = {Information Sciences},
volume = {650},
pages = {119707},
year = {2023},
noissn = {0020-0255},
nodoi = {https://doi.org/10.1016/j.ins.2023.119707},
nourl = {https://www.sciencedirect.com/science/article/pii/S0020025523012926},
author = {Xiuli Zhu and Chunli Jiang and Kuangrong Hao and Rui Wang}
}

@article{10.1145/3610536,
author = {Karl, Florian and Pielok, Tobias and Moosbauer, Julia and Pfisterer, Florian and Coors, Stefan and Binder, Martin and Schneider, Lennart and Thomas, Janek and Richter, Jakob and Lang, Michel and Garrido-Merch\'{a}n, Eduardo C. and Branke, Juergen and Bischl, Bernd},
title = {Multi-Objective Hyperparameter Optimization in Machine Learning—An Overview},
year = {2023},
issue_date = {December 2023},
publisher = {Association for Computing Machinery},
address = {New York, NY, USA},
volume = {3},
number = {4},
noissn = {2688-299X},
nourl = {https://doi.org/10.1145/3610536},
nodoi = {10.1145/3610536},
journal = {ACM Transactions of Evolutionary Learning and Optimization},
articleno = {16},
numpages = {50},
nokeywords = {Multi-objective hyperparameter optimization, neural architecture search, Bayesian optimization}
}

@ARTICLE{4492360,
  author={Jin, Yaochu and Sendhoff, Bernhard},
  journal={IEEE Transactions on Systems, Man, and Cybernetics, Part C (Applications and Reviews)}, 
  title={Pareto-Based Multiobjective Machine Learning: An Overview and Case Studies}, 
  year={2008},
  volume={38},
  number={3},
  pages={397-415},
  nodoi={10.1109/TSMCC.2008.919172}}

@book{DoerrN20,
  editor = {Benjamin Doerr and Frank Neumann},
  title = {Theory of Evolutionary Computation -- Recent Developments in Discrete Optimization},
  year = {2020},
  publisher = {Springer},
  OPTaddress = {Cham, Switzerland},
  OPTdoi = {https://doi.org/10.1007/978-3-030-29414-4},
  OPTnote = {was DoerrN19},
  OPTisbn = {978-3-030-29413-7},  
}

@article{ZhengD2023,
  author       = {Weijie Zheng and
                  Benjamin Doerr},
  title        = {Mathematical runtime analysis for the non-dominated sorting genetic
                  algorithm {II} {(NSGA-II)}},
  journal      = {Artificial Intelligence},
  volume       = {325},
  pages        = {104016},
  year         = {2023},
  nourl          = {https://doi.org/10.1016/j.artint.2023.104016},
  nodoi          = {10.1016/J.ARTINT.2023.104016}
}
\end{document}